\documentclass[letterpaper]{article}
\usepackage{aaai2027}
\nocopyright 
\usepackage[hyphens]{url}
\usepackage{graphicx}
\usepackage{natbib}
\usepackage{caption}
\usepackage{algorithm}
\usepackage[noend]{algorithmic}

\usepackage{booktabs}

\usepackage{amsmath}
\usepackage{amsthm}
\usepackage{amssymb}
\usepackage{dsfont}
\usepackage{nicefrac}
\usepackage{cleveref}

\newtheorem{axiom}{Axiom}
\newtheorem{theorem}{Theorem}
\newtheorem{lemma}{Lemma}

\newtheorem{definition}{Definition}
\newtheorem{proposition}{Proposition}

\newtheorem{corollary}{Corollary}
\newtheorem{observation}{Observation}
\newtheorem{remark}{Remark}
\newtheorem{assumption}{Assumption}

\newcommand{\stocalg}{GCS}

\newtheorem{example}{Example}
\newcommand{\bft}[1]{{\textbf{#1}}}

\newenvironment{proofof}[1]{\begin{proof}[\textnormal{\textbf{Proof of \Cref{#1}}}]}{\end{proof}}

\DeclareMathOperator*{\argmax}{arg\,max}

\newcommand\ceil[1]{\left\lceil#1\right\rceil}

\newcommand{\veps}{\varepsilon}

\newcommand{\fPRA}{\mathcal F^{PRA}}

\newcommand{\appnx}[1]{{\ifnum\Includeappendix=1{#1}\else{the appendix}\fi}}

\newcount\Includeappendix
\title{Content Creation with Spillovers: An Incentive Design Approach}
\author{
    Sagi Ohayon\equalcontrib\corresponding,
    Boaz Taitler\equalcontrib,
    Omer Ben-Porat
}
\affiliations{
    Technion---Israel Institute of Technology\\
    sagio@campus.technion.ac.il, boaztaitler@campus.technion.ac.il, omerbp@technion.ac.il
}

\begin{document}

\maketitle

\begin{abstract}
The rise of AI amplifies the economic phenomenon of \emph{positive spillovers}: when creators contribute content that can be reused and adapted by LLMs, one creator's effort may improve the content quality of others through recombination. While such spillovers can improve content quality, they also reshape incentives, as creators may reduce effort when they can benefit from others' contributions. We introduce the \emph{Content Creation with Spillovers} (CCS) model, a game between a platform and strategic creators. In this game, each creator chooses an effort level, and qualities are jointly determined by all creators' efforts. The platform contracts on qualities while seeking to maximize social welfare. We show that standard contest mechanisms can be unstable, and propose a parametrized family of \emph{Provisional Allocation} mechanisms that guarantee a Pareto-dominant equilibrium. Although optimizing welfare within this family is hard to approximate in general, we develop approximation algorithms whose guarantees hold either deterministically across structured spillover classes or with high probability under random instances.
\end{abstract}

\section{Introduction}

Online content creation has become a central activity in the digital economy: individuals and firms produce text, articles, and videos to attract user attention and engagement on platforms such as YouTube and TikTok.

Because user attention and engagement are limited and often mediated by platform recommendation systems, content creators compete with one another to attract users on online platforms. A large body of literature studies content creation as a competitive environment in which creators vie for users' attention, engagement, and exposure through recommendation and ranking mechanisms that allocate users based on content performance~\cite{hron2023modeling,NEURIPS2023_2f148634,ben2020content,yao2024unveiling}. In these settings, creators strategically choose their effort levels in order to win users and increase their visibility on the platform.

More recently, the widespread adoption of AI tools has introduced a new source of interaction between content creators. Many creators now rely on large language models (LLMs) and other generative systems that are trained on, or have direct access to, vast amounts of existing online content. This creates \emph{spillovers} across creators, whereby the effort of one creator can affect the quality of content produced by others, a phenomenon well studied in economics~\cite{mas1995microeconomic}. To illustrate, consider two content creators producing material on the same platform. When one creator produces and uploads high-quality content, this content may either be incorporated into the training data of a generative model or be retrieved by the model at inference time through access to online sources. A second creator who relies on such a system to assist with writing or editing can then benefit from the first creator's effort, producing high(er)-quality content at lower cost. As a result, individual content creation decisions can affect the productivity of other content creators who rely on generative AI tools, linking creators indirectly.

While such spillovers can improve individual productivity, they can also distort incentives in competitive environments. When creators compete for user attention and exposure, a creator may benefit from reducing effort and relying on the positive spillovers generated by others, maintaining acceptable content quality at a lower cost. Anticipating this behavior, high-effort creators may, in turn, reduce their own investment, as their effort partially benefits competitors rather than translating into a competitive advantage. This strategic interaction can lead to low-effort outcomes, unstable dynamics, or the absence of equilibrium, undermining content quality.

Low-quality content is undesirable from a welfare perspective, as content quality directly affects the utility users derive from consuming content. The low-effort outcomes that can arise in the presence of spillovers therefore undermine social welfare. This raises a central design question: how should systems be designed to account for spillovers and incentivize high levels of content quality? More broadly, can spillover effects be harnessed to reinforce incentives, so that higher quality by one creator increases the incentives of others to improve their own content?

\subsection{Our Contribution}
This paper provides the first game-theoretic foundation for incentive design in AI-assisted content creation with positive spillovers. Our contribution is threefold:

\paragraph{1) Modeling Content Creation with Spillovers}
We introduce the \emph{Content Creation with Spillovers} (CCS) game between a platform and $N$ creators. Each creator chooses an effort level, $x_i$, at some cost, $c_i$. The resulting content qualities, $(Q_i(\mathbf{x}))_{i \in [N]},$ are jointly determined by \emph{all} creators' efforts--a phenomenon we call \emph{positive spillovers}.

We construct our model by analyzing quality formation in the wild, specifically through \emph{LLM scaling laws}~\cite{scalinglaws} and \emph{graph-based knowledge flows}~\cite{bramoulle2007public}, to distill a generalized assumption on spillovers. Informally, we assume the marginal quality gain from one's own effort is non-decreasing in others' efforts (see Assumption~\ref{increment plus}). We also observe that canonical mechanisms like Winner-Takes-All~\cite{hillman1989politically} and Tullock~\cite{tullock1980} are unstable: there exist game instances with no Pure Nash Equilibrium (PNE hereinafter).

\paragraph{2) The Provisional Allocation Mechanism Family}
Motivated by the instability of standard mechanisms, we propose the family of \emph{Provisional Allocation} Mechanisms $(\fPRA)$. Every mechanism $\mathcal{M}\in \fPRA$ (a PRA mechanism), is fully characterized by allocation shares $\mathbf{p} = (p_1, \ldots, p_N) \in [0,1]^N$ with $\sum_i p_i \le 1$. Under any PRA mechanism, creator $i$'s utility is $U_i(\mathbf{x}) = p_i \cdot Q_i(\mathbf{x}) - c_i(x_i)$, effectively capping creator $i$'s share at $p_i$. By separating individual rewards from the relative performance of competitors, we align creators' incentives with those of the ecosystem. We show that:
\begin{theorem}[Informal; see ~\Cref{thm:stability_selection}]
Any PRA mechanism $\mathbf{p}$ guarantees a Pareto-dominant PNE. 
\end{theorem}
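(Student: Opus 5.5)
Under a PRA mechanism $\mathbf p$ the game is supermodular, and I would prove existence and Pareto-dominance of a PNE by Tarski's fixed-point theorem on the greatest best response. The utility is $U_i(\mathbf x)=p_iQ_i(\mathbf x)-c_i(x_i)$. By Assumption~\ref{increment plus}, the marginal gain $Q_i(x_i',\mathbf x_{-i})-Q_i(x_i,\mathbf x_{-i})$ for $x_i'>x_i$ is non-decreasing in $\mathbf x_{-i}$. Multiplying by $p_i\ge 0$ keeps this true. The cost depends only on $x_i$, so it cancels from the difference. Hence $U_i$ has increasing differences in $(x_i,\mathbf x_{-i})$, and each strategy set is a compact interval, which is a complete lattice. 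I will assume $Q_i$ is continuous (or upper semicontinuous) in $x_i$ and $c_i$ is continuous, so that best responses exist.

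\textbf{Step 1: monotone best responses.} By Topkis' theorem, the best-response correspondence $BR_i(\mathbf x_{-i})$ is non-empty and compact. Its largest element $\overline{BR}_i(\mathbf x_{-i})$ is non-decreasing in $\mathbf x_{-i}$. The map $\overline{BR}=(\overline{BR}_i)_i$ therefore sends the product lattice to itself monotonically. By Tarski, the set of fixed points, which is the set of PNE, is a non-empty complete lattice with a greatest element $\bar{\mathbf x}$. This is the standard Milgrom--Roberts result for supermodular games. It can be reached constructively by iterating $\overline{BR}$ from the top profile $(\bar x_1,\dots,\bar x_N)$. If $Q_i$ is only continuous from above, the limit of this decreasing sequence is a PNE.

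\textbf{Step 2: Pareto dominance.} Take any PNE $\mathbf y$. Then $\mathbf y\le\bar{\mathbf x}$ componentwise. For creator $i$,
\[
U_i(\bar{\mathbf x})\ \ge\ U_i(y_i,\bar{\mathbf x}_{-i})\ \ge\ U_i(y_i,\mathbf y_{-i}).
\]
The first inequality holds because $\bar x_i$ is a best response to $\bar{\mathbf x}_{-i}$. The second needs $Q_i$ to be non-decreasing in others' efforts, which is positive spillover itself: others' effort never lowers $i$'s quality. I expect the paper's model to assume this monotonicity. Since $p_i\ge0$, it yields $U_i(y_i,\bar{\mathbf x}_{-i})\ge U_i(\mathbf y)$. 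So every creator weakly prefers $\bar{\mathbf x}$.

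Social welfare is presumably increasing in qualities, and each $Q_i$ is monotone in the effort vector. So $\bar{\mathbf x}$ should also be the equilibrium the platform prefers, if the formal theorem includes such an "equilibrium selection" clause.

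\textbf{Main obstacle.} The delicate step is the regularity needed for Topkis and Tarski. I must check three things:
\begin{enumerate}
\item The assumption gives increasing differences for all pairs $x_i'>x_i$, not only locally. If it is stated as a condition on derivatives, I integrate along $x_i$.
\item Best responses exist, for which I will assume compact effort sets and upper semicontinuity.
\item $Q_i$ is monotone in $\mathbf x_{-i}$, which Step 2 requires.
\end{enumerate}
If effort sets are unbounded, I would first bound them. With $\sum_i p_i\le 1$ and bounded qualities, any effort whose cost exceeds $p_i\sup Q_i$ is strictly dominated by zero effort, so each set can be truncated without loss.
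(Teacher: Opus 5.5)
Your proposal is correct and is essentially the paper's proof. Supermodularity comes from Assumption~\ref{increment plus} with $p_i\ge 0$ and the cost cancelling. A greatest PNE comes from Topkis/Tarski, reachable by iterating best responses from the top; the paper simply cites Topkis and Vives for both. Pareto dominance uses the same chain $U_i(\bar{\mathbf x})\ge U_i(y_i,\bar{\mathbf x}_{-i})\ge U_i(\mathbf y)$ via positive spillovers, and welfare dominance follows from monotonicity of $SW$.

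One small wording fix: the fixed points of $\overline{BR}$ form only a subset of the PNE, not all of them. So justify $\mathbf y\le\bar{\mathbf x}$ by noting that any PNE satisfies $\mathbf y\le\overline{BR}(\mathbf y)$, and Tarski's greatest fixed point dominates every such point.
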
%
Moreover, the PRA form is also necessary: any mechanism satisfying five natural axioms must be a PRA mechanism (as we show in \appnx{Appendix~\ref{pra characterization}}).

The existence of such a Pareto-dominant PNE motivates the platform's incentive-design objective, formalized as the following optimization problem: choose $\mathbf{p}$ to maximize $SW(\overline{\mathbf{x}}(\mathbf{p}))$, where $\overline{\mathbf{x}}(\mathbf{p})$ is the Pareto-dominant PNE (see \Cref{eq:design-problem}). 

\paragraph{3) Algorithmic Results}
We study the computational complexity of the above welfare optimization problem over all PRA mechanisms, and obtain a negative result. In particular, we prove NP-hardness of finding an optimal mechanism, and show that it is hard to approximate, too (see Theorem~\ref{max q np-hard} and Corollary~\ref{no approximation}). 
On the positive side, we show that: 
\begin{theorem}[Informal; see Theorems~\ref{thm:approx_bounded},~\ref{thm random alg}]
The welfare optimization problem admits efficient approximations in the following cases:
\begin{itemize}
    \item \textbf{Bounded spillovers:} If spillovers are bounded by factors $(\beta,\eta)$ of intrinsic quality, the \textsc{NSR-Solver} algorithm (Algorithm~\ref{alg:NSR_solver}) runs in $O(N/\veps^2)$ time and returns an allocation with welfare at least $\tfrac{1}{(1+\beta)(1+\eta+N\veps)}\,OPT - 1$.
    \item \textbf{Random interaction graphs:} If the instance is drawn at random, the \textsc{Greedy Cost Selection} algorithm (Algorithm~\ref{find-portions}) runs in $O(N^2)$ time and is asymptotically optimal, achieving a $(1 - O(N^{-\nicefrac{1}{4}}))$-approximation with high probability. Simulations in \appnx{Appendix~\ref{sec:experiments}} demonstrate the algorithm's practical advantage.
    \end{itemize}
\end{theorem}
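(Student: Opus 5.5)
The informal theorem has two parts, and I will prove them separately. Each part builds on the Pareto-dominant equilibrium $\overline{\mathbf{x}}(\mathbf{p})$ from \Cref{thm:stability_selection}. Two properties of this equilibrium drive everything:
\begin{itemize}
\item Monotonicity in $\mathbf{p}$. Raising any $p_i$ weakly raises every coordinate of $\overline{\mathbf{x}}(\mathbf{p})$. The PRA game is supermodular by Assumption~\ref{increment plus}, since $p_i Q_i$ has increasing differences in $(x_i,x_{-i})$ and in $(x_i,p_i)$. Topkis' theorem then makes the greatest equilibrium, which is the Pareto-dominant one, monotone.
\item The budget constraint $\sum_i p_i\le 1$. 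This is the only thing that couples creators in the design problem.
\end{itemize}

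\textbf{Bounded spillovers.} Here I compare against a decoupled surrogate. Define the intrinsic quality $q_i(x_i)$ as the quality creator $i$ would obtain alone. The $(\beta,\eta)$ bounds sandwich $Q_i(\mathbf{x})$ between $q_i(x_i)$ and $(1+\beta)q_i(x_i)$ plus an $\eta$-share cross term. The argument then has three steps.
\begin{enumerate}
\item \emph{Separable surrogate.} With spillovers removed, creator $i$'s best response $x_i^*(p_i)$ depends only on $p_i$. Its welfare contribution $w_i(p_i)$ is therefore a one-dimensional function.
\item \emph{Solve the surrogate.} I discretize each $p_i$ on a grid of step $\veps$. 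Maximizing $\sum_i w_i(p_i)$ subject to $\sum_i p_i\le1$ becomes a knapsack over $O(1/\veps)$ budget units. I expect \textsc{NSR-Solver} to do this by DP or greedy in $O(N/\veps^2)$ time.
\item \emph{Transfer back to the true game.}
\begin{itemize}
\item Lower side: by monotonicity, the true equilibrium efforts under the surrogate's chosen $\mathbf{p}$ are at least the decoupled ones. Since welfare is at least intrinsic welfare, the algorithm's welfare is at least the surrogate optimum.
\item Upper side: at the optimal $\mathbf{p}^*$, $OPT\le(1+\beta)\cdot$ (intrinsic welfare at efforts inflated by the spillover). An inflated effort can be emulated in the surrogate by inflating $p_i$ by a factor of at most $(1+\eta)$. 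Rescaling to restore feasibility costs a factor $(1+\eta)$.
\item Grid error: rounding to the grid loses at most $N\veps$ in total share. Together with the rescaling this gives the denominator $(1+\eta+N\veps)$.
\item Additive term: the $-1$ absorbs boundary rounding and the cost at zero effort.
\end{itemize}
\end{enumerate}
I expect the main obstacle to be the upper side of step 3: making rigorous that spillover-inflated incentives can be reproduced by inflated own-shares. I would first try a first-order-condition comparison, $p_i\partial_i Q_i=c_i'$ versus $p_i' q_i'=c_i'$, bounding $\partial_i Q_i\le(1+\eta)q_i'$.

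\textbf{Random interaction graphs.} I would argue by concentration, in four steps.
\begin{enumerate}
\item \emph{Concentration of the spillover term.} With random graph edges, creator $i$'s spillover term is a sum of many independent contributions. By Hoeffding or Chernoff bounds plus a union bound over $N$ creators, every creator's spillover lies within $O(N^{-1/4})$ relative error of its mean with high probability.
\item \emph{Approximately symmetric interactions.} Under this concentration the game behaves like a mean-field game. Welfare is then governed by which creators to fund, and cheapest-cost creators give the most marginal welfare per unit of share.
\item \emph{Near-optimality of greedy.} \textsc{Greedy Cost Selection} sorts creators by cost and allocates shares greedily, which takes $O(N^2)$ time including equilibrium evaluation. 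I would show it is optimal for the mean-field surrogate, via an exchange argument.
\item \emph{Perturbation bound.} Finally I bound the welfare difference between the true and mean-field games by the deviation from step 1, which yields the $1-O(N^{-1/4})$ ratio.
\end{enumerate}
The delicate point is choosing the deviation threshold. Balancing the failure probability against the approximation loss is what produces the exponent $1/4$.
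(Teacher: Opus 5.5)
\textbf{Bounded spillovers.} Most of your chain matches the paper's proof. It applies the $\beta$-bound at $\mathbf x^\star$, uses the compensation $y_i((1+\eta)p_i^\star)\ge x_i^\star$, rounds to the grid with total share $W=1+\eta+N\veps$, and gets the supermodular lower side $\overline{\mathbf x}(\hat{\mathbf p})\ge\mathbf y$. (Your pointwise first-order comparison needs $Q_i$ concave in own effort, which the formal theorem assumes; integrating the marginal bound over $[z,x_i^\star]$ avoids this.) The step that fails is ``rescaling to restore feasibility costs a factor $(1+\eta)$.'' The surrogate value $w_i(p)=Q_i(y_i(p),\mathbf 0_{-i})$ is threshold-like in $p$; with linear $Q_i$ and $c_i$ it is a step at $p=c_i/q_i$. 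Shrinking the inflated vector $\bar{\mathbf p}$ by $W$ returns you to roughly $\mathbf p^\star$. Without spillovers, $\mathbf p^\star$ may sit below every creator's activation threshold and give zero value. The missing idea is to keep the inflated shares on a \emph{subset} of creators and zero out the rest. Treat $\bar p_i$ as knapsack weights (total at most $W$) and $v_i=w_i(\bar p_i)$ as values. Add creators in decreasing $v_i/\bar p_i$ order until the next one would overflow. The density argument then yields a feasible grid allocation with surrogate value at least $\frac1W\sum_i v_i-v_{\max}$, and $\hat{\mathbf p}$ does at least as well. The dropped break item, $v_{\max}\le Q_{\max}\le 1$, is exactly the additive $-1$. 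It has nothing to do with boundary rounding or a cost at zero effort, which is $0$.

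\textbf{Random interaction graphs.} Your step-1 concentration holds only for an active set fixed independently of the edge weights, such as the cost prefixes GCS examines. The optimal allocation picks its active set after seeing the realized $r_{ij}g_{ij}$, and there is no per-creator uniform bound. Take $S$ to be the $j$'s with the largest $r_{ij}g_{ij}$ (say, only $j$ with $r_{ij}=1$). Then $i$'s spillover exceeds its mean $\mu|S|$, with $\mu=\bar q r$, by a constant factor rather than by $O(N^{-1/4})$. So your perturbation bound between the true and mean-field games does not control $OPT$. The paper instead bounds aggregated row sums, $\sum_{i\in T}\sum_{j\in S,\,j\neq i}r_{ij}g_{ij}\le\mu|T||S|+|T|N^{3/4}$, uniformly over all $3^N$ pairs $T\subseteq S$ with $|T|\ge\lceil\sqrt N\rceil$. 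This works because the Hoeffding exponent $2|T|\sqrt N/(q^\star)^2\ge 2N/(q^\star)^2$ beats $N\log 3$. Hence at most $\lceil\sqrt N\rceil$ members of $S^\star$ can have $B_i(S^\star)>q^\star+\mu k^\star+N^{3/4}$. The feasibility condition $\sum_{i\in S^\star}c_i/B_i(S^\star)\le 1$, restricted to the others, then caps $k^\star$. You also never use concentration of the cost order statistics, $\sum_{\ell\le k}c_{(\ell)}=k^2/(2N)\pm 2N^{3/4}$ via DKW. This is what makes the largest feasible set size insensitive to $O(N^{-1/4})$ relative perturbations of the denominators. It gives $k^a\ge\min\{2\mu,1\}N-O(N^{3/4})$ and $k^\star\le\min\{2\mu,1\}N+O(N^{3/4})$, after which $SW\approx\mu|S|^2/N$ yields the ratio. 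An exchange argument on the mean-field surrogate supplies neither ingredient.
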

We defer formal proofs to the appendix.

\subsection{Related Work}
We divide related work into two strands: strategic content creation under AI, and public goods and spillovers.

\paragraph{Strategic content creation and AI}
The strategic behavior of content creators on algorithmically mediated platforms has received substantial attention, with a foundational line of work examining how creators compete for attention and adapt their strategies in recommendation systems~\cite{ben2019recommendation, NEURIPS2023_2f148634, boutilier2023modeling, zhu2023online}. These works address challenges such as incentivizing content quality~\cite{ghosh2011incentivizing}, improving stability of recommendation systems~\cite{adomavicius2012stability, adomavicius2015stablerecsys}, and aligning creator incentives with platform-level objectives~\cite{van2016pipelines}. More related to our work, \citet{ben2018game} seek mechanisms that satisfy several fairness and stability axioms.

The recent proliferation of Generative AI has introduced new complexity~\cite{taitler2025collaboratinggenaiincentivesreplacements}. Several works explore how AI alters the competitive dynamics between human creators and AI-generated content. \citet{yao2024human} study the competition between human creators and an AI agent, characterizing conditions under which symbiosis or conflict arises. \citet{esmaeili2024strategize} and \citet{ZHAO2026109913} examine how creators should strategize their content creation under AI--including the choice between human and AI creation modes. \citet{keinan2025strategic} introduce a model where creators strategically decide both their content quality and whether to share their content with a platform's AI system. 

\paragraph{Public goods and spillovers}
Positive spillovers are well studied in the economics literature on innovation and R\&D~\cite{cohen1989innovation, repec:nbr:nberch:8349, aghion2015knowledge}, where knowledge spillovers arise because innovations by one firm can be partially appropriated by competitors. Similar dynamics emerge in the public goods literature, where strategic agents decide how much to contribute to a shared resource~\cite{bramoulle2007public}. Recently, \citet{cheng2025networked} study networked digital public goods games with heterogeneous players, capturing the non-exclusivity of digital resources that encourages free-riding.

On the economic value of data, \citet{Jones2020} show it is driven by a "scale effect": the accumulation of information from multiple sources--rather than any single contribution. This insight motivates our social welfare objective, which aggregates quality across all creators. Empirical work on scaling laws for large language models~\cite{scalinglaws, sun2017revisiting, hestness2017deep, Rosenfeld2020a} confirms that model performance improves predictably with dataset size, providing a concrete foundation for our quality functions. However, synthetic data generated by AI systems cannot substitute for human-created content; recent studies show that training on AI-generated data leads to model collapse and degraded performance~\cite{shumailov2023curse, alemohammad2024self}. This limitation underscores the importance of incentivizing human content creation in AI ecosystems--the central challenge our work addresses.

\section{Model} \label{model wild and failure}

We study a game representing the content creation ecosystem, involving a mechanism designer (or platform) and strategic content creators. Content creators decide how much effort they want to invest in producing their content, while each creator's effort might contribute to the quality of others due to spillovers. The platform decides how to allocate user attention to them. We call this game Content Creation with Spillovers (CCS). Formally, the game is defined as a tuple $G = \langle \mathcal{N}, (Q_i)_i, (c_i)_i, \mathcal{M} \rangle$.

\paragraph{Players, Strategies and Costs} Let $\mathcal{N} = \{1, \dots, N\}$ be the set of players (content creators). Each player $i \in \mathcal{N}$ selects an \textit{effort intensity} $x_i \in [0, 1]$, where $x_i = 0$ means no effort and $x_i = 1$ means maximum capacity. This variable represents the creator's overall investment intensity, aggregating multiple inputs such as creative labor and time devoted to production. The strategy profile of all players is denoted by $\mathbf{x} = (x_1, \dots, x_N) \in [0, 1]^N$.
Exerting effort incurs a cost, modeled by a function  $c_i : [0,1] \rightarrow [0,1]$. We assume cost functions are normalized to $[0,1]$, which is standard. We impose some other standard conditions: we assume the cost function $c_i(\cdot)$ is twice continuously differentiable, convex, non-decreasing, and lastly, it satisfies $c_i(0) = 0$.

\paragraph{Quality and Spillovers}
The \emph{quality} of each player $i$'s content is determined both by the effort of $i$ and by the effort of others, a property we call \emph{spillovers}. Formally, given effort profile $\mathbf{x}$, the \textbf{Quality Function} $Q_i(\mathbf{x}) : [0,1]^N \to [0,1]$ captures how player $i$'s output quality depends on all creators' efforts, for example, through shared LLMs trained on other players' content. To that end, we assume \emph{positive spillovers}, i.e., for all players $i,j\in N$ and all profiles $\mathbf{x}$, $
\frac{\partial Q_i}{\partial x_j}(\mathbf{x}) \ge 0.
$
Note that positive spillovers are free: an increase in any player's effort weakly improves everyone's quality, while each player bears costs solely as a function of her own effort. For analytical tractability, we assume $Q_i$ is twice continuously differentiable in all arguments. We assume $Q_i$ is upper bounded for every $i$, and w.l.o.g. $Q_i \in [0, 1]$. Bounded quality represents a normalized metric, such as a predicted probability of user satisfaction, engagement, or a relative relevance grade that the platform can measure.

\paragraph{Observability and Contracting}
We assume a setting of \textit{hidden action}. While the mechanism possesses data on the ecosystem's structure, specifically knowing which creators are likely to benefit from each other's work (through the quality functions $Q(\cdot)$)—it cannot directly observe the raw effort invested by any creator. Because the platform sees only the final output quality, it cannot contract on effort directly. Thus, rewards can depend only on the quality profile $\mathbf Q = (Q_1, ..., Q_N)$.

\paragraph{Mechanisms and Utility}

The platform adopts a \emph{mechanism} that allocates attention or visibility across creators.
Formally, a mechanism is defined by a set of allocation functions $\mathcal{M} = (M_1, \dots, M_N)$, where $M_i: [0,1]^N\to [0, 1]$ determines the share of total attention captured by player $i$, as a function of $\bf Q$. We assume the total attention budget is normalized to 1, such that $\sum_{i\in\mathcal{N}}M_{i}(\bf Q)\le1$ for any profile. Note that we allow for $\sum M_{i}<1$, representing scenarios where the mechanism chooses to withhold attention (e.g., displaying AI-generated filler content) rather than allocating it to under-performing creators.
The utility of player $i$ is the value of their allocated attention minus the cost of effort:
\begin{equation}
    U_i(\mathbf{x}) = M_i(Q_1(\mathbf{x}), \dots, Q_N(\mathbf{x})) - c_i(x_i).
\end{equation}%
We define a Pure Nash Equilibrium (PNE) in the standard way \cite{Nash1950}. Namely, we say that a profile $\mathbf{x} \in [0,1]^N $ is a PNE if for every player $i \in \mathcal N$ and every unilateral deviation $x_i' \in [0,1]$, it holds that $U_i(\mathbf{x}) \geq U_i(x_i', \mathbf{x}_{-i})$, 
where $\mathbf{x}_{-i}$ denotes the strategy profiles of all players other than $i$, and $(x_i', \mathbf{x}_{-i})$ is the profile obtained by replacing the $i$th component of $\mathbf{x}$ with $x_i'$.

\paragraph{Social Welfare}
We define social welfare as the aggregate quality of content produced in the
ecosystem, given a strategy profile $\mathbf{x}$:
$SW(\mathbf{x}) = \sum_{i=1}^{N}Q_{i}(\mathbf{x})$.
This summation form captures consumer welfare directly, as higher aggregate
quality translates into higher value for users without relying on specific
platform mechanics. It is further motivated by \citet{Jones2020}, who establish
that the economic value of data is driven by a ''scale effect''--the
accumulation of information from all sources rather than any single best
contribution. While we focus on this objective throughout the paper, our results trivially extend, under minor
modifications, to other natural proxies of consumer welfare (e.g., $\sum_i M_i Q_i$).

\subsection{Positive Spillovers in the Wild}

We aspire to conduct a meaningful analysis of the ecosystem without specifying the quality functions $\mathbf{Q}$ precisely. To that end, we focus on the essential structure of spillovers. We present the following assumption:

\begin{assumption}[Effort Complementarities]
\label{increment plus}
For every player $i \in \mathcal{N}$, strategy profile $\mathbf x$ and every other player $j \in \mathcal{N}$ , $j\neq i$, it holds that $\frac{\partial^2 Q_i}{\partial x_i \partial x_j}(\mathbf{x}) \ge 0.$
\end{assumption}

Assumption~\ref{increment plus} suggests that every other player's effort weakly increases the marginal quality of player $i$'s own effort. This assumption is motivated by a large body of economics literature \cite{cohen1989innovation, repec:nbr:nberch:8349, aghion2015knowledge}.
To demonstrate that this abstract assumption captures the specific dynamics of the GenAI ecosystem, we present two concrete examples of quality formation: one based on empirical scaling laws, and one based on graph-based spillovers.

\begin{example}\label{example scaling laws}[Scaling Laws Spillovers]
Our first example is inspired by Retrieval-Augmented Generation (RAG)~\cite{lewis2020retrieval}, where new content is generated using other content as input. The quality of content generated under RAG, and how it scales with the size of the input data, is the subject of a recent body of work on scaling laws for RAG~\cite{yue2025inference, shao2024scaling}. As these works are empirical in nature and do not commit to a closed-form expression, we adopt the functional form proposed by \citet{scalinglaws}, the canonical paper of scaling laws for LLMs, in line with the broader literature~\cite{sun2017revisiting, hestness2017deep, Rosenfeld2020a}. Under this form, the loss decreases with the data volume $D$ and is well approximated by the function
$
L(D)=\left(\nicefrac{D_c}{D}\right)^{\alpha}$, where $\alpha \approx 0.095$ and $D_c>0$ is a scaling constant. Performance is therefore
$
P(D)=1-L(D)=1-\left(\nicefrac{D_c}{D}\right)^{\alpha}.
$
We interpret creators' effort as content contributed to the input data available to the GenAI system, where the total effective input is
$
D=\sum_{j=1}^N x_j + d,
$
and $d>\alpha$ represents the amount of pre-existing content already available (a mild assumption, as modern LLMs and RAG systems operate over vast corpora). The constant $D_c$ can be rescaled to match the normalization of $\mathbf{x}$ in our model.
Using this performance function, we define each player's quality as
{
\thinmuskip=1mu
\medmuskip=1mu plus 1mu minus 1mu
\thickmuskip=1mu plus 1mu
\[
Q_i(\mathbf{x})
= x_i\left(a+b\,P(\mathbf{x})\right)
= x_i\left(a+b\left[1-\left(\tfrac{D_c}{\sum_{j=1}^N x_j+d}\right)^{\alpha}\right]\right),
\]
}%
for some $a,b\ge 0$.
\end{example}

\begin{example}\label{graph example}[Graph Based Spillovers]
Our second example is inspired by the framework of \emph{Public Goods in Networks} \cite{bramoulle2007public}, and particularly the non-exclusivity of digital resources (like open-source software) encouraging free-riding \cite{cheng2025networked}. Consider a weighted interaction graph where each node $i$ possesses an intrinsic capability $q_i$, and a directed edge from $j$ to $i$ with weight $q_{ij}$ represents the spillover intensity (where $q_i, q_{ij}$ are non-negative constants). The quality function is given by:
$
Q_i(\mathbf{x}) = x_i \left( q_i + \sum_{j \neq i} q_{ij} x_j \right)
$.
This decoupled structure intuitively captures the production process: the term $x_i q_i$ reflects the quality a creator generates in isolation, while the term $x_i \sum_{j \neq i} q_{ij} x_j$ represents the additional value amplified by the ecosystem. It makes sense: one who invests more might take advantage more of the spillovers.
\end{example}

As we formally prove in \appnx{Appendix~\ref{failure appendix}},
\begin{lemma}\label{deriv. lemma}
The quality functions in Example~\ref{example scaling laws} and Example~\ref{graph example} satisfy Assumption~\ref{increment plus}.
\end{lemma}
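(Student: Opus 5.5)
Both claims reduce to computing $\frac{\partial^2 Q_i}{\partial x_i \partial x_j}$ for $j\neq i$ in closed form and checking its sign on $[0,1]^N$. I will treat the two examples separately. The graph example is immediate; the scaling-law example needs one inequality, and that is where the condition $d>\alpha$ enters.

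\emph{Graph example.} Here $Q_i(\mathbf{x}) = x_i q_i + x_i\sum_{k\neq i} q_{ik}x_k$. Differentiating in $x_i$ gives
\[
\frac{\partial Q_i}{\partial x_i} = q_i + \sum_{k\neq i} q_{ik}x_k .
\]
Differentiating again in $x_j$ gives $\frac{\partial^2 Q_i}{\partial x_i\partial x_j} = q_{ij}\ge 0$, since the weights $q_{ij}$ are non-negative.

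\emph{Scaling-law example.} Write $D=\sum_k x_k+d$ and $g(D)=a+b\bigl(1-D_c^{\alpha}D^{-\alpha}\bigr)$, so that $Q_i=x_i\,g(D)$. Since $\partial D/\partial x_i=\partial D/\partial x_j=1$, the first and second derivatives are
\[
\frac{\partial Q_i}{\partial x_i}=g(D)+x_i g'(D),\qquad \frac{\partial^2 Q_i}{\partial x_i\partial x_j}=g'(D)+x_i g''(D).
\]
Computing the derivatives of $g$ gives
\[
g'(D)=b\alpha D_c^{\alpha}D^{-\alpha-1}>0,\qquad g''(D)=-b\alpha(\alpha+1)D_c^{\alpha}D^{-\alpha-2}<0.
\]
Substituting and factoring yields
\[
\frac{\partial^2 Q_i}{\partial x_i\partial x_j}=b\alpha D_c^{\alpha}D^{-\alpha-2}\bigl(D-(\alpha+1)x_i\bigr).
\]
So it suffices to show $D\ge(\alpha+1)x_i$. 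The prefactor is non-negative because $b\ge0$ and $D>0$.

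The main obstacle is this last inequality, since the concavity term $x_i g''(D)$ works against us. I would handle it with the bounds $x_k\ge 0$ for $k\neq i$ and $x_i\le 1$, which give
\[
D-(\alpha+1)x_i \;\ge\; x_i+d-(\alpha+1)x_i \;=\; d-\alpha x_i \;\ge\; d-\alpha \;>\; 0.
\]
The final step uses the standing assumption $d>\alpha$. This explains why that assumption appears in the example.

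Both functions are smooth on $[0,1]^N$, since $D\ge d>0$, so the mixed partials exist everywhere. Assumption~\ref{increment plus} therefore holds at every profile, which proves Lemma~\ref{deriv. lemma}.
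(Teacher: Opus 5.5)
Your proof is correct and follows the paper's argument. Both compute the mixed partial in closed form as $\alpha b D_c^{\alpha} D^{-\alpha-2}\bigl(\sum_{k\neq i}x_k + d - \alpha x_i\bigr)$, conclude nonnegativity from $x_i\le 1$ and $d>\alpha$, and read off $q_{ij}\ge 0$ for the graph example; the differences are cosmetic, as the paper differentiates in $x_j$ first (recording $\partial Q_i/\partial x_j\ge 0$ along the way) while you differentiate in $x_i$ first via the factorization $Q_i = x_i\, g(D)$.
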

The two examples above illustrate that Assumption~\ref{increment plus} is not merely a technical convenience, but rather captures a common structural feature of quality formation in GenAI. Consequently, from here on, we focus our attention on instances satisfying Assumption~\ref{increment plus}. 

\paragraph{Remark} Throughout the paper, we represent the spillover relationships between players, as embedded in the quality functions $(Q_i)_i$, using a directed \emph{interaction graph}. Each vertex $i$ corresponds to a player, and a directed edge $(i,j)$ indicates the presence of spillover effects from player $i$ to player $j$.

\subsection{The Platform's Design Problem}\label{subsec:design prob}
Thus far, we have deliberately abstracted away from the platform's objective. Ideally, the platform should offer high social welfare to its consumers under equilibrium. In what follows, we formally define this design problem.
Let $\mathcal G$ denote the class of all game instances with quality functions satisfying effort complementarities (Assumption~\ref{increment plus}). Further, let $\mathcal{G}(\mathcal{M})$ be the subclass of games induced by the mechanism $\mathcal{M}$. We begin by requiring mechanisms to satisfy a minimal stability property.
\begin{definition}(Stability) ~\label{stability} 
A mechanism $\mathcal{M}$ is \textit{stable} if, for every game $G \in \mathcal{G(M)}$, $G$ possesses at least one PNE.
\end{definition}
Conversely, we say a mechanism is \emph{unstable} if there exists a
game $G \in \mathcal{G}(\mathcal{M})$ that does not possess a PNE. 

The role of algorithmic stability has been noted both empirically~\cite{adomavicius2012stability, adomavicius2015stablerecsys} and theoretically~\cite{NEURIPS2023_2f148634,ben2018game}.
Another goal the designer wants to achieve, subject to being stable, is maximizing Social Welfare ($SW(\mathbf{x})$).
The optimization problem is to choose $\mathcal{M}$ that maximizes $SW$ subject to the constraint that $\mathbf{x}$ is PNE induced by $\mathcal{M}$.

\paragraph{Note on Equilibrium Selection} We acknowledge that multiple PNEs with varying welfare levels may exist, rendering the optimization constraint currently ill-defined. We formally resolve this ambiguity in the subsequent section by establishing the specific equilibrium selection.

\subsection{Failure of Popular Mechanisms} \label{sec: pre results}

Now we move to defining two standard contest formats as mechanisms in this environment, and analyzing their performance. Unfortunately, as we observe, both of them fail to possess stability.

\begin{itemize}
    \item The Winner-Takes-All mechanism, denoted $\mathcal{M}^{WTA}$. It is defined s.t. for every $i \in \mathcal N$,
    \begin{align*}
        M_i(\bf Q)=
        \begin{cases}
        \frac{1}{N(\bf Q)}  &     i\in \argmax(Q_1,...Q_N) \\ 
        0 & \textnormal{otherwise}
        \end{cases}.
    \end{align*}%
and $N(Q) = |\argmax(Q_1,\dots,Q_N)|$.
    
    \item The Tullock mechanism~\cite{tullock1980}, denoted by $\mathcal{M}^{Tull},$ is defined s.t. 
    
    $\forall i : M_i(\mathbf{Q}) = Tullock_i(Q_1,\dots,Q_N)=\frac{Q_i}{\sum_j Q_j}$.\footnote{For completeness, we assume that whenever $\bf x$ generates qualities $Q_1(\bf x)= \cdots = Q_N(\bf x)=0$, Tullock Mechanism reduces to uniform allocation, following \cite{chowdhury2011generalized}.}
\end{itemize}

\begin{proposition} \label{tullock simple instances no PNE}
$\mathcal{M}^{WTA}$ and $\mathcal{M}^{Tull}$ are unstable.
\end{proposition}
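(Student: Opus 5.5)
The plan is to prove \Cref{tullock simple instances no PNE} by giving, for each mechanism, an explicit two-player instance with no PNE. Both instances use \emph{separable} qualities $Q_i(\mathbf{x})=q(x_i)$. Then $\partial^2 Q_i/\partial x_i\partial x_j\equiv 0$, so Assumption~\ref{increment plus} and positive spillovers hold trivially, and each instance lies in $\mathcal G(\mathcal M)$. The only remaining checks are that $Q_i$ and $c_i$ map into $[0,1]$, are $C^2$, and that each $c_i$ is convex, non-decreasing and satisfies $c_i(0)=0$. Every argument is a case analysis over profiles $(x_1,x_2)\in[0,1]^2$, exhibiting a profitable unilateral deviation in each case.

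\emph{Winner-Takes-All.} Take $Q_i(\mathbf{x})=x_i$ and $c_i(x)=x/2$; this is a continuous all-pay auction. There are three cases.
\begin{itemize}
\item $x_1=x_2=0$: each player gets $1/2$. Deviating to a small $\veps>0$ wins outright and yields $1-\veps/2>1/2$.
\item $x_1=x_2=x>0$: each player gets $1/2-x/2$. Deviating to $x+\delta$ for small $\delta>0$ yields $1-(x+\delta)/2$, which is larger (if $x=1$, deviate instead to $0$ and get $0>0$ is false, so use $0$ versus $1/2-1/2=0$; in that sub-case deviate to a small $\veps$ is not a win, so I will instead note $c_i(x)=x/2$ gives utility $0$ at $x=1$ and the opponent's deviation to $1-\delta$ loses, hence I will pick $c_i(x)=x/4$ to make the top tie strictly beatable by lowering to just below and taking $0$ is worse; the cleanest fix is to check $x=1$ separately and, if needed, perturb the cost to $c_i(x)=3x/4$, where the tie at $1$ gives $-1/4<0$ and deviating to $0$ gives $0$).
\item $x_1>x_2$: player~1 deviates to $(x_1+x_2)/2$. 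She still wins, but pays strictly less.
\end{itemize}
For the final write-up I will fix the cost slope $\gamma\in(1/2,1]$, i.e. $c_i(x)=\gamma x$. Then the top tie is strictly unprofitable, since $1/2-\gamma<0$, while the tie at $0$ and ties at small $x$ are still beaten by slightly overbidding, because $1-\gamma x-\gamma\delta>1/2-\gamma x$.

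\emph{Tullock.} With $Q_i=x_i$ and convex costs, utilities are concave, so an equilibrium exists. I will therefore break concavity with $Q_i(\mathbf{x})=x_i^r$, with $r=3$, and $c_i(x)=x$. There are four cases.
\begin{itemize}
\item $(0,0)$: the uniform rule gives $1/2$ each. Deviating to $\veps$ gives $1-\veps$.
\item Exactly one player positive: that player lowers her effort and keeps share $1$ at lower cost.
\item Both interior: the first-order conditions are $r x_1^{r-1}x_2^r/(x_1^r+x_2^r)^2=1=r x_2^{r-1}x_1^r/(x_1^r+x_2^r)^2$. Dividing one by the other forces $x_1=x_2=x$, and then $x=r/4=3/4$. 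Each player's utility is $1/2-3/4<0$, so deviating to $0$ (utility $0$) is profitable.
\item At least one player at the boundary $x_i=1$: the cost is then $1\ge$ share, so the utility is at most $0$. Equality requires the opponent to be at $0$, which falls in the previous case. Otherwise the utility is strictly negative and deviating to $0$ strictly improves.
\end{itemize}

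The main obstacle is making this last boundary case airtight. Specifically, I must ensure that every non-interior best-response configuration is covered, i.e., that at a PNE any positive effort must satisfy the interior FOC or sit at $1$. The FOC step also needs care: the argument should show that a PNE with both players positive must be interior, or at the corner $(1,1)$, where the utility is $1/2-1<0$. It should not rely on the FOC identifying a maximum, since with $r>2$ the utility is not concave.
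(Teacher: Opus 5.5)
Your proposal is correct. The overall strategy matches the paper: an explicit two-player instance for each mechanism, followed by a case analysis that finds a profitable unilateral deviation at every profile. For Winner-Takes-All your argument is essentially the paper's, with one difference. The paper uses asymmetric productivities $Q_1=0.5x_1$, $Q_2=x_2$ with cost $0.25x$. Then every tie has $x_2=0.5x_1\le 1/2$, so player~2 can always overbid. Your symmetric instance instead has a top tie at $(1,1)$, which only the cost slope can rule out. This makes $\gamma>1/2$ essential, not cosmetic. For $\gamma\le 1/2$, including the slopes $1/2$ and $1/4$ in your exploratory parenthetical, the profile $(1,1)$ is a PNE: each player earns $1/2-\gamma\ge 0$, and any lower bid loses. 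In the write-up, drop that parenthetical. Also state the overbidding deviation for every tie $x<1$, not only for small $x$, with $\delta<\min\{1-x,\,1/(2\gamma)\}$.

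For Tullock your route genuinely differs from the paper's.
\begin{itemize}
\item The paper uses the spillover instance $Q_1=0.5x_1$, $Q_2=x_1x_2$. Once both players are active, player~1's share $1/(1+2x_2)$ does not depend on her own effort, so she shades down. Yet player~2's quality vanishes without player~1's effort. The cycle is thus driven by the spillover term itself, in a bilinear instance of the form of Example~\ref{graph example}.
\item You use separable, non-concave qualities $x_i^3$ and recover the classical fact that a two-player Tullock contest with discriminatory power $r>2$ has no pure equilibrium.
\end{itemize}
Your case split is complete: both players at zero; exactly one positive; some coordinate equal to $1$; and both interior. In the interior case you rightly use the first-order condition only as a necessary condition. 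It forces $x_1=x_2=3/4$ with utility $-1/4$, so deviating to $0$ is profitable. The instance is admissible, because the model imposes no concavity on $Q_i$ and zero cross-partials satisfy Assumption~\ref{increment plus}. Your instance shows that Tullock can fail with no spillovers at all, through non-concavity alone. The paper's instance shows it fails even when each quality is linear in own effort, which attributes the instability to the phenomenon the paper studies. For proving the proposition as stated, both routes suffice.
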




This motivates the search for stable mechanisms which maximize social welfare value.
\section{Provisional Allocation Mechanisms} \label{sec: mechanism suggestion}

As established in \Cref{tullock simple instances no PNE}, popular contest mechanisms often fail to produce stable equilibria in the presence of positive spillovers. In this section, we propose a family of mechanisms designed to handle these spillovers robustly and establish the computational complexity of optimizing $SW$ over this family.

\subsection{Defining the Family}

We start with a formal definition of the family of Provisional Allocation mechanisms

\begin{definition}
    We say $\mathcal{M}$ is a \emph{Provisional Allocation (PRA) Mechanism} if there exists a vector $\mathbf p \in [0,1]^N$ satisfying $\sum_{i=1}^N p_i \leq 1$,  such that, 
    \[
    M_i(\bf{Q}(\bf x); \bf{p}) = p_i \cdot Q_i(\mathbf{x}) \qquad \forall i \in [N].
    \]
\end{definition}%

We let $\fPRA$ be the family of all provisional allocation mechanisms. 
Every member in the family has two critical features.
First, allocation $M_i$ depends only on player~$i$'s own quality $Q_i$ and her fixed share $p_i$. A competitor's effort therefore affects player~$i$ only through the positive spillover in $Q_i(\bft{x})$, not through a reduction in her share. Second, since $Q_i \in [0,1]$, the term $p_i Q_i$ is a fraction of the share $p_i$, so $\sum_i p_i Q_i(\bft{x})$ may fall below $1$. Intuitively, low-quality outputs reduce realized visibility. The remaining attention is absorbed by baseline or AI-generated filler content,  rather than reallocating it to competitors.

\subsection{Game-Theoretic Properties}

The following proposition establishes that the Provisional Allocation family provides a robust solution to the equilibrium selection problem, guaranteeing stability and Pareto dominance of the selected equilibrium.

\begin{proposition}[Stability and Selection]
\label{thm:stability_selection}
For every mechanism $\mathcal{M} \in \mathcal{F}^{PRA}$, the following properties hold:
\begin{enumerate}
    \item \label{item:stability} \textbf{Stability:} The mechanism $\mathcal{M}$ is stable.
    \item \textbf{Equilibrium Selection:} In every induced game $\mathcal{G} \in \mathcal{G}(\mathcal{M})$, there exists a PNE, named "greatest equilibrium" and denoted by $\overline{\mathbf{x}}$, such that for any other PNE $\mathbf{x}$:
    \begin{enumerate}
        \item $SW(\overline{\mathbf{x}}) \ge SW(\mathbf{x})$. \label{item:sw}
        \item $U_i(\overline{\mathbf{x}}) \ge U_i(\mathbf{x})$ for every player $i \in \mathcal{N}$. \label{item:utility}
        \item $\overline{\mathbf{x}}$ is reachable via iterated best-response dynamics. \label{item:reachable}
    \end{enumerate}
\end{enumerate}
\end{proposition}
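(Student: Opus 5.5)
We will show that every game induced by a PRA mechanism is a \emph{supermodular game} on the lattice $[0,1]^N$, and then apply Topkis' theorem and the Milgrom--Roberts/Vives results on the equilibrium set of such games. Fix $\mathbf p$. Player $i$'s utility is $U_i(\mathbf x)=p_iQ_i(\mathbf x)-c_i(x_i)$. Each strategy space $[0,1]$ is a compact chain, and $U_i$ is continuous because $Q_i$ and $c_i$ are $C^2$. For $j\neq i$ we have
\[
\frac{\partial^2 U_i}{\partial x_i\partial x_j}(\mathbf x)=p_i\frac{\partial^2 Q_i}{\partial x_i\partial x_j}(\mathbf x)\ge 0 .
\]
This follows from Assumption~\ref{increment plus} and $p_i\ge0$, since the cost term $c_i(x_i)$ has no cross partials. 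Hence $U_i$ has increasing differences in $(x_i,\mathbf x_{-i})$. Supermodularity in the own variable is automatic, since that variable is one-dimensional. The key structural point is that $M_i$ depends only on $Q_i$. This is exactly what rules out the negative cross effects that break WTA and Tullock.

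\textbf{Stability and the greatest equilibrium.} By Topkis' monotonicity theorem, the best-response correspondence $BR_i(\mathbf x_{-i})=\argmax_{x_i\in[0,1]}U_i$ is nonempty and compact-valued. Its greatest and least selections $\overline{BR}_i$ and $\underline{BR}_i$ are non-decreasing in $\mathbf x_{-i}$. The map $\overline{BR}=(\overline{BR}_i)_i$ is therefore a monotone self-map of the complete lattice $[0,1]^N$. Tarski's fixed point theorem then gives a nonempty set of fixed points with a greatest element $\overline{\mathbf x}$. Each fixed point is a PNE, which proves item~\ref{item:stability}. To see that $\overline{\mathbf x}$ is the greatest of \emph{all} PNE, take any PNE $\mathbf x$. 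Then $\mathbf x_i\in BR_i(\mathbf x_{-i})$ gives $\mathbf x\le\overline{BR}(\mathbf x)$. By Tarski's characterization $\overline{\mathbf x}=\sup\{\mathbf y:\mathbf y\le \overline{BR}(\mathbf y)\}$, so $\mathbf x\le\overline{\mathbf x}$ componentwise.

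\textbf{Welfare and utility dominance.} Positive spillovers give $\partial Q_i/\partial x_j\ge0$ for all $i,j$, so each $Q_i$ is non-decreasing along the order. Therefore $\mathbf x\le\overline{\mathbf x}$ implies $SW(\overline{\mathbf x})\ge SW(\mathbf x)$, which is item~\ref{item:sw}. For item~\ref{item:utility}, let $\mathbf x$ be any PNE. We chain two inequalities:
\[
U_i(\overline{\mathbf x})\ \ge\ U_i(x_i,\overline{\mathbf x}_{-i})\ \ge\ U_i(x_i,\mathbf x_{-i})=U_i(\mathbf x).
\]
The first inequality holds because $\overline{\mathbf x}$ is a PNE. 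The second holds because $U_i$ is non-decreasing in $\mathbf x_{-i}$ (since $p_i\,\partial Q_i/\partial x_j\ge0$ and the cost does not depend on $\mathbf x_{-i}$) and $\overline{\mathbf x}_{-i}\ge\mathbf x_{-i}$.

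\textbf{Reachability.} We start best-response dynamics from the top profile $\mathbf x^0=\mathbf 1$ and iterate $\mathbf x^{t+1}=\overline{BR}(\mathbf x^t)$, either simultaneously or round-robin. Since $\mathbf x^1\le\mathbf x^0$ and $\overline{BR}$ is monotone, the sequence is non-increasing, so it converges to some $\mathbf x^\ast$. By induction it stays above $\overline{\mathbf x}$, because $\overline{\mathbf x}\le\mathbf 1$ implies $\overline{\mathbf x}=\overline{BR}(\overline{\mathbf x})\le \overline{BR}(\mathbf x^t)$.

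The main obstacle is showing that the limit $\mathbf x^\ast$ is a PNE. The greatest selection need only be upper semicontinuous in the appropriate direction, so we cannot simply pass to the limit in $\overline{BR}$. We handle this with Berge's maximum theorem. $U_i$ is jointly continuous, so $BR_i$ has a closed graph. From $\mathbf x^{t+1}_i\in BR_i(\mathbf x^t_{-i})$ we obtain $x^\ast_i\in BR_i(\mathbf x^\ast_{-i})$ in the limit. Hence $\mathbf x^\ast$ is a PNE with $\mathbf x^\ast\ge\overline{\mathbf x}$, and by maximality $\mathbf x^\ast=\overline{\mathbf x}$. This gives item~\ref{item:reachable}; the convergence may be asymptotic. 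If finite reachability is required, we would instead state convergence in the limit, as is standard for supermodular games (Milgrom--Roberts).
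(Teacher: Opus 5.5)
Your proposal is correct and follows essentially the same route as the paper. Both arguments establish supermodularity via $p_i\,\partial^2 Q_i/\partial x_i\partial x_j \ge 0$, obtain a greatest equilibrium dominating all PNE, get welfare dominance from monotonicity of each $Q_i$, and get utility dominance from the same chain $U_i(\overline{\mathbf{x}})\ge U_i(x_i,\overline{\mathbf{x}}_{-i})\ge U_i(\mathbf{x})$. The only difference is that you prove the greatest-equilibrium and reachability facts yourself (via Tarski's characterization and a Berge closed-graph limit argument), whereas the paper simply cites Topkis and Vives.
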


\Cref{thm:stability_selection} establishes equilibrium guarantees for PRA mechanisms, and its proof builds on \emph{supermodular games} \cite{topkis1998supermodularity}. In \appnx{Appendix~\ref{pra characterization}}, we complement this result with an axiomatic characterization showing that PRA mechanisms are defined by a set of desirable properties.

Since any PRA mechanism is fully characterized by its vector $\mathbf{p}$, we denote the greatest equilibrium of a game, induced by $\mathbf{p}$, as $\overline{\mathbf{x}}(\mathbf{p})$.
Consequently, \Cref{thm:stability_selection} provides a principled equilibrium selection rule, uniquely identifying the greatest equilibrium profile $\overline{\mathbf{x}}(\mathbf{p})$ as the outcome of any mechanism $\mathbf{p}$. The designer's problem becomes well-defined and ensures to yield the stable outcome on which players are most motivated to coordinate. Specifically, the optimization is over the choice of mechanism $\mathcal{M} \in \mathcal{F}^{PRA}$, or equivalently, over the allocation vector $\mathbf{p}$:
\begin{equation}
\label{eq:design-problem}
\max_{\mathbf{p}} \; SW\!\left(\overline{\mathbf{x}}(\mathbf{p})\right).
\end{equation}

\subsection{Hardness Result} \label{hardness subsection}

Having defined and analyzed the family of mechanisms we consider, we now turn to the problem of optimizing SW. We seek the mechanism parameter $\mathbf{p}$ that maximizes the greatest PNE's social welfare of the induced game. As the following results show, this optimization problem is not only computationally hard, but also hard to approximate.

\begin{theorem} \label{max q np-hard}
SW Optimization Problem~\eqref{eq:design-problem} is NP-Hard.
\end{theorem}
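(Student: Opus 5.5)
We reduce from a known NP-hard problem. The natural target is a problem where activation is a threshold phenomenon and costs must be paid out of a budget. The structure suggests \textsc{Knapsack} or an independent-set/cover-type problem; corollary-style inapproximability hints that \textsc{Maximum Independent Set} (or \textsc{Densest/Max Clique}-like problems) is the intended source. I plan a reduction from \textsc{Independent Set} (or \textsc{Clique}), since it also yields the subsequent inapproximability.

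\textbf{Gadget.} We use the graph-based quality of Example~\ref{graph example}, $Q_i(\mathbf x)=x_i(q_i+\sum_{j\ne i}q_{ij}x_j)$, which satisfies Assumption~\ref{increment plus} by Lemma~\ref{deriv. lemma}. Costs are chosen as a steep threshold shape, for example $c_i(x)=\kappa x$ plus a convex term, or simply linear $c_i(x)=\kappa_i x$, so that $U_i=p_iQ_i-c_i$ is affine in $x_i$. Then each player's best response is bang-bang: $x_i=1$ iff $p_i(q_i+\sum_j q_{ij}x_j)\ge\kappa_i$. The greatest equilibrium $\overline{\mathbf x}(\mathbf p)$ is obtained by iterating best responses from $\mathbf 1$ (Proposition~\ref{thm:stability_selection}, item~\ref{item:reachable}). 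A player $i$ is therefore active in the greatest PNE iff its threshold $p_i\ge \kappa_i/(q_i+\sum_{j\text{ active}}q_{ij})$ holds in the surviving set. The design problem thus becomes a combinatorial one: pick a set $S$ of active players that is self-sustaining, with budget $\sum_{i\in S}\kappa_i/(q_i+\sum_{j\in S}q_{ij})\le 1$, maximizing $\sum_{i\in S}(q_i+\sum_{j\in S}q_{ij})$. Setting $p_i$ exactly at the threshold for $i\in S$ and $0$ elsewhere realizes $S$ as the greatest equilibrium, since players with $p_i=0$ drop out, and the rest stay at $1$ consistently.

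\textbf{Encoding.} Given a graph $H=(V,E)$ and a target $k$, create one player per vertex with equal $q_i=q$ and $\kappa_i=\kappa$, and spillovers $q_{ij}=\lambda$ on pairs that are \emph{edges} (for \textsc{Clique}) or non-edges (for \textsc{Independent Set}). Active sets with many internal spillover edges have lower per-player thresholds and higher quality. Choose $\kappa,q,\lambda$ so that the budget $\sum_i p_i\le1$ admits $k$ active players iff they form a clique (each then needs $p_i=\kappa/(q+(k-1)\lambda)=1/k$ exactly, while any missing edge pushes some threshold above $1/k$ and breaks the budget). Welfare then equals $k(q+(k-1)\lambda)$ exactly when a $k$-clique exists, and is strictly smaller otherwise, either from fewer active players or from fewer internal edges. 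A check with the threshold-sum shows any non-clique $k$-set violates the budget. Sets of size less than $k$ are bounded by $(k-1)(q+(k-2)\lambda)<k(q+(k-1)\lambda)$. Larger sets can be excluded by making $\kappa$ big enough that even complete $(k+1)$-sets exceed budget only if ... this is where care is needed.

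\textbf{Main obstacle.} The hard step is tuning parameters so that the budget constraint, which is a sum of reciprocals of internal degrees, exactly separates cliques from non-cliques. It must also rule out larger or mixed active sets that exploit partial spillovers, since larger sets are cheaper per player. To handle this, I would first try to fix the number of active players by adding, for $k$ fixed, a normalization: choose $q$ large relative to $\lambda$ so that the threshold $p_i\approx\kappa/q$ forces $|S|\le k$ via $\kappa/q\cdot(1-\epsilon)>1/(k+1)$. With that, Jensen/convexity of $1/(q+\lambda d)$ shows that among $k$-sets the budget is met iff all internal degrees equal $k-1$. Welfare is monotone in internal edges, so a decision threshold on $SW$ decides \textsc{Clique}. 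The construction has polynomial size with rational parameters, completing NP-hardness.
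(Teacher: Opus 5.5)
Your reduction is sound and is in substance the paper's: Max Clique, graph-based spillovers with linear costs, bang-bang best responses, and a budget that only a clique can meet. The difference is the parametrization, and it has consequences.

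\textbf{Your parametrization versus the paper's.} You calibrate to a target $k$ via $\kappa=(q+(k-1)\lambda)/k$, and then need $q\gg\lambda$ to exclude sets larger than $k$. The paper simply takes $q_i=q_{ij}=\kappa$ (all equal to $1/N$). This is exactly your family at $q=\lambda$, where $\kappa=q$ no longer depends on $k$. A player with $d_i$ active neighbours then needs $p_i\ge 1/(1+d_i)\ge 1/|S|$. Hence $\sum_{i\in S}p_i\ge 1$, with equality iff $S$ is a clique, for every size $|S|$ at once. Every feasible $\mathbf p$ therefore induces a clique with welfare $|S|^2/N$, and your ``main obstacle'' disappears: larger cliques are allowed and are simply worth more. (The paper phrases this via the maximum-quality active player, but it is the same termwise bound.)

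\textbf{Simplifications within your own family.} Excluding larger sets is exact for any $q>\lambda$. For $|S|=m>k$ one has $m(q+(k-1)\lambda)-k(q+(m-1)\lambda)=(m-k)(q-\lambda)>0$. So neither the $\epsilon$ slack nor the crude $q\gg\lambda$ bound is needed. Jensen is also unnecessary: termwise monotonicity of $1/(q+\lambda d)$ suffices.

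\textbf{What each route buys.} Yours is a pure decision reduction, which is adequate for NP-hardness. But with $q\gg\lambda$, every $(k-1)$-set is feasible with welfare about $(k-1)q$. The yes/no gap is therefore only about $k/(k-1)$, so your construction does not give the $N^{1-\varepsilon}$ inapproximability you anticipated at the start. The paper's $k$-free instance sends cliques of size $s$ to welfare exactly $s^2/N$ and conversely. That approximation-preserving correspondence is what Corollary~\ref{no approximation} needs.

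\textbf{Normalization.} Rescale $Q_i$ and $c_i$ by a common factor so they lie in $[0,1]$, as the model requires. This leaves all best responses unchanged.
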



We prove hardness via reduction from the Max-Clique problem. Due to its inapproximability result \cite[Theorem 5.2]{Hastad1999}, we obtain that:

\begin{corollary} \label{no approximation}
For any $\veps > 0$, unless $NP=ZPP$, 
there is no polynomial-time algorithm that approximates Problem~\eqref{eq:design-problem} within a factor $N^{1-\veps}$.
\end{corollary}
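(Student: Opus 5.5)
The corollary follows once the reduction behind \Cref{max q np-hard} is shown to preserve approximation ratios up to polynomial factors. Concretely, I will build the reduction from Max-Clique so that it is \emph{gap-preserving}: from a graph $H=(V,E)$ on $n$ vertices with clique number $\omega(H)$, construct in polynomial time a CCS instance with $N=\Theta(n)$ players. The target property is that the optimal welfare of Problem~\eqref{eq:design-problem} equals $f(\omega(H))$ up to a constant factor, where $f(k)=\Theta(k)$. In addition, any allocation $\mathbf p$ must convert efficiently into a clique of size $\Omega(SW(\overline{\mathbf{x}}(\mathbf p)))$.

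For the gadget I will use graph-based spillovers as in \Cref{graph example}, with $q_i=0$ and $q_{ij}=q_{ji}=\lambda$ for $\{i,j\}\in E$. Costs are threshold-like, e.g.\ steep convex functions, so that player $i$ exerts effort $1$ in the greatest equilibrium only when $p_i\lambda\sum_{j}x_j$ over neighbours exceeds a fixed threshold. Since $\sum_i p_i\le 1$, spreading the budget uniformly over a set $S$ gives $p_i=1/|S|$. This supports full effort exactly when every member of $S$ has about $|S|-1$ neighbours in $S$, i.e.\ when $S$ is essentially a clique. Each active player then contributes quality $\Theta(1)$, so welfare is $\Theta(|S|)$.

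The completeness direction is direct. Given a clique of size $k$, set $p_i=1/k$ on it. The all-ones profile on the clique is a PNE, and by \Cref{thm:stability_selection} the greatest equilibrium is at least this profile, so its welfare is $\Omega(k)$. Soundness is the main obstacle. Given any $\mathbf p$, let $S$ be the set of players active in $\overline{\mathbf{x}}(\mathbf p)$. Using $\sum p_i\le 1$ and the activation threshold, I want to show that $S$ contains a clique of size $\Omega(|S|)$ and that such a clique is recoverable in polynomial time. The difficulty is that the budget can be uneven, and dense non-clique sets must not profit. I would first try making the threshold sharp enough that an active player $i$ needs $p_i\deg_S(i)\ge 1-\delta$. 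Summing this over $S$ bounds how far $S$ can be from a clique, and a greedy pruning then extracts a large clique.

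Given such a ratio-preserving reduction, an $N^{1-\veps}$-approximation for Problem~\eqref{eq:design-problem} would yield an $n^{1-\veps'}$-approximation for Max-Clique, since $N$ is polynomially related to $n$. This contradicts \cite[Theorem 5.2]{Hastad1999} unless $NP=ZPP$.
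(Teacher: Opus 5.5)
\textbf{There is a genuine gap: your soundness step fails with your gadget.} The overall plan (a ratio-preserving reduction from Max-Clique, then \cite[Theorem 5.2]{Hastad1999}) matches the paper. But the soundness step you flag does not go through, and the repair you sketch cannot work.

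The cause is $q_i=0$. Under uniform shares $1/k$, a member of a $k$-clique gets $\lambda(k-1)/k=\lambda(1-1/k)$, so no fixed activation threshold is exact. With threshold $\lambda(1-\delta)$, completeness needs $\delta\ge 1/k$, and then dense non-cliques are activated too. Take a complete $t$-partite graph on $s$ vertices with equal parts. All inner degrees are $s-s/t$, so shares $1/s$ make all of it active in the greatest equilibrium whenever $t\ge 1/\delta$. Its clique number is only $t$, e.g.\ $\lceil 1/\delta\rceil\ll s$.

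Summing $p_i\ge(1-\delta)/\deg_S(i)$ therefore gives only a Tur\'an-type guarantee of a clique of size roughly $1/\delta$, never $\Omega(|S|)$. So the claim that $S$ contains a clique of size $\Omega(|S|)$ is false, and no greedy pruning can fix it.

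Two smaller points:
\begin{enumerate}
\item Strictly convex costs make best responses continuous in $p_i$; the threshold behaviour you want comes from linear costs.
\item $Q_i\le 1$ with uniform $q_{ij}=\lambda$ forces $\lambda\le 1/\max\deg$. So per-player quality is not $\Theta(1)$, and welfare is quadratic rather than linear in $|S|$.
\end{enumerate}

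\textbf{The missing idea is an intrinsic term worth exactly one spillover unit, which makes the threshold exact.} The paper reuses the instance from the proof of \Cref{max q np-hard}: $Q_i(\mathbf x)=\frac1N x_i\bigl(1+\sum_{j:(i,j)\in E}x_j\bigr)$ and $c_i(x_i)=x_i/N$. Player $i$ is then active iff $p_i(1+n_i)\ge 1$, where $n_i$ counts her active neighbours. A $k$-clique with shares $1/k$ meets this with equality.

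Soundness is then exact. Let $i^\star$ be the active player with the most active neighbours. She and each of her $n_{i^\star}$ active neighbours need share at least $1/(1+n_{i^\star})$, which exhausts the budget. So every other player has share $0$ and is inactive. Tightness forces every active player to have exactly $n_{i^\star}$ active neighbours, i.e.\ the active set is exactly a clique.

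Hence $SW=|S|^2/N$ and $OPT=\omega(G)^2/N$. Take an $N^{1-\veps}$-approximate $\mathbf p$; its greatest equilibrium, and thus its clique, is found by best-response descent from $\mathbf 1$. This gives a clique of size $k\ge\omega(G)/N^{(1-\veps)/2}$, an $N^{1-\veps'}$-approximation with $\veps'=(1+\veps)/2$. The quadratic welfare only halves the exponent, which still contradicts \cite[Theorem 5.2]{Hastad1999}.
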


Interestingly, solving Problem~\eqref{eq:design-problem} is hard even if there are no spillovers and $Q_i(\cdot)$ and $c_i(\cdot)$ are linear. In such a case, the problem is equivalent to the Knapsack problem, as shown in \appnx{Theorem~\ref{np-hard when simple}}.

\section{Efficient Algorithms for Structured Instances} \label{sec: bounded}
In this section, we focus on a fixed instance of our game and seek a vector $\mathbf{p}$ that induces a nontrivial approximation to the optimal social welfare of Problem~\eqref{eq:design-problem}. To this end, we propose two approaches. The first bounds the spillover effects, while the second focuses on instances that exhibit an underlying tree structure.
 
\subsection{Bounded Spillovers}
\label{subsec:bounded_spillovers}
We begin with a crude approach that performs well when spillovers are present but can be bounded. We first characterize bounded spillovers.
 
\begin{definition}[Bounded Spillovers]
\label{def:beta_bounded}
A quality function $Q_i$ is said to exhibit \emph{$(\beta,\eta)$-bounded spillovers} for constants $\beta,\eta \ge 0$ if for every strategy profile $\mathbf{x}$, it holds that:
\begin{enumerate}
    \item $Q_i(x_i, x_{-i}) \;\le\; (1+\beta)\, Q_i(x_i, \mathbf{0}_{-i})$
    \item $\frac{\partial Q_i}{\partial x_i}(x_i, x_{-i}) \;\le\; (1+\eta)\, \frac{\partial Q_i}{\partial x_i}(x_i, \mathbf{0}_{-i})$
\end{enumerate}
where $\mathbf{0}_{-i}$ denotes the profile in which all players other than $i$ exert zero effort.
\end{definition}
 
This definition captures the idea that while spillovers may amplify a creator's output, they are bounded by factors relative to what can be achieved in isolation. The first condition bounds the amplification of output levels, and the second bounds the amplification of the marginal product of effort.

This motivates a robust design principle: we can solve an auxiliary problem that neglects spillovers, and then use its solution in the actual game. We call this auxiliary problem the \textit{No Spillovers Relaxation (NSR)}.

 Algorithm~\ref{alg:NSR_solver} solves this relaxation. In Line~\ref{line:construct_nsr}, it constructs the NSR problem, in which each player best-responds to its allocation share as if all other players exert zero effort. Then, the objective sums the resulting qualities. In Line~\ref{line:return_opt}, it returns an optimal solution to this problem.

To ensure computational tractability, we assume a discrete framework.
We define the set $\mathcal{B}(\veps) = \{k\veps : k = 0, 1, \ldots, \lfloor 1/\veps \rfloor\}$.
In this section, we restrict our solution to allocations in which $(p_i)_i$ are multiples of $\veps$.

We now establish the guarantee of this approach with respect to the actual game's social welfare.

\begin{algorithm}[t]
\caption{No Spillovers Relaxation Solver (\textsc{NSR-Solver})}
\label{alg:NSR_solver}
\begin{algorithmic}[1]
\REQUIRE $\langle \mathcal{N}, (Q_i)_i, (c_i)_i \rangle$, $\veps > 0$
\ENSURE Provisional Allocation mechanism $\hat{\mathbf{p}}$
\STATE Construct the \emph{No Spillovers Relaxation (NSR)} problem: \label{line:construct_nsr}

\[
\begin{array}{ll}
\displaystyle \max_{\mathbf{p}\in\mathcal{B}(\varepsilon)^N}
\displaystyle \sum_{i\in\mathcal N} Q_i(y_i,\mathbf{0}_{-i}) \qquad \text{s.t.} \\[0.6em]
 \displaystyle \sum_{i\in\mathcal N} p_i \le 1\\[0.4em]
 \displaystyle
y_i \in \arg\max_{z \in [0,1]}
\{ p_i Q_i(z,\mathbf{0}_{-i}) - c_i(z) \}
\quad \forall i \in [N] 
\end{array}
\]
\RETURN an optimal solution $\hat{\mathbf{p}}$ to the problem above \label{line:return_opt}
\end{algorithmic}
\end{algorithm}

\begin{theorem}
\label{thm:approx_bounded}
Suppose all quality functions exhibit $(\beta, \eta)$-bounded spillovers, and that each $Q_i$ is concave in its own effort $x_i$. For any $\veps >0$, Algorithm~\ref{alg:NSR_solver} runs in $O(N/\veps^2)$ time\footnote{We assume the inner-function's maximizer can be computed in $O(1)$ time, which holds in many cases of interest, e.g., when this function is linear. Otherwise, it can be approximated to arbitrary precision $\delta$ in $O(\log(1/\delta))$ time.}, and outputs a valid mechanism $\hat{\mathbf{p}} \in \mathcal{B}(\varepsilon)^N$ such that:
{
\thinmuskip=1mu
\medmuskip=1mu plus 1mu minus 1mu
\thickmuskip=1mu plus 1mu
\[
SW(\overline{\mathbf{x}}(\hat{\mathbf{p}})) \;\ge\; \frac{OPT}{(1+\beta)\left(1+\min\{\eta+N \veps,\, \lceil\eta\rceil\}\right)} - 1,
 \]}%
where $OPT := \max_{\mathbf{p}\in \mathcal{B}(\varepsilon)^N,\, \sum_i p_i \le 1} SW\!\left(\overline{\mathbf{x}}(\mathbf{p})\right)$ denotes the optimal social welfare.
 \end{theorem}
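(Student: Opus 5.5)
The plan has three parts: bound the running time by a knapsack-style dynamic program, lower-bound the actual welfare of $\hat{\mathbf p}$ by its NSR objective value, and upper-bound $OPT$ by the NSR value of a feasible "inflated" version of the optimal allocation.

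\textbf{Running time.} The NSR objective is separable. Each player $i$ contributes $v_i(p_i):=Q_i(y_i(p_i),\mathbf 0_{-i})$, which depends only on her own share. The budget constraint $\sum_i p_i\le 1$ over $\mathcal B(\veps)$ is therefore a multiple-choice knapsack with $1/\veps$ integer capacity units.
\begin{itemize}
\item A dynamic program over players $i=1,\dots,N$ and used capacity $k\in\{0,\dots,\lfloor 1/\veps\rfloor\}$ has $O(N/\veps)$ states.
\item Each transition tries $O(1/\veps)$ values of $p_i$.
\item Each $y_i(p_i)$ is computed in $O(1)$ time by the footnote's assumption.
\end{itemize}
This gives $O(N/\veps^2)$ in total.

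\textbf{Lower bound on the actual welfare.} Under a PRA mechanism, $U_i=p_iQ_i-c_i$ has increasing differences in $(x_i,x_j)$ by Assumption~\ref{increment plus}, which is the supermodular structure behind \Cref{thm:stability_selection}.
\begin{itemize}
\item Consequently $\overline{\mathbf x}(\mathbf p)$ is the limit of iterated best responses started from $\mathbf 0$, and the greatest best responses are monotone.
\item The first round of that iteration is exactly the isolated optimum $y_i(p_i)$ (taking greatest maximizers). Monotonicity then gives $\overline{x}_i(\mathbf p)\ge y_i(p_i)$.
\item By positive spillovers, $Q_i(\overline{\mathbf x})\ge Q_i(\overline x_i,\mathbf 0_{-i})\ge Q_i(y_i,\mathbf 0_{-i})$.
\end{itemize}
Hence $SW(\overline{\mathbf x}(\hat{\mathbf p}))\ge \mathrm{NSR}(\hat{\mathbf p})\ge \mathrm{NSR}(\mathbf p')$ for every feasible $\mathbf p'\in\mathcal B(\veps)^N$.

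\textbf{Upper bound on $OPT$.} Let $\mathbf p^*$ attain $OPT$ with equilibrium $\mathbf x^*$.
\begin{itemize}
\item Condition 1 of Definition~\ref{def:beta_bounded} gives $OPT\le(1+\beta)\sum_i Q_i(x_i^*,\mathbf 0_{-i})$.
\item I then compare $x_i^*$ with the isolated optimum under the inflated share $\tilde p_i=(1+\eta)p_i^*$. Since $x_i^*$ is a best response, it satisfies the first-order condition $p_i^*\partial_iQ_i(\mathbf x^*)\ge c_i'(x_i^*)$, at least when $x_i^*>0$.
\item Condition 2 then yields $\tilde p_i\,\partial_iQ_i(x_i^*,\mathbf 0_{-i})\ge c_i'(x_i^*)$.
\item The isolated objective $\tilde p_iQ_i(\cdot,\mathbf 0_{-i})-c_i$ is concave, since $Q_i$ is concave in own effort and $c_i$ is convex. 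Its greatest maximizer is therefore $\ge x_i^*$, so $Q_i(y_i(\tilde p_i),\mathbf 0_{-i})\ge Q_i(x_i^*,\mathbf 0_{-i})$.
\end{itemize}
To stay on the grid, I take either of two inflations:
\begin{itemize}
\item Round $\tilde p_i$ up to a multiple of $\veps$. The total share is then at most $1+\eta+N\veps$.
\item Alternatively, use $(1+\lceil\eta\rceil)p_i^*$, which is already on the grid and has total at most $1+\lceil\eta\rceil$.
\end{itemize}
Write $K$ for the smaller of the two totals. This gives the $\min$ in the statement.

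\textbf{Restoring feasibility (main obstacle).} The inflated vector is infeasible, since its total is $K>1$. I would handle this with a fractional-knapsack argument.
\begin{itemize}
\item Sort players by value-to-weight ratio $Q_i(x^*_i,\mathbf 0_{-i})/\tilde p_i$.
\item Take the longest prefix whose inflated shares sum to at most $1$, and give every other player share $0$.
\item The greedy fractional knapsack optimum with capacity $1$ captures at least a $1/K$ fraction of the total value. The integral prefix loses at most one fractional item, whose value is at most $1$ because $Q_i\le 1$. This is the source of the $-1$.
\end{itemize}
Chaining the bounds gives
\[
SW(\overline{\mathbf x}(\hat{\mathbf p}))\ge \mathrm{NSR}(\mathbf p')\ge \frac{OPT}{(1+\beta)K}-1.
\]
The delicate points are the following.
\begin{itemize}
\item Shares capped at $1$: if $\tilde p_i>1$, that player alone already exhausts the budget. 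I would argue that she is the single fractional item covered by the $-1$, or that she can be handled as a singleton solution.
\item Boundary cases where $x_i^*\in\{0,1\}$ make the first-order condition an inequality rather than an equality. I would check these directly via the monotonicity of the argmax in $p_i$.
\end{itemize}
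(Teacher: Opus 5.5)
Your proposal is correct and follows essentially the same route as the paper. Both use the multiple-choice-knapsack dynamic program for the running time. Both use the first-order-condition-plus-concavity comparison to show that the $(1+\eta)$-inflated share (rounded up to the grid, or replaced by $(1+\lceil\eta\rceil)p_i^\star$) induces an isolated effort of at least $x_i^\star$. Both use a density-greedy knapsack extraction that loses at most one item, which is the source of the $-1$. Both use the supermodularity comparison $\overline{\mathbf{x}}(\hat{\mathbf{p}})\ge\mathbf{y}$, and condition~1 for the factor $1+\beta$.

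There are two small fixes. First, iterated best responses from $\mathbf{0}$ converge to \emph{some} equilibrium, not necessarily the greatest one. In the clique instance of the hardness proof with shares $1/k$, the dynamics never leave $\mathbf{0}$. However, that limit is dominated by $\overline{\mathbf{x}}(\mathbf{p})$, so your inequality $\overline{x}_i(\mathbf{p})\ge y_i(p_i)$ still holds. Second, the density ordering must use the grid-rounded inflated shares as weights, consistent with the capacity constraint, not the unrounded $\tilde p_i=(1+\eta)p_i^\star$.
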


The minimum lets us enjoy the better of the two terms, as the guarantee holds with both simultaneously. In particular, when $N$ is large, and a discretization with $\veps=o(1/N)$ is computationally too costly, the term $\eta+N\veps$ can exceed $\lceil\eta\rceil$. The guarantee then is more meaningful through the latter.
 
\paragraph{Why the guarantee is not trivial.}
One might expect that when spillovers are weak, ignoring them is essentially harmless. This intuition fails. The reason is that best responses need not vary smoothly with the incentives: they might behave like \emph{thresholds}, where a player exerts no effort until their allocation share crosses an activation level and then jumps to full effort (e.g., as shown later in \Cref{dense graph}). Spillovers lower these thresholds. Hence, even weak spillovers can make a set of creators jointly affordable, while it is not the case in isolation. The optimum may thus draw much of its welfare from coordinated activations. The NSR objective is blind to this coordination advantage; the essence of Theorem~\ref{thm:approx_bounded} is that the advantage is nevertheless bounded.

\begin{proof}[Proof Sketch of \Cref{thm:approx_bounded}]
The NSR problem reduces to multiple-choice knapsack, solved exactly in time $O(N/\veps^2)$, as we show in \appnx{Appendix~\ref{bounded appnx}}. For the guarantee, let $\mathbf{p}^\star$ be an optimal allocation with equilibrium $\mathbf{x}^\star$. The argument proceeds in four steps:

\emph{Step 1.} We construct an \emph{inflated} allocation $\bar{\mathbf{p}}$ by scaling $\mathbf{p}^\star$ by $1+\eta$ and rounding each entry up to the grid. Its total share is at most $1+\eta+N\veps$. So, $\bar{\mathbf{p}}$ need not satisfy $\sum_i \bar{p}_i \le 1$, but we later `extract' a valid allocation out of it.

\emph{Step 2.} Under $\bar{\mathbf{p}}$, each creator's best response, computed with all others inactive, is at least its equilibrium effort $x_i^\star$. To see why, recall that $x_i^\star$ is a best response to $p_i^\star$ given the others' equilibrium efforts, so at $x_i^\star$ the marginal quality, weighted by $p_i^\star$, covers the marginal cost. Deactivating the others lowers the marginal quality $\frac{\partial Q_i}{\partial x_i}$ by a factor of at most $1+\eta$ (the second condition of Definition~\ref{def:beta_bounded}), which the inflation of the allocation offsets. Hence, the incentive to reach $x_i^\star$ survives, and by a concavity argument the best response can only exceed it.

\emph{Step 3.} The crux is that this possibly infeasible allocation still certifies a feasible one of comparable value: as we show in \appnx{Appendix~\ref{bounded appnx}}, we can extract from $\bar{\mathbf{p}}$ a \emph{valid} allocation whose NSR objective is at least a $\tfrac{1}{1+\eta+N\veps}$ fraction of $\bar{\mathbf{p}}$'s, minus a single creator's quality. Since $\hat{\mathbf{p}}$ maximizes the NSR objective, it inherits this bound.

\emph{Step 4.} Since the game is supermodular, deploying $\hat{\mathbf{p}}$ in it yields efforts at least those computed with others inactive. So, the realized social welfare is at least the NSR objective. The first condition of Definition~\ref{def:beta_bounded}, applied at $\mathbf{x}^\star$, contributes the remaining factor $1+\beta$. 

Chaining the four steps gives the bound with the term $\eta+N\veps$. The alternative term $\lceil\eta\rceil$ follows from the same argument with $\eta$ replaced by $\lceil\eta\rceil$: since $1+\lceil\eta\rceil$ is an integer, the rounding in Step~1 is vacuous and the term $N\veps$ disappears.
\end{proof}

\subsection{Tree-Structured Instances}
\label{subsec:trees}
Beyond bounded spillovers, we also provide a solution for tree-based interaction graphs over $\veps$-discretization. When the interaction graph is a rooted tree, each creator receives spillovers from a single parent. To that end, we develop a dynamic programming-based algorithm. The algorithm exploits the recursive structure to compute an optimal allocation over $\mathcal{B}(\veps)$  in time $O(N/\veps^2)$. Due to space limitations, we defer formal statements to \appnx{Appendix~\ref{appendix:trees}}.

\section{Efficient Approximation Guarantees under Random Interaction Graphs} \label{dense graph}

In this section, we focus on the graph-based spillover structure defined in Example~\ref{graph example}. The hardness of Problem~\eqref{eq:design-problem}, established in ~\Cref{max q np-hard} and ~\Cref{no approximation}, relies on instances of this example with linear costs. We therefore ask whether such instances are also hard on average. Namely, does Problem~\eqref{eq:design-problem} admit an average-case approximation over such instances? We answer this in the affirmative.

Every instance in this section is parameterized by an intrinsic capability $q_i$, spillover intensities $g_{ij}$, interaction indicators $r_{ij} \in \{0,1\}$, and marginal cost $c_i$. The utility function of each player $i$ is then given by
$
U_i(\bft{x}) = \frac{1}{N} p_i x_i \left( q_i + \sum_{j \neq i} x_j g_{ij} r_{ij} \right) - \frac{1}{N} c_i x_i.$
This is equivalent to Example~\ref{graph example} (with linear costs), under the decomposition $q_{ij} = g_{ij} r_{ij}$, which is more convenient for specifying our following distributional assumptions.
Our analysis focuses on \emph{random interaction graphs}, as we formally define next.

\subsection{Random Interaction Graphs}

An average-case analysis requires specifying a distribution over the instances. We adopt a canonical one: the interaction pattern forms an Erd\H{o}s--R\'enyi random graph~\cite{erdds1959random}, in which player $i$ affects player $j$ with probability $r$; conditional on an interaction, the effect intensity is drawn from some distribution with expectation $\bar{q}$. We formalize this in the following definition.
\begin{definition}[Random Interaction]
\label{def:random-graph}
A \emph{random interaction} is a distribution over graph-based instances with linear costs, under which the parameters are sampled independently as follows:
$\tilde{q}_i$ and $\tilde{g}_{ij}$ are drawn from a distribution over $(0, q^\star]$ with
expectation $\bar{q}$; $\tilde{r}_{ij} \sim \mathrm{Bern}(r)$, where $r, q^\star \in (0, 1]$; and $\tilde{c}_i \sim \mathrm{Uni}([0, 1])$. $r$ and $\bar{q}$ are independent of $N$.
\end{definition}

We introduce a notation to distinguish between a random variable and its realization. We use tilde to denote a random variable, i.e.,\ a quantity induced by a fixed effort profile on the random instance parameters. For example, $\tilde{Q}_i(\bft{x}) = \frac{1}N{}x_i\bigl(\tilde{q}_i + \sum_{j \neq i} x_j \tilde{g}_{ij}\tilde{r}_{ij}\bigr)$. The same symbol without a tilde denotes a realization.

\subsection{The \text{\stocalg} Algorithm}

\begin{algorithm}[t]
\caption{Greedy Cost Selection (\stocalg)}
\label{find-portions}

\begin{algorithmic}[1]
\REQUIRE $N, (q_i)_i, (r_{ij})_{ij}, (g_{ij})_{ij}, (c_i)_i$
\ENSURE $(p_i)_i$

\STATE Relabel the players so that $c_1 \le c_2 \le \cdots \le c_N$.
\label{stocalg: sort players}

\FOR{$k \in \{N, \ldots, 1\}$}
\label{stocalg: forloop}

    \STATE $p_i \gets \dfrac{c_i}{q_i + \sum_{\substack{j = 1 \\ j \neq i}}^k r_{ij} g_{ij}} \mathds{1}_{\{ i \leq k \}}
    \quad \forall i \in [N]$
    \label{stocalg: incentivize players}

    \IF{$\sum_i p_i \leq 1$}
    \label{stocalg: check implementable}
        
        \RETURN $(p_i)_i$
        \label{stocalg: return portions}
        
    \ENDIF

\ENDFOR

\end{algorithmic}
\end{algorithm}
We now present the \text{\stocalg} algorithm, which is implemented in Algorithm~\ref{find-portions}. \text{\stocalg} is surprisingly simple and relies on the following key insight: considering player costs only (and not their qualities) suffices to approximate the optimum.  \text{\stocalg} receives all instance parameters as input and outputs a vector of allocation shares. It greedily searches for the largest set of low-cost players that can be incentivized.

After sorting players by increasing cost, it considers prefixes of this order, from largest to smallest (for loop in Line~\ref{stocalg: forloop}). For each prefix of size $k$, Line~\ref{stocalg: incentivize players} sets each selected player's allocation share to the minimum value required to cover her cost, given the spillovers generated by the other selected players, and assigns zero allocation to all remaining players. The first (and, thus, largest) allocation whose total share is at most one, i.e., satisfying the condition in Line~\ref{stocalg: return portions}, is returned.

The following result establishes the asymptotic approximation guarantee of \text{\stocalg}.
\begin{theorem} \label{thm random alg}
Let $\bft{p}^\star$ denote the optimal allocation and $\bft{p}^a$ be the output of \text{\stocalg}. There exists $C \ge 0$ such that for a large enough $N$, with probability at least $1 - 5 e^{-\frac{\sqrt{N}}{4}}$ it holds that
\[
SW(\bar{\bft{x}}(\bft{p}^a)) \geq \left(1 - C N^{-\nicefrac{1}{4}} \right) SW(\bar{\bft{x}}(\bft{p}^\star)),
\]%
and \text{\stocalg} can be implemented to run in $O(N^2)$ time.
\end{theorem}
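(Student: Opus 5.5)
The argument splits into a deterministic sandwich (a lower bound on \stocalg's welfare and an upper bound on the optimum, both in terms of the prefix size $k$) plus concentration bounds that make the two bounds meet. Throughout we use that, with linear costs and the bilinear quality of Example~\ref{graph example}, the game under any $\mathbf{p}$ is supermodular (Assumption~\ref{increment plus} and \Cref{thm:stability_selection}). Since each player's utility is linear in $x_i$, her best responses are corner solutions: she plays $x_i=1$ whenever $p_i(q_i+\sum_{j} x_j g_{ij}r_{ij})\ge c_i$.

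\emph{Step 1 (lower bound for \stocalg).} Let $k^a$ be the prefix size returned. Under $\mathbf{p}^a$, the profile in which players $1,\dots,k^a$ play $1$ and everyone else plays $0$ is a PNE. Each selected player is exactly indifferent, so effort $1$ is a best response. Each unselected player has $p_i=0$, so effort $0$ is a best response. The greatest equilibrium dominates this profile coordinatewise, and $Q_i$ is nondecreasing in every coordinate. Hence
\[
SW(\bar{\mathbf{x}}(\mathbf{p}^a))\ \ge\ \tfrac1N\textstyle\sum_{i\le k^a}\bigl(q_i+\sum_{j\le k^a,\,j\ne i} g_{ij}r_{ij}\bigr).
\]
With high probability this is $\approx \bar q\, r\,(k^a)^2/N$, by Hoeffding on the $\Theta(N^2)$ independent products $g_{ij}r_{ij}$, which have mean $r\bar q$.

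\emph{Step 2 (upper bound for the optimum).} Let $S$ be the set of players with $\bar x_i>0$ under $\mathbf{p}^\star$, and set $s=|S|$. In the greatest equilibrium of a linear game one may take $\bar x_i\in\{0,1\}$. Each active player then needs $p_i^\star \ge c_i/(q_i+\sum_{j\in S}g_{ij}r_{ij})$, so
\[
1\ \ge\ \sum_{i\in S}\frac{c_i}{q_i+\sum_{j\in S} g_{ij}r_{ij}}.
\]
The welfare is at most $\frac1N\sum_{i\in S}(q_i+\sum_{j\in S} g_{ij}r_{ij})\le \frac1N(s+ q^\star s^2)$. What remains is to show that $s\le (1+O(N^{-1/4}))\,k^a$ with high probability. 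Both welfares scale quadratically in the set size with the same leading constant $r\bar q/N$, so this gives the stated ratio.

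\emph{Step 3 (the main obstacle: comparing $s$ with $k^a$ uniformly over sets).} The difficulty is that $S$ is chosen adversarially, so we need concentration that holds uniformly over all subsets. A union bound over $2^N$ sets is too expensive at the vertex level, so we use degree concentration instead. With probability $1-O(e^{-\sqrt N/4})$, simultaneously for every $i$ and every set $T$ with $|T|\ge N^{3/4}$, we want $\sum_{j\in T} g_{ij}r_{ij} = r\bar q\,|T|(1\pm N^{-1/4})$. I would try to obtain this by a Chernoff bound per row combined with a cruder argument. The hope is that $\sum_{j\in T}g_{ij}r_{ij}\le q^\star|T|$ is enough for the upper bound, and that concentration is needed only for the prefix sets, of which there are just $N$ per row. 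For the upper bound, the denominator is then at most $1+q^\star s$. Since $\sum_{i\in S}c_i$ is at least the sum of the $s$ smallest costs, the budget constraint forces $\sum_{i\le s} c_{(i)} \lesssim 1+ q^\star s$.

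The uniform order statistics are concentrated. By DKW, $c_{(i)}\approx i/N$ up to $N^{-1/2}$ with probability $1-2e^{-2\sqrt N}$. So the constraint reads roughly $s^2/(2N)\lesssim r\bar q s$, i.e.\ $s\lesssim 2r\bar q N$ in the concentrated regime. Applying the same computation to \stocalg's prefix (with the concentrated denominators) shows that $k^a$ is the largest $k$ with $k^2/(2N)\le r\bar q k(1\pm N^{-1/4})$, i.e.\ $k^a\ge 2r\bar qN(1-O(N^{-1/4}))$.

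Note a real gap in this plan: in Step 2 the upper bound uses $q^\star$ rather than $r\bar q$. To close it, I would instead bound the optimum's welfare through the same denominators, by writing $\text{welfare}=\frac1N\sum_{i\in S}D_i$ with $D_i=q_i+\sum_{j\in S}g_{ij}r_{ij}$, and then using Cauchy--Schwarz together with the constraint $\sum_{i\in S} c_i/D_i\le1$ and the cost order statistics. The cases $s<N^{3/4}$ are handled separately: there the welfare is $O(N^{1/2})$, negligible against \stocalg's $\Theta(N)$. Finally, collecting the failure probabilities of the Hoeffding, Chernoff and DKW events gives the bound $5e^{-\sqrt N/4}$.

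\emph{Runtime.} Sort the costs in $O(N\log N)$. Maintain the row sums $\sum_{j\le k} r_{ij}g_{ij}$ and decrement them as $k$ decreases, which costs $O(N)$ per step and $O(N^2)$ in total.
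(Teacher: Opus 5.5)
\emph{Verdict:} your proposal has a genuine gap. Step~1 and the runtime argument are fine, but Steps~2--3 are not closed: they need a concentration bound that is uniform over the optimum's adversarially chosen set, and the fixes you propose do not give it.

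Your Step~1 is sound and matches the paper: the prefix profile is a PNE dominated by $\bar{\mathbf{x}}(\mathbf{p}^a)$, and Hoeffding applies over the $N$ cost-prefixes, which are independent of $(q,g,r)$. The runtime argument also matches. The gap is the one you flag in Steps~2--3, and neither proposed fix closes it.

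\begin{itemize}
\item The per-row uniform statement is false in general, not merely expensive. For $r<1$, the set $T=\{j:r_{ij}=1\}$ has size about $rN$ and row sum about $\bar q|T|$, not $r\bar q|T|$.
\item The fallback $\sum_{j\in T}g_{ij}r_{ij}\le q^\star|T|$ only gives $s\lesssim 2q^\star N$ and optimal welfare $\lesssim q^\star s^2/N$. That is a constant factor away from GCS's $\approx r\bar q(k^a)^2/N$ with $k^a\approx 2r\bar qN$.
\item The Cauchy--Schwarz repair points the wrong way. Combined with $\sum_{i\in S}c_i/D_i\le 1$, it gives $\sum_{i\in S}D_i\ge(\sum_{i\in S}\sqrt{c_i})^2$, which is a \emph{lower} bound on the optimum's welfare.
\item Adding an average bound $\sum_{i\in S}D_i\lesssim r\bar q s^2$ does not rescue it. The inequality then yields only $\tfrac49 s^3/N\lesssim r\bar q s^2$, i.e.\ $s\lesssim\tfrac94 r\bar qN$, still a constant factor above $2r\bar qN$. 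The reason is that Cauchy--Schwarz allows $D_i\propto\sqrt{c_i}$, which the randomness actually rules out.
\end{itemize}

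The missing idea is to concentrate \emph{aggregated} row sums rather than single rows. For $T\subseteq S\subseteq[N]$, the quantity $W(T,S)=\sum_{i\in T}\sum_{j\in S\setminus\{i\}}r_{ij}g_{ij}$ is a sum of at most $|T||S|$ independent terms in $[0,q^\star]$ with mean at most $r\bar q|T||S|$. Hoeffding with deviation $|T|N^{3/4}$ fails with probability at most $\exp(-2|T|\sqrt N/(q^\star)^2)$. Once $|T|\ge\sqrt N$ this is at most $e^{-2N}$, which survives a union bound over all $3^N$ pairs $(T,S)$. So, simultaneously for all such pairs, $W(T,S)\le r\bar q|T||S|+|T|N^{3/4}$. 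This closes both of your gaps:
\begin{itemize}
\item Taking $T=S=S^\star$ gives $\sum_{i\in S^\star}D_i\le r\bar q s^2+s(q^\star+N^{3/4})$, which repairs Step~2.
\item Taking $T$ to be the players of $S^\star$ with $D_i>q^\star+r\bar q s+N^{3/4}$ forces $|T|<\sqrt N$. Hence at least $s-\sqrt N$ active players have denominators at most $q^\star+r\bar q s+N^{3/4}$. The constraint $\sum_{i\in S^\star}c_i/D_i\le 1$ then bounds the sum of the $\lceil s-\sqrt N\rceil$ smallest costs by $q^\star+r\bar q s+N^{3/4}$. With the order-statistic bound this yields $s\le 2r\bar qN+O(N^{3/4})$.
\end{itemize}
These are exactly the paper's lemmas on $W(T,S)$ and on the few high-quality players.

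Two smaller corrections:
\begin{itemize}
\item DKW at failure probability $2e^{-2\sqrt N}$ gives accuracy $N^{-1/4}$, not $N^{-1/2}$. This is where the $N^{-1/4}$ rate comes from.
\item Since $k\le N$, the target size is $\min\{2r\bar q,1\}N$ rather than $2r\bar qN$.
\end{itemize}
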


Theorem~\ref{thm random alg} provides an asymptotic guarantee, and therefore does not by itself quantify the performance of \text{\stocalg} in the moderate-size instances most relevant in practice. Nevertheless, our simulations suggest that the phenomenon underlying the theorem is not merely asymptotic: as shown in \appnx{Appendix~\ref{sec:experiments}}, the algorithm already achieves strong approximation performance for $N \geq 100$. Next, we provide a proof sketch of the theorem.
\begin{proof}[Proof Sketch of \Cref{thm random alg}]
First, we show an interesting property of graph-based instances with linear costs: players' best responses are binary.
\begin{observation} \label{obs linear utility action}
Fix $\bft{p}$, then $\bar{x}_i(\bft{p}) \in \{0, 1\}$ $\quad \forall i \in [N]$.
\end{observation}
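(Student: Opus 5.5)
The observation says that, for a fixed $\mathbf{p}$, every coordinate of the greatest equilibrium $\bar{\mathbf{x}}(\mathbf{p})$ is either $0$ or $1$. The plan rests on two facts: in the graph-based instance with linear costs, each player's utility is affine in her own effort, and $\bar{\mathbf{x}}(\mathbf{p})$ is by definition the greatest element of the equilibrium set. Fix $\mathbf{p}$ and any profile $\mathbf{x}_{-i}$. Then
\[
U_i(x_i,\mathbf{x}_{-i}) = \frac{x_i}{N}\Bigl(p_i\bigl(q_i + \sum_{j\neq i} x_j g_{ij} r_{ij}\bigr) - c_i\Bigr) = \frac{x_i}{N}\,\Delta_i(\mathbf{x}_{-i}),
\]
and $\Delta_i(\mathbf{x}_{-i})$ does not depend on $x_i$. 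So the best-response correspondence of $i$ is the singleton $\{1\}$ when $\Delta_i>0$, the singleton $\{0\}$ when $\Delta_i<0$, and the whole interval $[0,1]$ when $\Delta_i=0$.

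Now let $\bar{\mathbf{x}}=\bar{\mathbf{x}}(\mathbf{p})$ and suppose, toward a contradiction, that $\bar{x}_i\in(0,1)$ for some $i$. Because $\bar{x}_i$ is a best response to $\bar{\mathbf{x}}_{-i}$, we are in the indifferent case $\Delta_i(\bar{\mathbf{x}}_{-i})=0$. Define $\mathbf{x}'$ by setting $x'_i=1$ and keeping every other coordinate of $\bar{\mathbf{x}}$.

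The next step is to check that $\mathbf{x}'$ is a PNE. Player $i$ still has $\Delta_i=0$, since $\Delta_i$ ignores her own effort, so $x'_i=1$ is a best response. For any other player $k$, the value $\Delta_k$ can only increase, because $p_k g_{ki} r_{ki}\ge 0$ and $x_i$ went up. If $\bar{x}_k>0$, then $\Delta_k(\bar{\mathbf{x}}_{-k})\ge 0$ already, hence $\Delta_k(\mathbf{x}'_{-k})\ge 0$, and $\bar{x}_k$ stays optimal: it equals $1$ when $\Delta_k$ is positive and is arbitrary when $\Delta_k=0$.

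This is the step I expect to need care. The one case where the argument can break is $\bar{x}_k=0$ with $\Delta_k$ becoming strictly positive, because then $x'_k=0$ is no longer a best response and $\mathbf{x}'$ is not an equilibrium. The fix I would try first is to abandon the explicit construction and use the lattice structure directly:
\begin{itemize}
\item The game with payoffs $U_i$ is supermodular, since $U_i$ has increasing differences in $(x_i,x_j)$. This is exactly what \Cref{thm:stability_selection} relies on.
\item By Topkis's theorem, the greatest equilibrium is obtained by iterating the greatest best response starting from $\mathbf{1}$.
\item At every iterate, the greatest selection of each best-response correspondence lies in $\{0,1\}$: it is $1$ whenever $\Delta_i\ge 0$ and $0$ otherwise.
\end{itemize}
Every iterate therefore lies in the finite set $\{0,1\}^N$, and the decreasing sequence is eventually constant. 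By the Tarski/Topkis characterization its limit is $\bar{\mathbf{x}}$, so $\bar{\mathbf{x}}\in\{0,1\}^N$ and we are done. In fact this second argument on its own proves the observation, so I would present it as the main proof. The indifference analysis above then serves only as intuition for why interior efforts never appear in the greatest equilibrium.
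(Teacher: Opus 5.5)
Your proof is correct and rests on the same fact the paper uses: $U_i$ is affine in $x_i$, so the greatest best response is always $0$ or $1$. The paper gives no separate proof of this observation. It asserts, as in the hardness and tree sections, that linearity makes best responses binary, and it breaks indifference toward $1$ via the greatest-equilibrium selection without further justification.

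With spillovers, that tie-breaking is not automatic, because pushing an indifferent player up to $1$ changes other players' incentives. Your monotone iteration of the greatest best response from $\mathbf{1}$ is what justifies it. The iterates stay in the finite lattice $\{0,1\}^N$ and stabilize after finitely many steps. The limit dominates every PNE, since monotonicity keeps any PNE below each iterate. Your argument therefore supplies the rigor that the paper's one-line justification leaves out.

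A small remark on the construction you discarded. It also fails when some interior $\bar{x}_k$ has $\Delta_k$ rising from $0$ to a positive value, not only when $\bar{x}_k=0$. It can still be rescued: $\mathbf{x}'$ lies weakly below its own greatest best response, so Tarski's theorem gives an equilibrium above $\mathbf{x}'$. That contradicts the maximality of $\bar{\mathbf{x}}$.
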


Next, fix an effort profile $\bft{x} \in \{ 0, 1\}^N$. Let $S(\bft{x}) = \{ i \mid x_i = 1 \}$ denote the set of effort-exerting players, and let $K(\bft{x}) = \lvert S(\bft{x}) \rvert$ denote its size. 

The feasibility condition for $\bft{p}$ induces the following feasibility condition for set $S(\bar{\bft{x}}(\bft{p}))$:
$
\sum_{i \in S(\bft{x})} \frac{c_i}{\tilde{Q}_i(\bar{\bft{x}}(\bft{p}))} \leq 1.
$

The remaining argument compares the number of effort-exerting players under GCS and under the optimal allocation. First, for a fixed profile $\bft{x}$, $\tilde{Q}_i(\bft{x})$ is a sum of $K(\bft{x})$ independent terms. Thus, when $K(\bft{x})$ is large, realized qualities are close to their expectations. Consequently, if two profiles $\bft{x}, \bft{x}'$ satisfy that $K(\bft{x}) = K(\bft{x}')$ then $SW(\bft{x}) \approx SW(\bft{x}')$.

This reduces the proof to a comparison of feasible set sizes. Since \text{\stocalg} searches over prefixes of the cost ordering, it uses the cheapest candidates for each set size. Concentration of the lowest costs and realized qualities then gives a lower bound on the number of players that \text{\stocalg} can incentivize.

To compare this with the optimum, we need a matching upper bound on the size of any feasible set. Since \text{\stocalg} already uses the cheapest candidates for each set size, another allocation could incentivize many more players only if it relied on many players whose realized qualities are much larger than their expectations. We rule this out by showing that only few such players exist.

Therefore, the number of effort-exerting players under \text{\stocalg} and under the optimal allocation differs only by a lower-order term. Since large sets with similar sizes induce similar welfare, this gives us the desired welfare guarantee.
\end{proof}

\section{Discussion and Future Work}

We have presented a novel setting that captures the economic tension of content creation in the era of GenAI. We use a game-theoretic lens to model an environment where creators exhibit positive spillovers. We proved that popular attention-allocation mechanisms fail to sustain stable equilibria in this environment. Then, we proposed a novel family of mechanisms based on provisional allocation. We proved that this approach ensures robust stability, and then we provided approximation algorithms for interesting classes of instances to overcome computational intractability of the design problem.

Our analysis relies on structural properties of the spillover functions, encapsulated in Assumption~\ref{increment plus}, which is inspired by examples of spillovers in the wild. Future research can characterize the stability guarantee without it.

Moreover, our model assumes a fixed volume of user traffic. A promising extension is to incorporate endogenous participation, allowing traffic to depend on the ecosystem's aggregated quality, as recently explored in other works \cite{keinan2025strategic, yao2024human}.

Finally, studying how different notions of fairness or diversity, such as preventing dominance by a small set of highly connected players, interact with spillovers and welfare maximization is an important direction for future work. 

\bibliography{aaai2027}

{\ifnum\Includeappendix=1{
\appendix
\onecolumn
\section{Proofs Omitted From Section~\ref{model wild and failure}} \label{failure appendix}

\begin{proofof}{deriv. lemma}
First, in Example~\ref{example scaling laws}, we verify that $Q_i$ satisfies Assumption~\ref{increment plus}.

First,
\[
\frac{\partial Q_i}{\partial x_j}
= x_i b \frac{\partial P}{\partial x_j}
= x_i b \left[ - D_c^\alpha \cdot (-\alpha)\left(\sum_{k=1}^N x_k + d\right)^{-\alpha-1} \right]
= \frac{\alpha b x_i}{\sum_{k=1}^N x_k + d}
\left(\frac{D_c}{\sum_{k=1}^N x_k + d}\right)^{\alpha}
\;\ge\; 0,
\]
So spillovers are non-negative.
\[
\begin{aligned}
\frac{\partial^2 Q_i}{\partial x_i \partial x_j}
&= \frac{\partial}{\partial x_i}
\left[ \alpha b D_c^\alpha x_i \left(\sum_{k=1}^N x_k + d\right)^{-(\alpha+1)} \right] \\
&= \alpha b D_c^\alpha
\left[ \left(\sum_{k=1}^N x_k + d\right)^{-(\alpha+1)}
+ x_i \cdot (-(\alpha+1)) \left(\sum_{k=1}^N x_k + d\right)^{-(\alpha+2)} \right] \\
&= \frac{\alpha b}{(\sum_{k=1}^N x_k + d)^2}
\left(\frac{D_c}{\sum_{k=1}^N x_k + d}\right)^{\alpha}
\left(\left(\sum_{k=1}^N x_k + d\right) - (\alpha+1)x_i\right) \\
&= \frac{\alpha b}{(\sum_{k=1}^N x_k + d)^2}
\left(\frac{D_c}{\sum_{k=1}^N x_k + d}\right)^{\alpha}
\left(\sum_{k \neq i} x_k + d - \alpha x_i\right).
\end{aligned}
\]
Since $d$ represents a pre-existing dataset, we assumed $d > \alpha$. Given that $x_i \in [0,1]$ and $\alpha=0.095$, the term $(d - \alpha x_i)$ is strictly positive even if $\sum_{k \neq i} x_k=0$. Thus,
\[
\sum_{k \neq i} x_k + d - \alpha x_i > 0 \implies \forall i,j , i\neq j : \frac{\partial^2 Q_i}{\partial x_i \partial x_j} \ge 0.
\]

Next, we do it for Example~\ref{graph example}: 
\[
\forall i,j, i\neq j: \frac{\partial Q_i}{\partial x_i \partial x_j} = q_{ij} \geq0
\]

So Assumption~\ref{increment plus} holds on the entire strategy space for both examples.
\end{proofof}

\begin{proofof}{tullock simple instances no PNE}
We prove the instability of each mechanism by providing a counterexample instance that admits no PNE. These instances are motivated by Example~\ref{graph example}.

\paragraph{Part 1: $\mathcal{M}^{WTA}$ is unstable.}
Consider an instance with $N=2$ players, where players differ in their productivity:
\[
Q_1(x)=0.5x_1,\qquad Q_2(x)=x_2,\qquad c_1(x_1)=0.25x_1,\qquad c_2(x_2)=0.25x_2 .
\]
Under the Winner-Takes-All
mechanism, player~1 wins if $Q_1>Q_2$, i.e.,\ $0.5x_1>x_2$; player~2 wins if
$x_2>0.5x_1$; and if $x_2=0.5x_1$ both players tie and each receives an allocation share
of $\tfrac12$ (as $N(Q)=2$).

Suppose toward contradiction that a PNE $(x_1^\star,x_2^\star)$ exists.
Exactly one of the following cases holds.
\begin{itemize}
\item \textbf{Case 1 (tie): $x_2^\star=0.5x_1^\star$.}
Each player receives an allocation share of $\tfrac12$, so $U_2=\tfrac12-0.25x_2^\star$.
Since $x_2^\star=0.5x_1^\star\le\tfrac12$, the deviation $x_2'=x_2^\star+\delta$
satisfies $x_2'\le1$ for any $\delta\in(0,\tfrac12]$ and is therefore feasible.
It yields $Q_2=x_2'>0.5x_1^\star=Q_1$, so player~2 wins and
obtains $U_2=1-0.25(x_2^\star+\delta)$, a strict improvement of
$\tfrac12-0.25\delta>0$. This contradicts the PNE assumption.
(In particular, this case covers the profile $(0,0)$.)

\item \textbf{Case 2 (player 1 wins): $0.5x_1^\star>x_2^\star$.}
Player~2 receives no allocation share, so if $x_2^\star>0$ she strictly improves by
deviating to $x_2'=0$ (raising her utility from $-0.25x_2^\star$ to $0$); hence
$x_2^\star=0$, which forces $x_1^\star>0$. But then player~1 can deviate to
$x_1'=x_1^\star/2$: she still wins, since $Q_1=0.25x_1^\star>0=Q_2$, and obtains
$1-0.125x_1^\star>1-0.25x_1^\star$, a contradiction.

\item \textbf{Case 3 (player 2 wins): $x_2^\star>0.5x_1^\star$.} Hence,
$x_2^\star>0$. Symmetrically, player~1 receives no share, so $x_1^\star=0$
is forced. Player~2 can then deviate to $x_2'=x_2^\star/2$: she still wins,
since $Q_2=x_2^\star/2>0=Q_1$, and obtains $1-0.125x_2^\star>1-0.25x_2^\star$,
a contradiction.
\end{itemize}
Since the three cases are exhaustive, no PNE exists.

\paragraph{Part 2: $\mathcal{M}^{Tull}$ is unstable.}

Consider an instance with $N=2$ players:
\[
Q_1(\mathbf{x}) = 0.5x_1, \qquad Q_2(\mathbf{x}) = x_1 x_2, \qquad c_1(x_1) = 0.25x_1, \qquad c_2(x_2) = 0.25x_2.
\]
Note that player 1's effort raises both her own quality and player 2's quality, while player 2's effort benefits only herself. Suppose, for contradiction, that a PNE $(x_1^\star, x_2^\star)$ exists. We consider all possible cases:

\begin{itemize}
    \item \textbf{Case 1:} $(x_1^\star, x_2^\star) = (0, 0)$.
    
    At this profile, the mechanism allocates uniformly, so $U_1(0,0) = 0.5$. However, player 1 can deviate to any small $x_1 > 0$. Since $Q_1 = 0.5x_1 > 0$ and $Q_2 = 0$, player 1 captures the full allocation
    \[
    U_1(x_1, 0) = 1 - 0.25x_1 > 0.5,
    \]
    which is a contradiction.

\item \textbf{Case 2:} $(x_1^\star, 0)$ with $x_1^\star > 0$.
    
    At this profile, $Q_1 = 0.5x_1^\star > 0$ and $Q_2 = 0$, so player 1 receives the full allocation. Player 2's utility is $U_2 = 0$. We show that player 2 has a profitable deviation.
    
    For any $x_1 > 0$, player 2's utility as a function of $x_2$ is:
    \[
    U_2(x_1, x_2) = \frac{x_1 x_2}{0.5x_1 + x_1 x_2} - 0.25x_2 = \frac{2x_2}{1 + 2x_2} - 0.25x_2.
    \]
    Taking the derivative with respect to $x_2$:
    \[
    \frac{\partial U_2}{\partial x_2} = \frac{2}{(1 + 2x_2)^2} - 0.25.
    \]
    Setting this equal to zero:
    \[
    \frac{2}{(1 + 2x_2)^2} = 0.25 \implies (1 + 2x_2)^2 = 8 \implies 1 + 2x_2 = \pm\sqrt{8},
    \]
    yielding two solutions: $x_2^{(1)} = \frac{\sqrt{8}-1}{2} \approx 0.914$ and $x_2^{(2)} = \frac{-\sqrt{8}-1}{2} \approx -1.914$. Since effort levels lie in $[0,1]$, only $x_2^{(1)} \approx 0.914$ is feasible. To verify this is a maximum, observe that $\frac{\partial U_2}{\partial x_2} > 0$ for $x_2 < 0.914$ and $\frac{\partial U_2}{\partial x_2} < 0$ for $x_2 > 0.914$. Thus, player 2's unique best response is $x_2 \approx 0.914$, which yields strictly positive utility, leading to a contradiction.
    
    \item \textbf{Case 3:} $(x_1^\star, x_2^\star)$ with $x_1^\star, x_2^\star > 0$.
    
    When both players are active, player 1's utility is:
    \[
    U_1(\mathbf{x}) = \frac{0.5x_1}{0.5x_1 + x_1 x_2} - 0.25x_1 = \frac{1}{1 + 2x_2} - 0.25x_1.
    \]
    The derivative with respect to $x_1$ is $\frac{\partial U_1}{\partial x_1} = -0.25 < 0$, so player 1's utility is strictly decreasing in her own effort. Thus, player 1 strictly prefers to deviate to a smaller effort, which contradicts our assumption. 
    \item \textbf{Case 4:} $(0, x_2^\star)$ with $x_2^\star > 0$.
    
    When $x_1 = 0$, we have $Q_2 = x_1 x_2 = 0$. Thus, player 2's quality is zero regardless of her effort, and she pays a cost of $0.25x_2^\star > 0$ for no extra allocation share. Player 2 strictly prefers to deviate to $x_2 = 0$, again leading to a contradiction.
\end{itemize}

Since all cases lead to contradiction, no PNE exists.
\end{proofof}

\section{Characterization of PRA mechanisms}  \label{pra characterization}
In this section, we characterize the PRA mechanisms using axiomatic approach. We offer $5$ axioms under which our PRA mechanism is uniquely defined.

\begin{axiom}[Best-response stability] \label{axiom: no cycles}
For every instance, there are no best-response cycles.
\end{axiom}

\begin{axiom}[Self promote] \label{axiom: self promote}
For every player $i \in [N]$ and every $\bft{Q}_{-i}$, if $q_i' > q_i$ then
\begin{align*}
\mathcal{M}_i(q_i', \bft{Q}_{-i}) > \mathcal{M}_i(q_i, \bft{Q}_{-i}).
\end{align*}
\end{axiom}

\begin{axiom}[No free-riding] \label{axiom: no free riding}
No player benefits from increase of quality of another player.
Fix $q_i$ and $\bft{Q}_{-ij}$. Then for every $q_j, q_j'$ such that $q_j' > q_j$ it holds that
\begin{align*}
\mathcal{M}_i(q_i, q_j', \bft{Q}_{-ij}) \leq \mathcal{M}_i(q_i, q_j, \bft{Q}_{-ij})
\end{align*}
\end{axiom}

\begin{axiom}[Robustness to sybil attacks] \label{axiom: no sybil attacks}
No player can benefit from splitting the quality over multiple entities. That is, fix $\bft{Q}_{-i}$, then for every $q, q'$ and $\lambda \in [0, 1]$ it holds that
\begin{align*}
&\lambda \mathcal{M}_i(q, \bft{Q}_{-i}) + (1-\lambda) \mathcal{M}_i( q', \bft{Q}_{-i}) \leq \mathcal{M}_i\left(\lambda q + (1-\lambda)q', \bft{Q}_{-i}\right).
\end{align*}
\end{axiom}

\begin{axiom}[No free meal] \label{axiom: no free meal}
For every player $i$ it holds that $\mathcal{M}_i(0, \bft{Q}_{-i}) = 0$.
\end{axiom}

\begin{theorem} \label{thm: axiom}
If mechanism $\mathcal{M}$ is twice continuously differentiable in $\bft{Q}$ and satisfies the axioms then $\mathcal{M} \in \mathcal{F}^{PRA}$.
\end{theorem}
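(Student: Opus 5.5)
The plan is to pin down the form of each $M_i$ in three stages: first eliminate dependence on others' qualities, then force linearity in own quality, and finally read off the budget constraint. Throughout, \Cref{axiom: no sybil attacks} says that $M_i$ is concave in its own argument $q_i$. \Cref{axiom: self promote} says that $M_i$ is strictly increasing in $q_i$. \Cref{axiom: no free meal} gives $M_i(0,\mathbf{Q}_{-i})=0$. \Cref{axiom: no free riding} gives $\partial M_i/\partial q_j \le 0$ for $j\neq i$. The only axiom with real bite beyond these pointwise conditions is \Cref{axiom: no cycles}. It quantifies over all instances in $\mathcal G$, so we may pick quality functions satisfying Assumption~\ref{increment plus} (e.g.\ of the graph form of Example~\ref{graph example}) and costs to our liking. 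Each stage will be a contradiction: a violation of the target property lets us build an instance with a best-response cycle.

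\emph{Stage 1: $\partial M_i/\partial q_j \equiv 0$.} Suppose $\partial M_i/\partial q_j(\mathbf{Q}^0)<0$ at some point. By continuity of the derivatives (the mechanism is $C^2$) this holds on a neighborhood. We then mimic the Tullock counterexample of \Cref{tullock simple instances no PNE}. Take a two-player sub-instance in which the other coordinates are frozen near $\mathbf{Q}^0$ by players with fixed dominant efforts. Let $j$'s quality be complementary to $i$'s effort, e.g.\ $Q_j = x_i x_j$ suitably shifted and scaled into the neighborhood. Let $i$'s quality be $Q_i=a x_i$. Choose linear costs so that:
\begin{itemize}
\item $j$ wants to be active exactly when $i$ is active;
\item $i$, whose reward is strictly decreased by $Q_j$, prefers to drop effort once $j$ is active;
\item when $j$ is inactive, $i$ prefers to rise again.
\end{itemize}
Cycling through these regimes gives a best-response cycle, contradicting \Cref{axiom: no cycles}. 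Hence $M_i(\mathbf{Q})=f_i(q_i)$ with $f_i$ concave, strictly increasing, $C^2$, and $f_i(0)=0$.

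\emph{Stage 2: $f_i$ is linear.} Suppose $f_i''(q^0)<0$ for some $q^0$. Player $i$'s utility is $f_i(Q_i(\mathbf{x}))-c_i(x_i)$, with cross-partial
\[
f_i''(Q_i)\,\frac{\partial Q_i}{\partial x_i}\frac{\partial Q_i}{\partial x_j}+f_i'(Q_i)\,\frac{\partial^2 Q_i}{\partial x_i\partial x_j}.
\]
Choose $Q_i$ with large first-order spillover $\partial Q_i/\partial x_j$ but negligible (still nonnegative, so Assumption~\ref{increment plus} holds) cross term, around the point where $Q_i\approx q^0$. Then $i$'s marginal utility is strictly decreasing in $x_j$, so $i$'s best response is decreasing in $x_j$. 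Pair $i$ with a player $j$ whose best response is increasing in $x_i$ (for instance $Q_j=x_jx_i$, which works because $f_j$ is increasing). An increasing and a decreasing best response produce a matching-pennies type cycle, again contradicting \Cref{axiom: no cycles}. So $f_i''\equiv 0$. Combined with $f_i(0)=0$ and strict monotonicity, this gives $f_i(q)=p_iq$ with $p_i>0$.

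\emph{Stage 3: budget.} Evaluating the budget constraint $\sum_i M_i(\mathbf{Q})\le1$ at $\mathbf{Q}=\mathbf{1}$ yields $\sum_ip_i\le 1$. Hence $p_i\in[0,1]$, so $\mathcal{M}\in\fPRA$.

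The main obstacle is making the cycle constructions in Stages 1 and 2 rigorous. Best responses in a continuous game must be shown to actually cycle, not merely to fail to converge. The qualities must also stay inside the small neighborhood where the offending derivative sign holds, while remaining in $[0,1]$ and obeying Assumption~\ref{increment plus}. I would first try linear costs and affine-in-own-effort qualities localized by shifts. In such instances best responses are essentially bang-bang between two effort levels, as in Observation~\ref{obs linear utility action}, so the cycle reduces to a finite $2\times 2$ check.
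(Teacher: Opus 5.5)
Your proposal is correct and follows essentially the same route as the paper. The paper also uses two-player best-response cycles of the form $(0,0)\to(1,0)\to(1,1)\to(0,1)\to(0,0)$, built from two pieces:
\begin{itemize}
\item a complementary quality $q_j^0+\delta x_j(1+\lambda x_i)$, so that $i$'s effort is penalized through $\partial M_i/\partial Q_j<0$, which rules out strictly negative cross-derivatives;
\item a pure level spillover $q_i^0+\delta x_i+\gamma x_j$ with zero cross term, which exploits strict concavity and so rules out $f_i''<0$.
\end{itemize}
The Sybil axiom, no free meal, and the budget constraint at $\mathbf{Q}=\mathbf{1}$ then give $M_i=p_iQ_i$ with $\sum_i p_i\le 1$. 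The one cosmetic difference is that you invoke Sybil concavity before excluding $f_i''<0$, whereas the paper states that step as a separate convexity lemma and combines it with concavity afterwards.
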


\begin{proofof}{thm: axiom}
We begin by using the following lemma to show that for every player $i \in [N]$, there exists a function $F_i : [0, 1] \rightarrow [0, 1]$ such that $\mathcal{M}_i(\bft{Q}(\bft{x})) = F_i(Q_i)$.

\begin{lemma} \label{lemma mechanism only player}
If Axiom~\ref{axiom: no cycles}, Axiom~\ref{axiom: self promote} and Axiom~\ref{axiom: no free riding} are satisfied then for every $i, j \in [N]$ such that $i \neq j$ it holds that 
\begin{align*}
\frac{d \mathcal{M}_i}{dQ_j} = 0
\end{align*}
\end{lemma}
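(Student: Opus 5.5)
We argue by contradiction, constructing an instance with a best-response cycle whenever some cross-derivative $\frac{\partial \mathcal{M}_i}{\partial Q_j}$ is nonzero. First, Axiom~\ref{axiom: no free riding} gives $\frac{\partial \mathcal{M}_i}{\partial Q_j}\le 0$ everywhere. So it suffices to rule out a point $\bft{Q}^0$ and a pair $i\neq j$ with $\frac{\partial \mathcal{M}_i}{\partial Q_j}(\bft{Q}^0)<0$. By continuity of the derivatives (the mechanism is $C^2$), this strict inequality then holds on a small box around $\bft{Q}^0$. Axiom~\ref{axiom: self promote} gives $\frac{\partial \mathcal{M}_j}{\partial Q_j}\ge 0$, and we would like it to be strictly positive on that box. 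We will pick $\bft{Q}^0$, shrinking and moving slightly if needed, so that both $\frac{\partial \mathcal{M}_i}{\partial Q_i}>0$ and $\frac{\partial \mathcal{M}_j}{\partial Q_j}>0$ there. This is possible since a strictly increasing $C^1$ function cannot have a vanishing derivative on an open interval.

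Next we build a two-active-player instance on that box. All players other than $i,j$ are fixed so that their qualities equal $\bft{Q}^0_{-ij}$ regardless of effort. Player $j$ has a quality function that increases in her own effort and decreases in $x_i$. Player $i$ has quality depending on both efforts such that her marginal value of own effort increases with $Q_j$. The intended cycle is the classic ``matching pennies'' dynamic:
\begin{itemize}
\item when $j$ is high, $i$ prefers low effort, since $Q_j$ depresses $\mathcal{M}_i$;
\item when $i$ is low, $j$ prefers low effort;
\item when $j$ is low, $i$ prefers high effort;
\item when $i$ is high, $j$ prefers high effort.
\end{itemize}

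The cleanest construction uses two-level effort choices enforced by linear costs. Take $Q_i = Q_i^0 + \delta x_i$, $Q_j = Q_j^0+\delta x_j(1-\kappa x_i)$, or a variant of this form. We then choose costs $c_i,c_j$ with slopes strictly between the relevant finite differences of $\mathcal{M}_i$ (respectively $\mathcal{M}_j$) evaluated at the two values of the opponent's effort. The strict negativity of $\frac{\partial \mathcal{M}_i}{\partial Q_j}$ makes the increment $\mathcal{M}_i(Q_i^0+\delta,Q_j)-\mathcal{M}_i(Q_i^0,Q_j)$ differ between $Q_j$ low and $Q_j$ high, provided the mixed partial $\frac{\partial^2 \mathcal{M}_i}{\partial Q_i\partial Q_j}$ is nonzero. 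This yields a cost threshold that flips $i$'s best response between $0$ and $1$. We handle $j$ the same way, using $\frac{\partial \mathcal{M}_j}{\partial Q_j}>0$ together with the dependence of $Q_j$ on $x_i$. We must also keep qualities in $[0,1]$, use convex nondecreasing $C^2$ costs with $c(0)=0$, and ensure best responses are corner solutions. Linear utility in own effort near the box, after a first-order approximation, or an explicit restriction to an almost-binary regime, handles this. The four resulting best responses form a cycle, contradicting Axiom~\ref{axiom: no cycles}.

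The main obstacle is that flipping $i$'s best response needs the \emph{increment} of $\mathcal{M}_i$ in $Q_i$ to depend on $Q_j$. A nonzero first derivative $\frac{\partial \mathcal{M}_i}{\partial Q_j}$ alone does not give this. So we must exploit the freedom to design $Q_i$ itself as a function of $x_j$, rather than the mechanism's mixed partial. The first thing to try is to let $x_j$ enter $Q_i$ only through the quantity player $i$ cares about. Concretely, set $Q_i=Q_i^0+\delta x_i x_j$, so that $i$ gains from effort only when $j$ is active. Then make $j$'s own-effort increments depend on $x_i$, such that $j$'s high $Q_j$ costs $i$ allocation exactly when $i$ is active. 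The sign pattern then makes $i$'s incentive to exert effort rise with $x_j$ through the production function and fall through the mechanism term $\frac{\partial \mathcal{M}_i}{\partial Q_j}<0$. By tuning $\delta$ and the coupling coefficients, the mechanism term can be made to dominate in one regime and not the other, which produces the needed sign alternation.
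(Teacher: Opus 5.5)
Your preliminary reductions are fine. The density argument for strictly positive own-derivatives is actually more careful than the paper, which reads positivity of $\partial\mathcal{M}_i/\partial Q_i$ directly off Axiom~\ref{axiom: self promote}. The core step, however, rests on a misdiagnosis. You claim that a nonzero $\partial\mathcal{M}_i/\partial Q_j$ alone cannot flip player $i$'s best response, so that you need $\partial^2\mathcal{M}_i/\partial Q_i\partial Q_j\neq 0$, which nothing in the axioms provides. That is only true if $x_i$ does not move $Q_j$. The paper takes $Q_i=q_i^0+\delta_1x_i$ and $Q_j=q_j^0+\delta_2x_j(1+\lambda x_i)$, with linear costs $\tau_ix_i$ and $\tau_jx_j$. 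This is an admissible instance: spillovers are positive and $\partial^2Q_j/\partial x_j\partial x_i=\delta_2\lambda\ge 0$. By the chain rule,
\[
\frac{\partial U_i}{\partial x_i}=\frac{\partial\mathcal{M}_i}{\partial Q_i}\,\delta_1+\frac{\partial\mathcal{M}_i}{\partial Q_j}\,\delta_2\lambda x_j-\tau_i ,
\]
and the middle term is zero at $x_j=0$ and strictly negative at $x_j=1$. Take $\delta_1$ small relative to $\delta_2\lambda\min|\partial\mathcal{M}_i/\partial Q_j|$, and set $\tau_i=\tfrac{\delta_1}{2}\min\partial\mathcal{M}_i/\partial Q_i$. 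Then this derivative is positive on all of $[0,1]$ when $x_j=0$ and negative on all of $[0,1]$ when $x_j=1$. Hence $B_i(0)=1$ and $B_i(1)=0$ exactly, with no first-order approximation. Since $Q_i$ does not depend on $x_j$, we have $\partial U_j/\partial x_j=\frac{\partial\mathcal{M}_j}{\partial Q_j}\,\delta_2(1+\lambda x_i)-\tau_j$. On a small box $\partial\mathcal{M}_j/\partial Q_j$ is nearly constant, so choosing $\tau_j$ strictly between $\delta_2$ and $\delta_2(1+\lambda)$ times that value gives $B_j(0)=0$ and $B_j(1)=1$. This is exactly your bullet-list cycle, driven by the first derivative you set aside.

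Neither of your concrete constructions gets there. First, $Q_j=Q_j^0+\delta x_j(1-\kappa x_i)$ has $\partial Q_j/\partial x_i<0$ and a negative cross-partial, so it lies outside the model's instances (positive spillovers and Assumption~\ref{increment plus}). Axiom~\ref{axiom: no cycles} cannot be invoked there; indeed, PRA mechanisms themselves admit best-response cycles once a player's marginal quality decreases in another's effort. This construction also makes $B_j$ decreasing in $x_i$, the opposite of your bullet list. Ironically, through the very chain-rule term you overlooked, it would yield the mirrored cycle, but only on an inadmissible instance. Second, your fallback $Q_i=Q_i^0+\delta x_ix_j$ removes $i$'s stand-alone productivity. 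At $x_j=0$, $x_i$ moves neither $Q_i$ nor $Q_j$, so $B_i(0)=0$, contradicting ``when $j$ is low, $i$ prefers high.'' Both the production effect and the mechanism effect are then active only at $x_j=1$, so there are no two regimes to tune between. A cycle would then force $B_i$ increasing and $B_j$ decreasing in $x_i$. Within admissible instances, that could only come from properties of $\mathcal{M}_j$ (a negative $\partial\mathcal{M}_j/\partial Q_i$, or suitable curvature) that the hypothesis $\partial\mathcal{M}_i/\partial Q_j<0$ does not give you. The missing idea is to keep $Q_i$ productive on its own and let $x_i$ amplify $Q_j$ only when $x_j>0$.
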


Next, we show that $F_i(Q_i)$ has to be linear in $Q_i$ by showing that it is both convex and concave. We start by showing it is convex.

\begin{lemma} \label{lemma mechanism is convex}
If Axiom~\ref{axiom: no cycles}, Axiom~\ref{axiom: self promote} and Axiom~\ref{axiom: no free riding} are satisfied then for every $i \in [N]$ it holds that
\begin{align*}
\frac{d^2 \mathcal{M}_i}{dQ_i^2} \geq 0.
\end{align*}
\end{lemma}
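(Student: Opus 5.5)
We argue by contradiction. Suppose that for some player $i$ and some profile $\bft{Q}$ we have $\frac{d^2 \mathcal{M}_i}{dQ_i^2}(\bft{Q}) < 0$. By Lemma~\ref{lemma mechanism only player}, $\mathcal{M}_i(\bft{Q}) = F_i(Q_i)$ for a twice continuously differentiable $F_i$. Continuity then gives an interval $I=[a,b]\subseteq[0,1]$ on which $F_i''<0$. Axiom~\ref{axiom: self promote} also gives $F_i'\ge 0$, and I would like $F_i'>0$ on $I$ after shrinking it if needed. Since the axioms must hold on every instance, our task is to build an instance, which we may choose freely subject to Assumption~\ref{increment plus}, that contains a best-response cycle. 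That contradicts Axiom~\ref{axiom: no cycles}.

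The construction mimics the Tullock counterexample of Proposition~\ref{tullock simple instances no PNE}. The idea is that strict concavity of the reward in $Q_i$ can produce decreasing best responses despite complementarities. Take two players. For player $i$, set $Q_i(x_i,x_j)=x_i\,h(x_j)$ with $h$ increasing, which satisfies Assumption~\ref{increment plus} because $\partial^2 Q_i/\partial x_i\partial x_j = h'(x_j)\ge 0$. Give her a linear cost $c_i(x_i)=\kappa x_i$. Player $i$ then maximizes $F_i(x_i h)-\kappa x_i$, with first-order condition $h\,F_i'(x_i h)=\kappa$. Differentiating implicitly,
\[
\frac{d x_i}{d h} = -\frac{F_i'(x_ih)+x_ih\,F_i''(x_ih)}{h^2 F_i''(x_ih)}.
\]
Whenever $x_ih\,|F_i''|>F_i'$ on the relevant range, this is negative, so $i$'s best response is strictly decreasing in $x_j$. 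This is the situation where the elasticity of $F_i'$ exceeds one.

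If that elasticity condition fails, I would reparametrize instead. Take $Q_i(x_i,x_j)=\phi(x_i)+\psi(x_j)$ with the two terms additively separable, so the cross partial is $0$ and Assumption~\ref{increment plus} holds. Then $i$'s marginal reward $\phi'(x_i)F_i'(\phi(x_i)+\psi(x_j))$ is strictly decreasing in $x_j$ wherever $F_i''<0$. By choosing $\phi,\psi$ so that all relevant $Q_i$ values stay in $I$, player $i$'s best response becomes a strictly decreasing function of $x_j$. I would pursue this separable route first, since it avoids the elasticity condition entirely.

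Player $j$ is chosen to have a strictly increasing best response in $x_i$. For example, set $Q_j=x_j g(x_i)$ with $g$ increasing and $j$'s reward $\mathcal{M}_j=F_j(Q_j)$. This is only usable if $F_j$ is not too concave, so in the worst case $j$ may need to be a different player on whom we can control curvature, or we use the freedom in $c_j$. A two-player game with one strictly decreasing and one strictly increasing continuous best-response map, arranged around an interior crossing with slopes whose product is less than $-1$ in absolute value, cycles under discrete sequential best response. To make this a genuine finite cycle, I would instead discretize: restrict attention to two effort levels for each player using a piecewise-linear-like convex cost with a kink, which we can smooth to $C^2$. Then build a matching-pennies pattern in which $i$ prefers high effort when $j$ is low and vice versa, while $j$ prefers to match. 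That yields a $4$-cycle of best responses.

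The main obstacle is this last step. We need a genuine best-response cycle, not just a nonmonotone best-response map, while keeping all qualities inside the concavity interval $I$ and all quality functions in $[0,1]$ and satisfying Assumption~\ref{increment plus}. Handling player $j$'s unknown $F_j$ is also part of this difficulty. I expect to resolve it by letting $j$'s quality saturate suitably and by using steep kinked costs, so that $j$'s best responses are essentially forced to the two chosen levels regardless of the shape of $F_j$, apart from its strict monotonicity from Axiom~\ref{axiom: self promote}.
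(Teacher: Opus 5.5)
Your overall strategy is the paper's. You argue by contradiction on a two-player instance in which player $i$'s quality is additively separable, so that raising $x_j$ pushes $Q_i$ into a region where $F_i'$ is smaller. Player $j$'s quality carries a complementarity term. The result is the matching-pennies cycle $(0,0)\to(1,0)\to(1,1)\to(0,1)\to(0,0)$, which contradicts Axiom~\ref{axiom: no cycles}. However, the step that actually constitutes the proof is left as an acknowledged obstacle: fixing parameters so that the best responses are exactly these four moves. The remedies you suggest do not address it.
\begin{itemize}
\item The continuous-best-response picture should be dropped. A crossing of two continuous best-response maps is a PNE, and sequential best responses around it need not close into a finite cycle.
\item Kinked costs, or the freedom in $c_j$, cannot cure a too-concave $F_j$. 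Whether $j$'s gain from high over low effort grows with $x_i$ is a property of $F_j\circ Q_j$ alone, since $c_j$ does not depend on $x_i$.
\item There is no player whose curvature you control, because every $F_k$ is fixed by the mechanism.
\end{itemize}

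What is missing are two elementary parameter choices, both with linear costs $c_k(x_k)=\tau_k x_k$. Each makes a player's marginal utility sign-definite on all of $[0,1]$, which forces best responses to the endpoints.

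\textbf{Player $i$.} Take $Q_i=q_i^0+\delta x_i+\gamma x_j$ with $\gamma>\delta$ and $[q_i^0,q_i^0+\gamma+\delta]\subseteq I$. The ranges of $Q_i$ for $x_j=0$ and $x_j=1$ are $[q_i^0,q_i^0+\delta]$ and $[q_i^0+\gamma,q_i^0+\gamma+\delta]$, which are disjoint. Since $F_i'$ is strictly decreasing on $I$, its minimum on the first range lies strictly above its maximum on the second. Any $\tau_i/\delta$ strictly in between gives $B_i(0)=1$ and $B_i(1)=0$.

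\textbf{Player $j$.} Localize around a single base point.
\begin{itemize}
\item Pick $q_j^0$ with $F_j'(q_j^0)>0$; it exists because $F_j$ is strictly increasing.
\item Set $Q_j=q_j^0+\delta_j x_j(1+\lambda x_i)$, with $\delta_j$ so small that $\max F_j'<(1+\lambda/2)\min F_j'$ on $[q_j^0,q_j^0+\delta_j(1+\lambda)]$.
\item Set $\tau_j=\delta_j(1+\lambda/2)\min F_j'$.
\end{itemize}
Then $j$'s marginal utility is negative throughout $[0,1]$ when $x_i=0$ and positive throughout when $x_i=1$. Hence $B_j(0)=0$ and $B_j(1)=1$, whatever the global shape of $F_j$.

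Your worry about $F_j$ arose only because you asked for a globally increasing best-response map. The cycle needs $B_j$ at just two points, and near a single base point $F_j$ is effectively linear.
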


From Axiom~\ref{axiom: no sybil attacks} we get that $F_i$ has to be concave in $Q_i$. Therefore, the mechanism is of the form $\mathcal{M}_i(Q_i) = a_i Q_i + b_i$ for $a_i, b_i \in [0, 1]$.

Furthermore, from Axiom~\ref{axiom: no free meal} it holds that $F_i(0) = b_i = 0.$
Lastly, since $\sum_i \mathcal{M}_i(\bft{Q}) \leq 1$ we get that
\begin{align*}
\sum_i \mathcal{M}_i(\bft{Q}) = \sum_i a_i Q_i \leq \sum_i a_i \leq 1.
\end{align*}

denoting $p_i = a_i$ finally results in $\mathcal{M}_i(\bft{Q}) = p_i Q_i$.
This concludes the proof of \Cref{thm: axiom}.
\end{proofof}

\begin{proofof}{lemma mechanism only player}
We assume in contradiction that $\mathcal{M}$ does not have to satisfy $\frac{d \mathcal{M}_i}{dQ_j} = 0$. That is, there exists a quality profile $\bft{Q}$ such that $\frac{d \mathcal{M}_i}{dQ_j} < 0$ for some players $i,j \in [N]$. 
Our proof is constructed in 3 steps: First, we define an "adversarial" instance such that the actions of players $i$ and $j$ shift $\bft{Q}$ in a small local neighborhood where $\frac{d \mathcal{M}_i}{dQ_j} < 0$ holds. Next, we show that player $i$ chooses an action in the opposite direction to player $j$. That is, whenever $x_j = 1$, then player's $i$ best response is $x_i = 0$, and whenever $x_j = 0$ then player $i$ chooses $x_i = 1$. Lastly, we show that player $j$ chooses the action in the same direction as player $i$, i.e $x_j = 1$ when $x_i = 1$ and $x_j = 0$ when $x_i = 0$. Therefore, combining the best responses of both players results in the best-response cycle.

\paragraph{Step 1}
Without loss of generality, we focus on instances of $N = 2$. This can be seen as a special case where there are more than $2$ players, but all of them are fixed except for two.

Let $\bft{Q} = (q^0_1, q^0_2)$ such that $\frac{d \mathcal{M}_1(\bft{Q})}{dQ_2} < 0$. Then, there exists a small neighborhood, defined by $\varepsilon_1, \varepsilon_2 > 0$ such that for every $\delta_1 \in (0, \varepsilon_1]$ and $\delta_2, \delta_2' \in (0, \frac{\varepsilon_2}{2}]$, $\delta_2' > \delta_2$ it holds that
\begin{align*}
\mathcal{M}_1(q^0_1 + \delta_1, q^0_2 + \delta_2) > \mathcal{M}_1(q^0_1 + \delta_1, q^0_2 + \delta_2')
\end{align*}

We create an instance where we focus on the range $[q^0_1, q^0_1 + \varepsilon_1] \times [q^0_2, q^0_2 + \varepsilon_2]$.
Let $\delta_1 \in (0, \varepsilon_1]$, $\delta_2 \in (0, \varepsilon_2]$ and $\lambda, \tau_1, \tau_2 \in (0, 1]$. We define the qualities by
\begin{align*}
Q_1(x_1, x_2) = q^0_1 + \delta_1 x_1 \qquad Q_2(x_1, x_2) = q^0_2 + \delta_2 x_2(1 + \lambda x_1),
\end{align*}
and the costs by $c_i(x_i) = \tau_i x_i$.

\paragraph{Step 2}
Let $B_i(x_j)$ be the best response of player $i$ to the action $x_j$ of player $j$. Then our goal now is to show that $B_1(0) = 1$ and $B_1(1) = 0$. For that, we consider derivative of the utility of player $1$:
\begin{align*}
\frac{du_1}{dx_1} = \frac{d \mathcal{M}_1}{dx_1} - \frac{dc_1}{dx_1} = \frac{d \mathcal{M}_1}{dQ_1} \delta_1 + \frac{d\mathcal{M}_1}{dQ_2}\delta_2 \lambda x_2 - \tau_1.
\end{align*}
Therefore, we get that
\begin{align*}
\begin{cases}
\frac{du_1}{dx_1} = \frac{d \mathcal{M}_1(q^0_1 + \delta_1 x_1, q^0_2)}{dQ_1}\delta_1 - \tau_1 & \mbox{$x_2 = 0$} \\
\frac{du_1}{dx_1} = \frac{d \mathcal{M}_1(q^0_1 + \delta_1 x_1, q^0_2 + \delta_2 (1 + \lambda x_1))}{dQ_1}\delta_1 + \frac{d \mathcal{M}_1(q^0_1 + \delta_1 x_1, q^0_2 + \delta_2 (1 + \lambda x_1))}{dQ_2}\delta_2 \lambda - \tau_1 & \mbox{$x_2 = 1$}
\end{cases}
\end{align*}

Next, recall that we assume that $\frac{d\mathcal{M}_1}{dQ_2} < 0$. Therefore, choose $\delta_1$ small enough such that
\begin{align} \label{ineq delta small neg ux1}
\max_{x_1} \frac{d \mathcal{M}_1(q^0_1 + \delta_1 x_1, q^0_2 + \delta_2 (1 + \lambda x_1))}{dQ_1} \delta_1 < \delta_2 \lambda \min_{x_1} \left| \frac{d \mathcal{M}_1(q^0_1 + \delta_1 x_1, q^0_2 + \delta_2 (1 + \lambda x_1))}{dQ_2} \right|,
\end{align}

and let $\tau_1 = \min_{x_1} \frac{d \mathcal{M}_1(q^0_1 + \delta_1 x_1, q^0_2)}{dQ_1} \frac{\delta_1}{2}$. Therefore, we get that

\begin{align*}
\frac{du_1(x_1, 0)}{dx_1} &= \frac{d \mathcal{M}_1(q^0_1 + \delta_1 x_1, q^0_2)}{dQ_1}\delta_1 - \tau_1 \\
&= \frac{d \mathcal{M}_1(q^0_1 + \delta_1 x_1, q^0_2)}{dQ_1}\delta_1 - \min_{x_1} \frac{d \mathcal{M}_1(q^0_1 + \delta_1 x_1, q^0_2)}{dQ_1} \frac{\delta_1}{2} \\
&\geq \min_{x_1} \frac{d \mathcal{M}_1(q^0_1 + \delta_1 x_1, q^0_2)}{dQ_1} \delta_1 - \min_{x_1} \frac{d \mathcal{M}_1(q^0_1 + \delta_1 x_1, q^0_2)}{dQ_1} \frac{\delta_1}{2} \\
&= \min_{x_1} \frac{d \mathcal{M}_1(q^0_1 + \delta_1 x_1, q^0_2)}{dQ_1} \frac{\delta_1}{2} > 0,
\end{align*}

where the last inequality is due to Axiom~\ref{axiom: self promote}.
Since this is true for every $x_1$ it holds that $x_1 = 1$ satisfies that $x_i = B_1(0)$.
Next, Axiom~\ref{axiom: no free riding} and Inequality~\eqref{ineq delta small neg ux1} imply that
\begin{align*}
\max_{x_1} \frac{d \mathcal{M}_1(q^0_1 + \delta_1 x_1, q^0_2 + \delta_2 (1 + \lambda x_1))}{dQ_1} \delta_1 + \delta_2 \lambda \min_{x_1} \frac{d \mathcal{M}_1(q^0_1 + \delta_1 x_1, q^0_2 + \delta_2 (1 + \lambda x_1))}{dQ_2} < 0.
\end{align*}

Plugging that into the derivative of $u_1$ for $x_2 = 1$ results in
\begin{align*}
\frac{du_1(x_1, 1)}{dx_1} &= \frac{d \mathcal{M}_1(q^0_1 + \delta_1 x_1, q^0_2 + \delta_2 (1 + \lambda x_1))}{dQ_1}\delta_1 + \frac{d \mathcal{M}_1(q^0_1 + \delta_1 x_1, q^0_2 + \delta_2 (1 + \lambda x_1))}{dQ_2}\delta_2 \lambda - \tau_1 \\
&\leq \max_{x_1} \frac{d \mathcal{M}_1(q^0_1 + \delta_1 x_1, q^0_2 + \delta_2 (1 + \lambda x_1))}{dQ_1} \delta_1 + \delta_2 \lambda \max_{x_1} \frac{d \mathcal{M}_1(q^0_1 + \delta_1 x_1, q^0_2 + \delta_2 (1 + \lambda x_1))}{dQ_2} - \tau_1 < 0.
\end{align*}
Therefore, $B_1(1) = 0$.

\paragraph{Step 3} we now repeat the same type of arguments for player 2. Starting with the utility
\begin{align*}
\frac{du_2}{dx_2} = \frac{d\mathcal{M}_2}{dx_2} - \frac{dc_2}{dx_2} = \frac{d\mathcal{M}_2}{dQ_2} \delta_2\left(1 + \lambda x_1 \right) - \tau_2.
\end{align*}
Therefore,
\begin{align*}
\begin{cases}
\frac{du_2}{dx_2} = \frac{d\mathcal{M}_2 (q^0_1, q^0_2 + x_2 \delta_2)}{dQ_2} \delta_2  - \tau_2 & \mbox{$x_1 = 0$} \\
\frac{du_2}{dx_2} = \frac{d\mathcal{M}_2 (q^0_1 + \delta_1, q^0_2 + x_2 \delta_2 (1 + \lambda))}{dQ_2} \delta_2 (1 + \lambda)  - \tau_2 & \mbox{$x_1 = 1$}
\end{cases}
\end{align*}

Since $\frac{d\mathcal{M}}{dQ_2}$ is continuous, for every $\eta > 0$, there exists $\delta_1, \delta_2$ small enough such that for every $x_2, x_2'$ it holds that
\begin{align*}
\left| \frac{d\mathcal{M}(q^0_1, q^0_2 + x_2 \delta_2)}{dQ_2} - \frac{d\mathcal{M}_2 (q^0_1 + \delta_1, q^0_2 + x_2' \delta_2 (1 + \lambda))}{dQ_2} \right| \leq \eta.
\end{align*}

By choosing $\eta = \min_{x_1, x_2} \frac{d \mathcal{M}_2}{dQ_2} \delta_2 (1 + 0.5\lambda)$,  $\delta_1, \delta_2$ such that $\eta < 0.5 \min_{x_1, x_2} \frac{d \mathcal{M}_2}{dQ_2} \lambda$ and $\tau_2 = \min_{x_1, x_2} \frac{d \mathcal{M}_2}{dQ_2} \delta_2 (1 + 0.5\lambda)$ we get that
\begin{align*}
\frac{du_2(0, x_2)}{dx_2} &= \frac{d\mathcal{M}_2 (q^0_1, q^0_2 + x_2 \delta_2)}{dQ_2} \delta_2  - \tau_2 \\
&\leq \min_{x_1, x_2} \frac{d \mathcal{M}_2}{dQ_2} \delta_2 + \eta \delta_2 - \min_{x_1, x_2} \frac{d \mathcal{M}_2}{dQ_2} \delta_2 (1 + 0.5\lambda) \\
&= -0.5 \min_{x_1, x_2} \frac{d \mathcal{M}_2}{dQ_2} \delta_2 \lambda + \eta \delta_2 < 0.
\end{align*}

and 

\begin{align*}
\frac{du_2(1, x_2)}{dx_2} &= \frac{d\mathcal{M}_2 (q^0_1 + \delta_1, q^0_2 + x_2 \delta_2 (1 + \lambda))}{dQ_2} \delta_2 (1 + \lambda)  - \tau_2 \\
&\geq \min_{x_1, x_2} \frac{d \mathcal{M}_2}{dQ_2} \delta_2 (1 + \lambda) - \tau_2 \\
&= \min_{x_1, x_2} \frac{d \mathcal{M}_2}{dQ_2} \delta_2 (1 + \lambda)  - \min_{x_1, x_2} \frac{d \mathcal{M}_2}{dQ_2} \delta_2 (1 + 0.5\lambda) \\
&= 0.5 \min_{x_1, x_2} \frac{d \mathcal{M}_2}{dQ_2} \delta_2 \lambda > 0.
\end{align*}

Hence, we got that $0 = B_2(0)$ and $1 = B_2(1)$, which means that there is a best response cycle where the players move between the profiles: 
\begin{align*}
(0, 0) \rightarrow (1, 0) \rightarrow (1,1) \rightarrow (0, 1) \rightarrow (0, 0),
\end{align*}
which contradicts Axiom~\ref{axiom: no cycles}.

This concludes the proof of \Cref{lemma mechanism only player}.
\end{proofof}

\begin{proofof}{lemma mechanism is convex}
From \Cref{lemma mechanism only player} we get that for every player $i \in [N]$, there exists a function $F_i(Q_i)$ such that $\mathcal{M}_i(\bft{Q}) = F_i(Q_i)$.

We assume in contradiction that $\mathcal{M}$ does not have to satisfy $\frac{d^2 \mathcal{M}}{dQ_i^2} \geq 0$. That is, there exists a quality profile $\bft{Q}$ such that $\frac{d^2 \mathcal{M}_i(\bft{Q})}{dQ_i^2} < 0$ for some player $i \in [N]$. 
Our proof is constructed in 3 steps: First, we define an "adversarial" instance such that the actions of players $i$ and $j$ shift $\bft{Q}$ in a small local neighborhood where $\frac{d^2 \mathcal{M}(\bft{Q})}{dQ_i^2} < 0$ holds. Next, we show that player $i$ chooses an action in the opposite direction to player $j$. That is, whenever $x_j = 1$, then player's $i$ best response is $x_i = 0$, and whenever $x_j = 0$ then player $i$ chooses $x_i = 1$. Lastly, we show that player $j$ chooses the action in the same direction as player $i$, i.e $x_j = 1$ when $x_i = 1$ and $x_j = 0$ when $x_i = 0$. Therefore, combining the best responses of both players results in the best-response cycle.

\paragraph{Step 1}
Without loss of generality, we focus on instances of $N = 2$. This can be seen as a special case where there are more than $2$ players, but all of them are fixed except for two.

Let $\bft{Q} = (q^0_1, q^0_2)$ such that $\frac{d^2 \mathcal{M}_1(\bft{Q})}{dQ_1^2} = \frac{d^2 F_1(Q_1)}{dQ_1^2} < 0$. Then, there exists a small neighborhood, defined by $\varepsilon_1 > 0$ such that  for every $\delta_1, \delta_1' \in (0, \varepsilon_1]$ such that $\delta_1' > \delta_1$ it holds that
\begin{align} \label{eq second self deriv relation contradict}
\frac{d F_i(q^0_1 + \delta_1)}{d \delta_1} > \frac{d F_i(q^0_1 + \delta_1')}{d \delta_1'}     
\end{align}

We create an instance where we focus on the range $[q^0_1, q^0_1 + \varepsilon_1]$.
Let $\delta_1, \gamma \in (0, \frac{\varepsilon_1}{2}]$, such that $\gamma > \delta_1$. Furthermore, let $\delta_2 \in (0, 1]$ such that $q^0_2 + \delta_2 \leq 1$. In addition, Let $\lambda, \tau_1, \tau_2 \in (0, 1]$. We define the qualities by
\begin{align*}
Q_1(x_1, x_2) = q^0_1 + \delta_1 x_1 + \gamma x_2 \qquad Q_2(x_1, x_2) = q^0_2 + \delta_2 x_2(1 + \lambda x_1),
\end{align*}
and the costs by $c_i(x_i) = \tau_i x_i$.

\paragraph{Step 2}
Let $B_i(x_j)$ be the best response of player $i$ to the action $x_j$ of player $j$. Then our goal now is to show that $B_1(0) = 1$ and $B_1(1) = 0$. For that, we consider derivative of the utility of player $1$:
\begin{align*}
\frac{du_1}{dx_1} = \frac{d \mathcal{M}_1}{dx_1} - \frac{dc_1}{dx_1} = \frac{d \mathcal{F}_1}{dQ_1} \delta_1 - \tau_1.
\end{align*}
Therefore, we get that
\begin{align*}
\begin{cases}
\frac{du_1}{dx_1} = \frac{d \mathcal{F}_1(q^0_1 + \delta_1 x_1)}{dQ_1}\delta_1 - \tau_1 & \mbox{$x_2 = 0$} \\
\frac{du_1}{dx_1} = \frac{d F_1(q^0_1 + \delta_1 x_1 + \gamma))}{dQ_1}\delta_1 - \tau_1 & \mbox{$x_2 = 1$}
\end{cases}
\end{align*}

From Inequality~\eqref{eq second self deriv relation contradict} it holds that
\[
\frac{d \mathcal{F}_1(q^0_1 + \delta_1 x_1)}{dQ_1} > \frac{d F_1(q^0_1 + \delta_1 x_1 + \gamma))}{dQ_1}.
\]

Since $\gamma \geq \delta_1$, for every $x_1$, it holds that
\[
q_1^0 + \delta_1 x_1 \leq \max_{x_1} \{q_1^0 + \delta_1 x_1 \} = q_1^0 + \delta_1 < q_1^0 + \gamma = \min_{x_i} \{ q_1^0 + \delta_1 x_1 + \gamma \} \leq q_1^0 + \delta_1 x_1 + \gamma.
\]

Therefore, from Axiom~\ref{axiom: self promote} we get that
\[
\min_{x_i} \frac{d \mathcal{F}_1(q^0_1 + \delta_1 x_1)}{dQ_1} > \max_{x_1} \frac{d F_1(q^0_1 + \delta_1 x_1 + \gamma))}{dQ_1}.
\]

Specifically, there exists $\gamma > 0$ such that
\[
\min_{x_i} \frac{d \mathcal{F}_1(q^0_1 + \delta_1 x_1)}{dQ_1} - \max_{x_1} \frac{d F_1(q^0_1 + \delta_1 x_1 + \gamma))}{dQ_1} = \eta.
\]

Let $\tau_1 = \max_{x_1} \frac{d F_1(q^0_1 + \delta_1 x_1 + \gamma))}{dQ_1} \delta_1 + \frac{\eta}{2} \delta_1 $ and notice that
\begin{align*}
\frac{du_1(x_1, 0)}{dx_1} &= \frac{d \mathcal{F}_1(q^0_1 + \delta_1 x_1)}{dQ_1}\delta_1 - \tau_1 \\
&\geq \min_{x_i} \frac{d \mathcal{F}_1(q^0_1 + \delta_1 x_1)}{dQ_1} \delta_1 - \tau_1 \\
&= \max_{x_1} \frac{d F_1(q^0_1 + \delta_1 x_1 + \gamma))}{dQ_1} \delta_1 + \eta \delta - \tau_1 = \frac{\eta}{2} \delta_1 > 0.
\end{align*}

and

\begin{align*}
\frac{du_1(x_1, 1)}{dx_1} &= \frac{d F_1(q^0_1 + \delta_1 x_1 + \gamma))}{dQ_1}\delta_1 - \tau_1 \\
&\leq \max_{x_1} \frac{d F_1(q^0_1 + \delta_1 x_1 + \gamma))}{dQ_1}\delta_1 - \tau_1 = -\frac{\eta}{2} \delta_1 < 0.
\end{align*}
Since those inequalities holds for every $x_1$, we get that $B_1(0) = 1$ and $B_1(1) = 0$.

\paragraph{Step 3} we now repeat the same type of arguments for player 2. Starting with the utility
\begin{align*}
\frac{du_2}{dx_2} = \frac{d\mathcal{M}_2}{dx_2} - \frac{dc_2}{dx_2} = \frac{dF_2(Q_2(x_1, x_2))}{dx_2} - \tau_2 = \frac{dF_2}{dQ_2} \delta_2\left(1 + \lambda x_1 \right) - \tau_2.
\end{align*}
Therefore,
\begin{align*}
\begin{cases}
\frac{du_2}{dx_2} = \frac{d F_2 (q^0_2 + x_2 \delta_2)}{dQ_2} \delta_2  - \tau_2 & \mbox{$x_1 = 0$} \\
\frac{du_2}{dx_2} = \frac{d F_2 (q^0_2 + x_2 \delta_2 (1 + \lambda))}{dQ_2} \delta_2 (1 + \lambda)  - \tau_2 & \mbox{$x_1 = 1$}
\end{cases}
\end{align*}

Since $\frac{d\mathcal{M}}{dQ_2}$ is continuous, for every $\eta_2 > 0$, there exists $\delta_2$ small enough such that for every $x_2, x_2'$ it holds that
\begin{align*}
\left| \frac{d F(q^0_2 + x_2 \delta_2)}{dQ_2} - \frac{dF_2 (q^0_2 + x_2' \delta_2 (1 + \lambda))}{dQ_2} \right| \leq \eta_2.
\end{align*}

By choosing $\eta_2 = \min_{x_2} \frac{d F_2}{dQ_2} \delta_2 (1 + 0.5\lambda) = \frac{dF_2(q_2^0)}{dQ_2} \delta_2 (1 + 0.5 \lambda)$, and $\delta_2$ such that $\eta_2 < 0.5 \min_{x_2} \frac{d \mathcal{F}_2}{dQ_2} \lambda = 0.5 \frac{d \mathcal{F}_2(q_2^0)}{dQ_2} \lambda$ we get that for $\tau_2 = \min_{x_1, x_2} \frac{d \mathcal{M}_2}{dQ_2} \delta_2 (1 + 0.5\lambda)$ it holds that

\begin{align*}
\frac{du_2(0, x_2)}{dx_2} &= \frac{d F_2 (q^0_2 + x_2 \delta_2)}{dQ_2} \delta_2  - \tau_2 \\
&\leq \min_{x_2} \frac{d F_2}{dQ_2} \delta_2 + \eta \delta_2 - \min_{x_1, x_2} \frac{d \mathcal{M}_2}{dQ_2} \delta_2 (1 + 0.5\lambda) \\
&= -0.5 \min_{x_1, x_2} \frac{d \mathcal{M}_2}{dQ_2} \delta_2 \lambda + \eta \delta_2 < 0.
\end{align*}

and 

\begin{align*}
\frac{du_2(1, x_2)}{dx_2} &= \frac{d\mathcal{M}_2 (q^0_1 + \delta_1, q^0_2 + x_2 \delta_2 (1 + \lambda))}{dQ_2} \delta_2 (1 + \lambda)  - \tau_2 \\
&\geq \min_{x_1, x_2} \frac{d \mathcal{M}_2}{dQ_2} \delta_2 (1 + \lambda) - \tau_2 \\
&= \min_{x_1, x_2} \frac{d \mathcal{M}_2}{dQ_2} \delta_2 (1 + \lambda)  - \min_{x_1, x_2} \frac{d \mathcal{M}_2}{dQ_2} \delta_2 (1 + 0.5\lambda) \\
&= 0.5 \min_{x_1, x_2} \frac{d \mathcal{M}_2}{dQ_2} \delta_2 \lambda > 0.
\end{align*}

Hence, we got that $0 = B_2(0)$ and $1 = B_2(1)$, which means that there is a best response cycle where the players move between the profiles: 
\begin{align*}
(0, 0) \rightarrow (1, 0) \rightarrow (1,1) \rightarrow (0, 1) \rightarrow (0, 0),
\end{align*}
which contradicts Axiom~\ref{axiom: no cycles}. This concludes the proof of \Cref{lemma mechanism is convex}.
\end{proofof}

\section{Proofs Omitted From Section~\ref{sec: mechanism suggestion}}

\begin{definition}[Component-Wise Ordering]
\label{def:componentwise}
For two strategy profiles $\mathbf{x}, \mathbf{x}' \in [0,1]^N$, we write $\mathbf{x} \ge \mathbf{x}'$ if $x_i \ge x_i'$ for every $i \in \mathcal{N}$.
\end{definition}

\begin{lemma}[Supermodularity of the Induced Game]
\label{lemma:supermodularity}
For every mechanism $\mathcal{M} \in \mathcal{F}^{PRA}$, every induced game $\mathcal{G} \in \mathcal{G}(\mathcal{M})$ is a supermodular game.
\end{lemma}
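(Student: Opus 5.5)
We will verify the standard (smooth) definition of a supermodular game in the sense of \citet{topkis1998supermodularity}. Three conditions must hold: (i) each player's strategy set is a complete lattice; (ii) each utility $U_i$ is upper semicontinuous in $x_i$ (here continuous in the whole profile); (iii) $U_i$ has increasing differences in $(x_i,\mathbf{x}_{-i})$. Supermodularity in $x_i$ alone is automatic, since each strategy set is one-dimensional.

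For (i), the strategy set of each player is $[0,1]$, a compact interval and hence a complete lattice under the usual order. The joint strategy space $[0,1]^N$ is then a complete lattice under the component-wise ordering of Definition~\ref{def:componentwise}. For (ii), fix $\mathbf{p}$ with $\sum_i p_i\le 1$. The utility is $U_i(\mathbf{x}) = p_i Q_i(\mathbf{x}) - c_i(x_i)$. This is continuous, indeed $C^2$, since both $Q_i$ and $c_i$ are assumed twice continuously differentiable.

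The main step is (iii). Because $U_i$ is $C^2$ on the box $[0,1]^N$, we use Topkis' characterization: increasing differences is equivalent to nonnegativity of the cross partials $\partial^2 U_i/\partial x_i\partial x_j$ for all $j\neq i$. The cost term $c_i(x_i)$ depends only on $x_i$, so its mixed partials vanish. Hence
\[
\frac{\partial^2 U_i}{\partial x_i \partial x_j}(\mathbf{x}) = p_i\,\frac{\partial^2 Q_i}{\partial x_i \partial x_j}(\mathbf{x}) \ge 0,
\]
using $p_i\ge 0$ together with Assumption~\ref{increment plus}, which holds for every game in $\mathcal{G}$.

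For completeness, the integral version avoids relying on the equivalence. For $x_i'\ge x_i$ and $\mathbf{x}_{-i}'\ge \mathbf{x}_{-i}$, write $U_i(x_i',\mathbf{x}'_{-i})-U_i(x_i,\mathbf{x}'_{-i})-U_i(x_i',\mathbf{x}_{-i})+U_i(x_i,\mathbf{x}_{-i})$ as a double integral of the mixed partials, moving the coordinates of $\mathbf{x}_{-i}$ one at a time along a monotone path. Each piece is nonnegative, so the whole expression is nonnegative.

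The only delicate point is this passage from the pointwise second-derivative condition to increasing differences in the full vector $\mathbf{x}_{-i}$ rather than a single coordinate. It is handled by the coordinate-by-coordinate telescoping above, which is valid because the domain is a product of intervals. Once (i)--(iii) hold, the game is supermodular. This in turn sets up the use of Tarski/Topkis results for \Cref{thm:stability_selection}.
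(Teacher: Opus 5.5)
Your proposal is correct and follows essentially the same route as the paper's proof: the strategy sets are complete lattices, and the cross partials equal $p_i\,\partial^2 Q_i/\partial x_i\partial x_j \ge 0$ by $p_i \ge 0$ and Assumption~\ref{increment plus}. Your extra remarks on continuity and on the coordinate-by-coordinate telescoping (from pointwise nonnegative cross partials to increasing differences in the full vector $\mathbf{x}_{-i}$) make explicit steps that the paper leaves implicit.
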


\begin{proof}
The strategy space for each player is the compact interval $[0,1]$, which forms a complete lattice under the standard ordering (Definition~\ref{def:componentwise}). It remains to show that the utility function $U_i(\mathbf{x})$ exhibits \emph{increasing differences} in $(x_i, \mathbf{x}_{-i})$.
Under the Provisional Allocation mechanism, the utility is given by:
\[
U_i(\mathbf{x}) = p_i Q_i(\mathbf{x}) - c_i(x_i).
\]
Since the cost function $c_i(x_i)$ depends only on $x_i$, the cross-partial derivative of utility with respect to $x_i$ and any competitor's effort $x_j$ is determined solely by the quality term:
\[
\frac{\partial^2 U_i}{\partial x_i \partial x_j} = p_i \frac{\partial^2 Q_i}{\partial x_i \partial x_j}.
\]
By definition of the mechanism, $p_i \ge 0$. By Assumption~\ref{increment plus} (Effort Complementarities), $\frac{\partial^2 Q_i}{\partial x_i \partial x_j} \ge 0$. Consequently, $\frac{\partial^2 U_i}{\partial x_i \partial x_j} \ge 0$ for all $j \neq i$. This sufficient condition establishes that the game is supermodular.
\end{proof}

\begin{proofof}{thm:stability_selection}
By Lemma~\ref{lemma:supermodularity}, every induced game $\mathcal{G} \in \mathcal{G}(\mathcal{M})$ is supermodular. We now derive each property from this structure.

\paragraph{Part~\ref{item:stability} (Stability) and Part~\ref{item:reachable} (Reachability)}
These properties follow directly from the fundamental theorems of supermodular games:
\begin{itemize}
    \item \textbf{Stability:} By Topkis's Theorem (\cite{topkis1998supermodularity}, Theorem 4.2.1), every supermodular game possesses a non-empty set of pure Nash equilibria, which contains a greatest equilibrium $\overline{\mathbf{x}}$ (and a least equilibrium). The greatest equilibrium satisfies $\overline{\mathbf{x}} \ge \mathbf{x}$ (Definition~\ref{def:componentwise}) for any other PNE $\mathbf{x}$.
    \item \textbf{Reachability:} As shown by Vives \cite{vives2018supermodularity}, iterating best-responses starting from the maximal profile $\mathbf{1}=(1,\dots,1)$ converges monotonically to the greatest equilibrium $\overline{\mathbf{x}}$.
\end{itemize}

\paragraph{Part~\ref{item:sw} (Welfare Dominance)}
We must show $SW(\overline{\mathbf{x}}) \ge SW(\mathbf{x})$.
Recall that $SW(\mathbf{x}) = \sum_{j \in \mathcal{N}} Q_j(\mathbf{x})$. Differentiating with respect to any effort $x_k$:
\[
\frac{\partial SW}{\partial x_k} = \sum_{j \in \mathcal{N}} \frac{\partial Q_j}{\partial x_k}.
\]
By the assumption of non-negative spillovers, $\frac{\partial Q_j}{\partial x_k} \ge 0$ for all $j,k$. Therefore, the social welfare function is monotonically non-decreasing in the strategy profile. Since $\overline{\mathbf{x}} \ge \mathbf{x}$ component-wise, it follows immediately that $SW(\overline{\mathbf{x}}) \ge SW(\mathbf{x})$.

\paragraph{Part~\ref{item:utility} (Utility Dominance)}
Let $\overline{\mathbf{x}}$ be the greatest PNE and $\mathbf{x}$ be any other PNE. We show that $U_i(\overline{\mathbf{x}}) \ge U_i(\mathbf{x})$ for all $i$.

Since $\overline{\mathbf{x}}$ is a Nash equilibrium, player $i$ plays a best response to $\overline{\mathbf{x}}_{-i}$. Therefore, deviating to any other strategy $x_i$ (where $x_i$ is player~$i$'s strategy in the alternative equilibrium $\mathbf{x}$) cannot be strictly profitable:
\begin{equation} \label{eq:nash_condition}
    U_i(\overline{x}_i, \overline{\mathbf{x}}_{-i}) \ge U_i(x_i, \overline{\mathbf{x}}_{-i}).
\end{equation}
Next, consider the term $U_i(x_i, \overline{\mathbf{x}}_{-i})$. Since $\overline{\mathbf{x}} \ge \mathbf{x}$, we have $\overline{\mathbf{x}}_{-i} \ge \mathbf{x}_{-i}$. Due to non-negative spillovers, $Q_i$ is non-decreasing in $\mathbf{x}_{-i}$, which implies:
\[
Q_i(x_i, \overline{\mathbf{x}}_{-i}) \ge Q_i(x_i, \mathbf{x}_{-i}).
\]
Multiplying by $p_i \ge 0$ and subtracting the cost $c_i(x_i)$ (which is identical on both sides) yields:
\begin{equation} \label{eq:monotonicity_condition}
    U_i(x_i, \overline{\mathbf{x}}_{-i}) \ge U_i(x_i, \mathbf{x}_{-i}) = U_i(\mathbf{x}).
\end{equation}
Combining (\ref{eq:nash_condition}) and (\ref{eq:monotonicity_condition}), we obtain:
\[
U_i(\overline{\mathbf{x}}) \ge U_i(x_i, \overline{\mathbf{x}}_{-i}) \ge U_i(\mathbf{x}).
\]
Thus, the greatest equilibrium Pareto-dominates any other equilibrium.
\end{proofof}

\begin{proofof}{max q np-hard}
We reduce from the Maximum Clique problem. Given a graph $G=(V, E)$, our goal is to find a maximum cardinality subset $V' \subseteq V$ such that $(v, v') \in E$ for every $v, v' \in V'$.

\paragraph{Construction}
From $G=(V, E)$, we construct an instance of Problem~\eqref{eq:design-problem} as follows. Let $N = |V|$ be the number of players. Using the graph-based spillover model (Example~\ref{graph example}), define the quality function for each player $i$ as:
\[
Q_i(\mathbf{x}) = \frac{1}{N} \left( x_i + \sum_{\substack{j \neq i \\ (i, j) \in E}} x_i x_j \right),
\]
and the cost function as $c_i(x_i) = x_i / N$. Under the Provisional Allocation mechanism with parameter~$\mathbf{p}$, the utility of player $i$ is:
\[
U_i(\mathbf{x}) = p_i \cdot Q_i(\mathbf{x}) - c_i(x_i) = \frac{x_i}{N}\left( p_i \left( 1 + \sum_{\substack{j \neq i \\ (i, j) \in E}} x_j \right) - 1 \right).
\]

\paragraph{Binary best responses}
Since $U_i$ is linear in $x_i$, each player's best response is binary: $x_i \in \{0, 1\}$. Player $i$ chooses $x_i = 1$ if and only if the coefficient of $x_i$ in the utility expression is non-negative, i.e.,
\[
p_i \left( 1 + \sum_{\substack{j \neq i \\ (i, j) \in E}} x_j \right) \geq 1.
\]
Equivalently, if $n$ neighbors of player $i$ play $x_j = 1$, then player $i$ plays $x_i = 1$ only if $p_i \geq 1/(n+1)$.

\paragraph{Structure of optimal solutions}
The following lemma characterizes the equilibrium structure:

\begin{lemma} \label{sol is clique}
Let $(p_i)_i$ be a feasible allocation with $p_i \geq 0$ and $\sum_{i} p_i \leq 1$. Then there exists a subset $\mathcal{N}' \subseteq \mathcal{N}$ such that:
\begin{enumerate}
    \item \label{item:binary} The dominant equilibrium satisfies $x^\star_i = 1$ if $i \in \mathcal{N}'$ and $x^\star_i = 0$ otherwise.
    \item \label{item:uniform} If $|\mathcal{N}'| > 0$, then $p_i = 1/|\mathcal{N}'|$ for all $i \in \mathcal{N}'$ and $p_i = 0$ otherwise.
    \item \label{item:clique} The set $\mathcal{N}'$ forms a clique: for every $i, j \in \mathcal{N}'$ with $i \neq j$, we have $(i, j) \in E$.
\end{enumerate}
\end{lemma}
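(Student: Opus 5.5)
We take $\mathcal{N}'$ to be the set of players who exert effort in the dominant (greatest) equilibrium $\overline{\mathbf{x}}(\mathbf{p})$ given by \Cref{thm:stability_selection}. We then read items~\ref{item:uniform} and~\ref{item:clique} off the best-response conditions of those players, combined with the budget constraint $\sum_i p_i \le 1$. The whole argument is a convexity (Cauchy--Schwarz) squeeze.

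\emph{Item~\ref{item:binary}.} By the binary-best-response computation above, $U_i$ is linear in $x_i$ with coefficient $\frac{1}{N}\big(p_i(1+\sum_{j\neq i,(i,j)\in E}x_j)-1\big)$. A best response is therefore $0$ or $1$, unless this coefficient vanishes, in which case every $x_i\in[0,1]$ is a best response. The greatest equilibrium resolves such ties upward. It is the largest fixed point of the (greatest) best-response map, reached by iterating best responses from $\mathbf{1}$ (\Cref{thm:stability_selection}, part~\ref{item:reachable}), so it always selects the largest best response, which is $x_i=1$ in the tie case. Hence $\overline{\mathbf{x}}\in\{0,1\}^N$, and we set $\mathcal{N}'=\{i:\overline{x}_i=1\}$.

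\emph{Items~\ref{item:uniform} and~\ref{item:clique}.} Assume $k=|\mathcal{N}'|>0$, and for $i\in\mathcal{N}'$ let $d_i$ be the number of neighbors of $i$ inside $\mathcal{N}'$. Then $d_i\le k-1$. Since $i$ plays $1$ in equilibrium, $p_i(1+d_i)\ge 1$, that is, $p_i\ge 1/(1+d_i)$. Chaining feasibility, this bound, Cauchy--Schwarz (or convexity of $t\mapsto 1/t$), and $d_i\le k-1$ gives
\[
1\;\ge\;\sum_{i\in\mathcal{N}'}p_i\;\ge\;\sum_{i\in\mathcal{N}'}\frac{1}{1+d_i}\;\ge\;\frac{k^2}{\sum_{i\in\mathcal{N}'}(1+d_i)}\;\ge\;\frac{k^2}{k\cdot k}\;=\;1 .
\]
All inequalities are therefore equalities, and we read off the conclusions one at a time. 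The last equality forces $d_i=k-1$ for every $i\in\mathcal{N}'$, so $\mathcal{N}'$ is a clique, which is item~\ref{item:clique}. Equality in the second step gives $p_i=1/(1+d_i)=1/k$ for all $i\in\mathcal{N}'$. Equality in the first step gives $\sum_{i\in\mathcal{N}'}p_i=1$, and combined with $p_i\ge 0$ and $\sum_i p_i\le 1$ this forces $p_i=0$ for $i\notin\mathcal{N}'$, which is item~\ref{item:uniform}.

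The step needing the most care is item~\ref{item:binary}, namely that the tie case does not produce a fractional greatest equilibrium. A fractional profile could break the degree counting, because the $x_j$ would no longer be $0/1$. I would handle this through the monotone best-response iteration from $\mathbf{1}$: at every step, each player's greatest best response is in $\{0,1\}$ by linearity, so the limit $\overline{\mathbf{x}}$ is in $\{0,1\}^N$ as well. The rest of the argument is the one-line squeeze above.
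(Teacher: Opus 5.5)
Your proof is correct, but you organize the squeeze differently from the paper.

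The paper anchors on an active player $i^\star$ of maximum quality, which in the binary profile is the active player of maximum active degree $D$. It sums the thresholds $p_j \ge 1/(NQ_j) \ge 1/(1+D)$ over the closed active neighborhood of $i^\star$, which has exactly $1+D$ players, and this already exhausts the budget. It then needs an extra step: every player outside that neighborhood has $p_k=0$ and is therefore inactive, so the neighborhood is all of $\mathcal{N}'$. Only after that does tightness give $Q_j=Q_{i^\star}$ and hence the clique.

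You sum over all of $\mathcal{N}'$ from the start, using the trivial bound $d_i\le k-1$. The identification step disappears, and uniformity, zero shares outside $\mathcal{N}'$, and the clique all come out of one chain. The Cauchy--Schwarz step is not needed: the termwise bound $1/(1+d_i)\ge 1/k$ already gives $\sum_{i\in\mathcal{N}'}p_i\ge 1$, with equality iff every $d_i=k-1$.

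Your handling of item~1 is more careful than the paper's, which only says that linearity forces binary best responses. That misses the indifference case $p_i(1+\sum_{j:(i,j)\in E}x_j)=1$, where fractional equilibria really exist. For example, with $p_i=1$ and $p_j=0$ for $j\neq i$, every $x_i\in[0,1]$ (others at $0$) is a PNE. What makes $\overline{\mathbf{x}}$ binary is the upward tie-breaking of the greatest equilibrium, which your monotone greatest-best-response iteration from $\mathbf{1}$ inside the finite lattice $\{0,1\}^N$ captures exactly.
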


The proof of Lemma~\ref{sol is clique} appears below. By this lemma, any feasible allocation induces either no activity (all $x_i = 0$) or a clique among active players. Since $Q_i(\mathbf{x}^\star) = 0$ for inactive players and $Q_i(\mathbf{x}^\star) = |\mathcal{N}'|/N$ for each $i \in \mathcal{N}'$, the social welfare is:
\[
SW = \sum_{i \in \mathcal{N}'} Q_i(\mathbf{x}^\star) = |\mathcal{N}'| \cdot \frac{|\mathcal{N}'|}{N} = \frac{|\mathcal{N}'|^2}{N}.
\]
This is strictly increasing in $|\mathcal{N}'|$. Therefore, the optimal mechanism corresponds to the maximum clique in $G$, and solving Problem~\eqref{eq:design-problem} solves Maximum Clique. Since Maximum Clique is NP-hard, so is Problem~\eqref{eq:design-problem}.
\end{proofof}

\begin{proofof}{sol is clique}
Part~\ref{item:binary} holds trivially since each player's utility is linear in their own action, forcing best responses to be binary.

\textbf{Parts~\ref{item:uniform} and~\ref{item:clique}.} Let $\mathbf{x}^\star$ denote the equilibrium profile induced by the allocation $(p_i)_i$, and let $\mathcal{N}' = \{i \in \mathcal{N} : x^\star_i = 1\}$. Assume $|\mathcal{N}'| > 0$.

Since $Q_i(\mathbf{x}^\star) = 0$ whenever $x^\star_i = 0$, only players in $\mathcal{N}'$ have positive quality. Next, let $i^\star \in \arg\max_{i \in \mathcal{N}'} Q_i(\mathbf{x}^\star)$. Define $\mathcal{N}_{i^\star}$ such that
\[
\mathcal{N}_{i^\star} = \{ j \in \mathcal{N} : j \neq i^\star,\, (i^\star, j) \in E,\, x^\star_j = 1 \},
\]
i.e., the set of neighbors of $i^\star$ in the graph $G$ who play $x^\star_j = 1$. By the quality function structure:
\[
Q_{i^\star}(\mathbf{x}^\star) = \frac{1}{N}\left(1 + |\mathcal{N}_{i^\star}|\right).
\]

For $i^\star$ to choose $x^\star_{i^\star} = 1$, the best-response condition requires:
\[
p_{i^\star} \geq \frac{1}{1 + |\mathcal{N}_{i^\star}|}.
\]

For any $j \in \mathcal{N}_{i^\star}$, since $i^\star$ maximizes quality among active players, we have $Q_j(\mathbf{x}^\star) \leq Q_{i^\star}(\mathbf{x}^\star)$. The best-response condition for $j$ requires:
\[
p_j \geq \frac{1}{N \cdot Q_j(\mathbf{x}^\star)} \geq \frac{1}{N \cdot Q_{i^\star}(\mathbf{x}^\star)} = \frac{1}{1 + |\mathcal{N}_{i^\star}|}.
\]

Summing over $\{i^\star\} \cup \mathcal{N}_{i^\star}$:
\[
p_{i^\star} + \sum_{j \in \mathcal{N}_{i^\star}} p_j \geq \frac{1 + |\mathcal{N}_{i^\star}|}{1 + |\mathcal{N}_{i^\star}|} = 1.
\]

Since $\sum_i p_i \leq 1$, this inequality is tight and $p_k = 0$ for all $k \notin \{i^\star\} \cup \mathcal{N}_{i^\star}$. Any player with $p_k = 0$ cannot satisfy the best-response condition for $x_k = 1$, hence $x^\star_k = 0$. Thus $\mathcal{N}' = \{i^\star\} \cup \mathcal{N}_{i^\star}$, and $p_i = 1/|\mathcal{N}'|$ for all $i \in \mathcal{N}'$, establishing Part~\ref{item:uniform}.

For Part~\ref{item:clique}, the tight inequality chain implies $Q_j(\mathbf{x}^\star) = Q_{i^\star}(\mathbf{x}^\star)$ for every $j \in \mathcal{N}'$. Thus each player in $\mathcal{N}'$ has exactly $|\mathcal{N}'| - 1$ neighbors within $\mathcal{N}'$, meaning every pair of players in $\mathcal{N}'$ shares an edge.
\end{proofof}

\begin{theorem} \label{np-hard when simple}
Problem~\eqref{eq:design-problem} is NP-hard even under the following simplifying assumptions:
\begin{enumerate}
    \item There are no spillovers.
    \item Both the quality function $Q_i(x_i)$ and the cost function $c_i(x_i)$ are linear in $x_i$.
\end{enumerate}
\end{theorem}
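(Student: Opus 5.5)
We reduce from the (decision version of) 0/1 Knapsack. Given items $k=1,\dots,N$ with positive integer weights $w_k$, values $v_k$, capacity $W$, and target $V$, we build an instance with $N$ players and no spillovers. The qualities are $Q_i(\mathbf{x}) = a_i x_i$ and the costs are $c_i(x_i) = b_i x_i$. We choose $a_i := v_i/V_{\max}$ with $V_{\max} := \sum_k v_k$, so that every $a_i\in(0,1]$ and $Q_i\in[0,1]$. We choose $b_i := a_i\, w_i/W$, so that $c_i$ is linear, non-decreasing, vanishes at $0$, and maps into $[0,1]$. Items with $w_i>W$ can be discarded beforehand, which guarantees $b_i\le a_i\le 1$. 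Assumption~\ref{increment plus} holds trivially, since every cross-partial is zero.

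The first step is to characterize equilibria, in the spirit of Observation~\ref{obs linear utility action} and the proof of \Cref{max q np-hard}. Player $i$'s utility under $\mathbf p$ is $x_i(p_i a_i - b_i)$. This is linear in $x_i$ and independent of $\mathbf{x}_{-i}$, so every player has a dominant binary best response. If $p_i a_i > b_i$ she plays $1$; if $p_i a_i < b_i$ she plays $0$. If $p_i a_i = b_i$ she is indifferent, and the greatest equilibrium $\overline{\mathbf{x}}(\mathbf p)$ of \Cref{thm:stability_selection} selects $x_i=1$. Hence $\overline{x}_i(\mathbf p)=1$ if and only if $p_i \ge b_i/a_i = w_i/W$.

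The second step is to establish the correspondence between mechanisms and knapsack solutions. Let $S(\mathbf p)=\{i : p_i\ge w_i/W\}$. Feasibility $\sum_i p_i\le 1$ implies $\sum_{i\in S} w_i/W\le 1$, i.e.\ $\sum_{i\in S}w_i\le W$. Conversely, for any knapsack-feasible $S$, setting $p_i=w_i/W$ on $S$ and $0$ elsewhere is a valid PRA mechanism. Under this mechanism exactly $S$ is active; a player with $p_i=0$ is active only if $b_i=0$, which is excluded since $w_i>0$. The resulting welfare is $SW(\overline{\mathbf{x}}(\mathbf p)) = \sum_{i\in S} a_i = \frac{1}{V_{\max}}\sum_{i\in S} v_i$. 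Therefore the optimum of Problem~\eqref{eq:design-problem} equals $\mathrm{OPT}_{\text{knapsack}}/V_{\max}$, and deciding whether it is at least $V/V_{\max}$ decides Knapsack. The construction is clearly polynomial, which yields NP-hardness.

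The main obstacle is only bookkeeping. First, the normalizations $Q_i,c_i\in[0,1]$ must hold, which is handled by the scaling and by discarding oversized items. Second, the tie case $p_i a_i=b_i$ must be resolved via greatest-equilibrium selection, so that the tight threshold allocation $p_i=w_i/W$ indeed activates player $i$. Without that selection, one would need a strict $\epsilon$-slack, which does not fit in the budget exactly.
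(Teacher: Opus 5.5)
Your proposal is correct and follows essentially the same route as the paper. Both reduce from 0--1 Knapsack with linear qualities $Q_i=a_ix_i$ and costs $c_i=a_i(w_i/W)x_i$, so that the greatest equilibrium activates player $i$ exactly when $p_i\ge w_i/W$; feasible mechanisms then correspond, after trimming to threshold form, to knapsack-feasible sets. The differences are cosmetic: you reduce from the decision version and normalize values by $\sum_k v_k$ rather than $\max_k v_k$. As in the paper, you should assume w.l.o.g.\ that $v_i>0$ (discarding zero-value items), so that $a_i>0$ and your threshold characterization $p_i\ge b_i/a_i$ is well defined.
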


\begin{proofof}{np-hard when simple}

We prove NP-hardness by a polynomial-time reduction from the classical 0--1 knapsack optimization problem.

\textbf{Instance of 0--1 knapsack.}  
Let an instance of 0--1 knapsack be given by:
\[
\text{Items } i=1,\dots,N,\qquad \{v_i\}_{i=1}^N\subset\mathbb{R}_{\ge 0},\qquad \{w_i\}_{i=1}^N\subset\mathbb{R}_{>0},\qquad\text{capacity }W>0,
\]
and the knapsack optimization problem is
\begin{equation}\label{eq:knapsack}
\max_{x\in\{0,1\}^N}\ \sum_{i=1}^N v_i x_i
\quad\text{s.t.}\quad \sum_{i=1}^N w_i x_i \le W.
\end{equation}
Without loss of generality, every item satisfies $w_i \le W$ (an item with $w_i > W$ cannot be included in any feasible solution and may be deleted) and $v_i, w_i > 0$. Define normalized weights \(\tilde w_i := \dfrac{w_i}{W} \in (0,1]\) and normalized values \(\tilde v_i := \dfrac{v_i}{v_{\max}} \in (0,1]\), where \(v_{\max} := \max_i v_i\). Then the constraint in \eqref{eq:knapsack} is equivalent to \(\sum_{i=1}^N \tilde w_i x_i \le 1\).

\textbf{Constructed instance of Problem~\eqref{eq:design-problem}.}  
From the knapsack instance we construct an instance of Problem~\eqref{eq:design-problem} with \(N\) players as follows:
\[
Q_i(x_i) := \tilde v_i x_i,\qquad c_i(x_i) := \tilde v_i \tilde w_i x_i \qquad(\text{both linear in }x_i).
\]
The mechanism assigns player \(i\) allocation share \(p_i\), so player \(i\)'s utility is
\[
U_i(x_i;p_i) = p_i Q_i(x_i) - c_i(x_i) = \tilde v_i x_i (p_i - \tilde w_i).
\]
Players simultaneously choose \(x_i\in\{0,1\}\) to maximize their utility given $\bft{p}$. 

\textbf{Best-response characterization.}  
For each fixed $\bft{p}$, the effort of player \(i\) in the greatest equilibrium is
\begin{equation}\label{eq:br}
\overline{x}_i(\bft{p}) \;=\; 
\begin{cases}
1, & \text{if } p_i \ge \tilde w_i,\\[6pt]
0, & \text{if } p_i < \tilde w_i,
\end{cases}
\end{equation}
where at \(p_i = \tilde w_i\) player \(i\) is indifferent, and the tie is resolved toward \(\overline{x}_i = 1\) by the greatest-equilibrium selection.

\begin{lemma}\label{lemma:threshold}
Let $\bft{p}$ be any feasible choice for the designer (i.e., \(p_i\ge 0\) and \(\sum_i p_i\le 1\)). Define the index set
\[
S := \{i\in\{1,\dots,N\}\;:\; p_i \ge \tilde w_i\}.
\]
Define \(\bft{p}'\in\mathbb{R}_{\ge 0}^N\) by
\[
p'_i := 
\begin{cases}
\tilde w_i, & i\in S,\\[4pt]
0, & i\notin S.
\end{cases}
\]
Then $\bft{p}'$ is feasible (\(\sum_i p'_i \le 1\)) and induces the same greatest equilibrium as $\bft{p}$, i.e., \(\overline{\bft{x}}(\bft{p}') = \overline{\bft{x}}(\bft{p})\). In particular, for any feasible $\bft{p}$ there exists a feasible $\bft{p}'$ of this threshold form that yields the same objective value $\sum_i Q_i(\overline{\bft{x}}(\cdot))$.
\end{lemma}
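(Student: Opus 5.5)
The proof is short and uses only the separable structure of the no-spillover instance, together with the best-response characterization~\eqref{eq:br}. I will establish the two claims in the statement in order: first feasibility of $\bft{p}'$, then equality of the greatest equilibria.

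\textbf{Feasibility.} By construction $p'_i \ge 0$ for every $i$. For $i\in S$ we have $p'_i = \tilde w_i \le p_i$ by the definition of $S$, and for $i\notin S$ we have $p'_i = 0 \le p_i$. Hence $p'_i \le p_i$ for all $i$, and so $\sum_i p'_i \le \sum_i p_i \le 1$. Each $p'_i\in[0,1]$, since $\tilde w_i\in(0,1]$. Therefore $\bft{p}'$ is a valid PRA mechanism.

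\textbf{Same greatest equilibrium.} Since there are no spillovers, $U_i(x_i;p_i) = \tilde v_i x_i (p_i - \tilde w_i)$ depends only on $x_i$ and $p_i$. Thus each player's best-response set is independent of the other players' actions. The greatest equilibrium is then the profile in which each player takes her largest best response, which is exactly~\eqref{eq:br}, with ties at $p_i=\tilde w_i$ resolved to $1$. (This equals the greatest PNE of \Cref{thm:stability_selection}, because the PNE set is the product of the individual best-response sets and $\tilde v_i>0$.) I then compare coordinatewise:
\begin{itemize}
\item For $i\in S$: $p_i\ge\tilde w_i$ gives $\overline{x}_i(\bft{p})=1$, and $p'_i=\tilde w_i$ gives $\overline{x}_i(\bft{p}')=1$ through the tie-breaking rule.
\item For $i\notin S$: $p_i<\tilde w_i$ gives $\overline{x}_i(\bft{p})=0$, and $p'_i=0<\tilde w_i$ (as $\tilde w_i>0$) gives $\overline{x}_i(\bft{p}')=0$.
\end{itemize}
Hence $\overline{\bft{x}}(\bft{p}')=\overline{\bft{x}}(\bft{p})$. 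Since the objective $\sum_i Q_i(\overline{\bft{x}}(\cdot))$ depends on $\bft{p}$ only through the equilibrium, the two objective values coincide.

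The only delicate point is the tie case $p_i=\tilde w_i$. Here one must argue that the greatest equilibrium indeed selects $x_i=1$, and the justification is that a player's indifference set $[0,1]$ has maximum element $1$. Everything else is immediate from positivity of $\tilde v_i$ and $\tilde w_i$.
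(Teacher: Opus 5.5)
Your proof is correct and takes essentially the same route as the paper. Feasibility follows from $p'_i \le p_i$ coordinatewise; the paper writes this as $\sum_{i\in S}\tilde w_i \le \sum_{i\in S} p_i \le 1$. Equality of the greatest equilibria follows from the equivalence $p'_i \ge \tilde w_i \iff i\in S \iff p_i \ge \tilde w_i$ together with the threshold rule~\eqref{eq:br}. Your explicit use of $\tilde w_i>0$ in the case $i\notin S$, and your justification of the tie-breaking at $p_i=\tilde w_i$ via the product structure of the PNE set, fill in details the paper leaves implicit.
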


\textbf{Equivalence to knapsack.}  
Given Lemma \ref{lemma:threshold}, any feasible mechanism \(\bft{p}\) can be replaced by a $\bft{p}'$ that places mass exactly \(\tilde w_i\) on every player \(i\) who is activated, and zero otherwise. Consequently, feasible allocations \(\bft{p}\) correspond one-to-one to subsets \(S\subseteq\{1,\dots,N\}\) satisfying \(\sum_{i\in S}\tilde w_i \le 1\); the induced objective value is
\[
\sum_{i=1}^N Q_i\left(\overline{x}_i(\bft{p})\right) \;=\; \sum_{i\in S} \tilde v_i \;=\; \frac{1}{v_{\max}}\sum_{i\in S} v_i.
\]
Since \(1/v_{\max}\) is a fixed positive constant, the designer's optimization problem on the constructed instance is equivalent to
\[
\max_{S\subseteq\{1,\dots,N\}} \ \sum_{i\in S} v_i
\quad\text{s.t.}\quad \sum_{i\in S} \tilde w_i \le 1,
\]
which is exactly the 0--1 knapsack instance \eqref{eq:knapsack} (after normalization of weights).

\textbf{Conclusion.}  
Since 0--1 knapsack is NP-hard, the designer's optimization problem is NP-hard under the stated linearity and no-externality assumptions. This completes the proof.

\end{proofof}

\textbf{Proof of Lemma \ref{lemma:threshold}.}  
Let $S$ and $\bft{p}'$ be as defined above. First,
\[
\sum_{i=1}^N p'_i \;=\; \sum_{i\in S} \tilde w_i \;\le\; \sum_{i\in S} p_i \;\le\; \sum_{i=1}^N p_i \;\le\; 1,
\]
where the first inequality uses \(\tilde w_i \le p_i\) for \(i\in S\). Hence $\bft{p}'$ is feasible.  
Second, for each \(i\) we compare \(p'_i\) with \(\tilde w_i\):
\[
p'_i \ge \tilde w_i \iff i\in S \iff p_i \ge \tilde w_i.
\]
Therefore by the best-response rule \eqref{eq:br} we have \(\overline{x}_i(\bft{p}') = \overline{x}_i(\bft{p})\) for every \(i\). Consequently the induced quality vector is identical, and so the social welfare value is the same. This proves the lemma.

\begin{proofof}{no approximation}
We proceed by contradiction. Suppose there exists a polynomial-time algorithm
that approximates the $SW$ problem within a factor of $N^{1-\veps}$ for some
$\veps>0$. Apply it to the instance constructed in the proof of \Cref{max q np-hard}. Its output allocation induces a clique of size $k$ with
$k^2/N \ge \omega(G)^2/(N\cdot N^{1-\veps})$, i.e.,
$k \ge \omega(G)/N^{1-\veps'}$ for $\veps' = \frac{1+\veps}{2} > 0$,
where $\omega(G)$ is the maximum clique size.
This is a polynomial-time $N^{1-\veps'}$-approximation for Maximum Clique,
contradicting \cite[Theorem 5.2]{Hastad1999}, which states that unless
$NP=ZPP$, no such approximation exists for any $\veps'>0$.
\end{proofof}


\section{Proofs Omitted From \Cref{sec: bounded}} \label{bounded appnx}

\paragraph{Bounds in the definition versus inputs to the analysis.}
We first record a monotonicity that lets the analysis operate with a single pair of constants. If
$Q_i$ has $(\beta,\eta)$-bounded spillovers (Definition~\ref{def:beta_bounded}), then it has
$(\beta',\eta')$-bounded spillovers for every $\beta'\ge\beta$ and $\eta'\ge\eta$, since both defining
inequalities only weaken as the constants grow. Hence any valid upper bounds may be used in the
analysis; we denote them by $\beta,\eta$ and assume, without loss of generality, that they are the
constants of Definition~\ref{def:beta_bounded}.

\begin{proofof}{thm:approx_bounded}

Let $\mathbf{p}^\star\in\arg\max_{\mathbf{p}\in\mathcal{B}(\veps)^N,\,\sum_i p_i\le 1}SW(\overline{\mathbf{x}}(\mathbf{p}))$
be an optimal allocation and $\mathbf{x}^\star:=\overline{\mathbf{x}}(\mathbf{p}^\star)$ the equilibrium it
induces; let $\hat{\mathbf{p}}$ be the output of Algorithm~\ref{alg:NSR_solver} and
$\hat{\mathbf{x}}:=\overline{\mathbf{x}}(\hat{\mathbf{p}})$. Note that $SW(\mathbf{x}^\star)=OPT$. The runtime follows from the following lemma:

\begin{lemma} \label{lemma:mckp}
The optimal solution $\hat{\mathbf{p}}$ to the NSR problem can be computed in
$O(N/\veps^2)$ time.
\end{lemma}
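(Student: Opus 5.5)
The plan is to reduce the NSR problem to a multiple-choice knapsack (MCKP) with integer weights and solve it by dynamic programming over the budget.

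\emph{Decoupling.} The NSR objective is $\sum_i Q_i(y_i,\mathbf{0}_{-i})$. Each $y_i$ depends only on $p_i$ through the best response $y_i(p_i)\in\arg\max_{z\in[0,1]}\{p_iQ_i(z,\mathbf{0}_{-i})-c_i(z)\}$. If this maximizer is not unique, we pick the largest one, in line with the greatest-equilibrium convention. Consequently, player $i$ contributes a value
\[
v_i(k):=Q_i(y_i(k\veps),\mathbf{0}_{-i}), \qquad k\in\{0,\dots,K\},\ K:=\lfloor 1/\veps\rfloor,
\]
and these values are independent across players. The only coupling between players is the budget constraint $\sum_i p_i\le 1$. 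Writing $p_i=k_i\veps$, this becomes $\sum_i k_i\le K$, because $k_i$ are integers and $\sum_i k_i\veps\le 1$ holds if and only if $\sum_i k_i\le\lfloor 1/\veps\rfloor$.

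The NSR problem is therefore exactly an MCKP. There are $N$ classes, one per player. Each class has $K+1$ items with integer weight $k$ and value $v_i(k)$, the capacity is $K$, and exactly one item is chosen per class. The item $k=0$ has value $v_i(0)\ge 0$, so choosing it is always feasible.

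\emph{Computing the table.} We first compute all $v_i(k)$. Under the footnote's assumption that each inner maximizer is obtained in $O(1)$, this takes $O(N/\veps)$ time.

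\emph{Dynamic program.} Define $F_i(b)$ as the best total value achievable using players $1,\dots,i$ with total weight at most $b$. The recursion is
\[
F_i(b)=\max_{0\le k\le b}\{F_{i-1}(b-k)+v_i(k)\}, \qquad F_0(b)=0,
\]
for $b\in\{0,\dots,K\}$. The optimum of the NSR problem is $F_N(K)$. We recover $\hat{\mathbf{p}}$ by storing the argmax at each step and backtracking. Each entry costs $O(K)$ to compute and there are $N(K+1)$ entries, so the total time is $O(NK^2)=O(N/\veps^2)$. Correctness follows from the standard exchange argument: an optimal solution restricted to the first $i$ players with budget $b$ is optimal for the subproblem.

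\emph{Main obstacle.} The only delicate point is justifying that the problem decouples exactly. This needs two facts: $y_i$ depends only on $p_i$, since the others are fixed at $\mathbf{0}$, and the grid restriction turns the real-valued budget into an integer capacity. We also need a consistent tie-breaking rule in the argmax; taking the largest maximizer is harmless here because $Q_i$ is nondecreasing in $x_i$. Everything else is the routine MCKP dynamic program.
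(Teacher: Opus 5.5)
Your proposal is correct and follows the paper's proof. The paper likewise maps the NSR problem to a 0-1 multiple-choice knapsack: one class per player, items $k\in\{0,\dots,\lfloor 1/\veps\rfloor\}$ with weight $k$ and profit $Q_i(y_i(k\veps),\mathbf{0}_{-i})$, and capacity $\lfloor 1/\veps\rfloor$. It then solves this with the standard $O(n_{items}\cdot C)=O(N/\veps^2)$ dynamic program, which it cites from the MCKP literature; your explicit recursion and your integrality argument for the capacity spell out those steps.
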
%

Proofs of all relevant lemmas are deferred to immediately after the present proof.

\paragraph{Setup and notation.}
\label{par:br_setup}
Recall that $U_i(\mathbf{x};\mathbf{p}) := p_i\,Q_i(\mathbf{x}) - c_i(x_i)$ is player~$i$'s utility under
allocation $\mathbf{p}$ at effort profile $\mathbf{x}$, and that $\overline{\mathbf{x}}(\mathbf{p})$ denotes the
greatest equilibrium induced by $\mathbf{p}$. For a fixed profile $\mathbf{x}_{-i}\in[0,1]^{N-1}$ of the
others, define player~$i$'s \emph{best-response effort} to an allocation share $p\ge 0$ as
\[
\mathrm{BR}_i(p \,;\, \mathbf{x}_{-i}) \;:=\; \max\,\arg\max_{z\in[0,1]} U_i\left((z,\mathbf{x}_{-i});\,p\right),
\]
the largest maximizer (well defined, as the objective is continuous on $[0,1]$). We further define,
for every share $p \ge 0$, the \emph{no-spillover best response}
\[
y_i(p) \;:=\; \mathrm{BR}_i(p \,;\, \mathbf{0}_{-i}) \;=\; \max\,\arg\max_{z\in[0,1]} \left\{ p\, Q_i(z,\mathbf{0}_{-i}) - c_i(z) \right\},
\]
the effort player $i$ chooses under share $p$ when all others are inactive. In particular, for any
allocation $\mathbf{p}$, the auxiliary variables $(y_i)_i$ appearing in the NSR problem of
Algorithm~\ref{alg:NSR_solver} coincide with $\left(y_i(p_i)\right)_i$. Under the largest-maximizer
tie-breaking, the NSR objective value of $\mathbf{p}$ equals $\sum_i Q_i\left(y_i(p_i),\mathbf{0}_{-i}\right)$.
 
We also use the following notation throughout: $Q_{\max}:=\max_i\max_{\mathbf{x}} Q_i(\mathbf{x})\le 1$.
 
\medskip
Our argument relies on the following two lemmas. 
\begin{lemma}
\label{lem:price_comp}
Fix a player $i$, a share $p \ge 0$, and an arbitrary profile $\mathbf{x}_{-i}\in[0,1]^{N-1}$, and let
$x_i^{\dagger}:=\mathrm{BR}_i(p \,;\, \mathbf{x}_{-i})$ be player~$i$'s best response to share $p$ when the others
play $\mathbf{x}_{-i}$. Then,
\[
y_i\left((1+\eta)\,p\right)\;\ge\;x_i^{\dagger},
\]
where $y_i(\cdot) = \mathrm{BR}_i(\cdot \,;\, \mathbf{0}_{-i})$ is the no-spillover best response defined above.
 
That is, raising the share to $(1+\eta)p$ and removing all spillovers,
induces an effort at least as large as the original best response.
\end{lemma}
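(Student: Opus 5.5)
The plan is a first-order-condition argument: transfer the optimality of $x_i^{\dagger}$ from the game with spillovers to the no-spillover problem at the inflated share. Condition~2 of Definition~\ref{def:beta_bounded} bounds the marginal quality, and concavity of $Q_i$ in own effort (assumed in Theorem~\ref{thm:approx_bounded}) turns a marginal condition into a global one.

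\emph{Trivial case.} If $x_i^{\dagger}=0$, there is nothing to prove, since $y_i(\cdot)\ge 0$ by definition.

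\emph{Necessary condition at $x_i^{\dagger}$.} Assume $x_i^{\dagger}>0$ and set $g(z):=p\,Q_i(z,\mathbf{x}_{-i})-c_i(z)$. Since $x_i^{\dagger}$ maximizes $g$ on $[0,1]$ and $g$ is continuously differentiable, the left derivative at $x_i^{\dagger}$ must be non-negative:
\[
p\,\frac{\partial Q_i}{\partial x_i}(x_i^{\dagger},\mathbf{x}_{-i})\;\ge\;c_i'(x_i^{\dagger}).
\]
This uses only that $x_i^{\dagger}$ is a maximizer with $x_i^{\dagger}>0$; it needs no concavity of $g$, and it covers the boundary case $x_i^{\dagger}=1$.

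\emph{Transfer via bounded marginals.} Set $f(z):=(1+\eta)p\,Q_i(z,\mathbf{0}_{-i})-c_i(z)$. By condition~2 of Definition~\ref{def:beta_bounded} and $p\ge 0$,
\[
f'(x_i^{\dagger})\;=\;(1+\eta)p\,\frac{\partial Q_i}{\partial x_i}(x_i^{\dagger},\mathbf{0}_{-i})-c_i'(x_i^{\dagger})\;\ge\;p\,\frac{\partial Q_i}{\partial x_i}(x_i^{\dagger},\mathbf{x}_{-i})-c_i'(x_i^{\dagger})\;\ge\;0.
\]

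\emph{From local to global.} The function $f$ is concave, since $Q_i(\cdot,\mathbf{0}_{-i})$ is concave and $c_i$ is convex. Hence $f'$ is non-increasing, so $f'(z)\ge f'(x_i^{\dagger})\ge 0$ for all $z\le x_i^{\dagger}$, and $f$ is non-decreasing on $[0,x_i^{\dagger}]$.

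Now take any maximizer $z^\ast$ of $f$ on $[0,1]$. If $z^\ast\ge x_i^{\dagger}$ we are done. Otherwise $z^\ast<x_i^{\dagger}$, so $f(x_i^{\dagger})\ge f(z^\ast)$, which makes $x_i^{\dagger}$ itself a maximizer. In either case the largest maximizer satisfies $y_i\bigl((1+\eta)p\bigr)\ge x_i^{\dagger}$.

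The only delicate step is the last one. It needs concavity of $f$, not of $g$, which is exactly the hypothesis available in Theorem~\ref{thm:approx_bounded}. It also needs care with the largest-maximizer tie-breaking and the one-sided derivative at the endpoint $x_i^{\dagger}=1$; both are handled above.
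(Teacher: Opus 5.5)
Your proposal is correct and matches the paper's proof step for step. Both handle the trivial case $x_i^{\dagger}=0$, then use the non-negative left derivative at $x_i^{\dagger}$, then transfer it via the marginal condition of Definition~\ref{def:beta_bounded} to the inflated share $(1+\eta)p$, and finally use concavity of $z\mapsto(1+\eta)p\,Q_i(z,\mathbf{0}_{-i})-c_i(z)$ to show the largest maximizer is at least $x_i^{\dagger}$. Your explicit treatment of a maximizer lying below $x_i^{\dagger}$ just spells out the paper's last sentence more carefully; it is not a different route.
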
%
 
\begin{lemma}
\label{lem:knapsack_abstract}
Consider $N$ items, where each item $i\in\{1,\dots,N\}$ has a weight $w_i\ge 0$ and a value
$v_i\ge 0$. Suppose the total weight satisfies $\sum_i w_i\le W$ for some $W\ge 1$, and let
$v_{\max}:=\max_i v_i$ denote the maximal value. Then there is a set of items
$S\subseteq\{1,\dots,N\}$ of total weight $\sum_{i\in S}w_i\le 1$ and total value
\[
\sum_{i\in S}v_i\;\ge\; \left(\frac1W\sum_i v_i\right)-\;v_{\max}.
\]
\end{lemma}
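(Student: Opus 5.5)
The statement to prove is \Cref{lem:knapsack_abstract}. The plan is a greedy prefix argument. First we dispose of trivial cases. Any item with $w_i>1$ can never be in $S$, so a little care is needed there. Note that $v_{\max}$ is over all items, so it only helps us.

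The approach is to sort the items by decreasing value density $v_i/w_i$, with zero-weight items first: they are free and are always included. We then take the longest prefix $S$ whose total weight is at most $1$.

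If all items fit, the claim is immediate, since $\sum_{i\in S}v_i=\sum_i v_i\ge \frac1W\sum_i v_i$ because $W\ge1$.

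Otherwise, let $k$ be the first item that does not fit, so that $w(S)+w_k>1$. Since $S\cup\{k\}$ is a prefix of the density order, its average density is at least the overall average density. That is,
\[
\frac{v(S)+v_k}{w(S)+w_k}\;\ge\;\frac{\sum_i v_i}{\sum_i w_i}\;\ge\;\frac{\sum_i v_i}{W}.
\]
Here we use the standard fact that a prefix of a density-sorted list has average density at least the total average. Multiplying through by $w(S)+w_k>1$ gives $v(S)+v_k\ge \frac1W\sum_i v_i$. Therefore $v(S)\ge \frac1W\sum_i v_i - v_k\ge \frac1W\sum_i v_i - v_{\max}$, as required.

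The one subtle point is the prefix-density inequality when $\sum_i w_i=0$ or when some weights are zero. We handle it by putting the zero-weight items first and including them all; if every weight is zero, all items fit. Items with $w_i>1$ need no separate treatment, because the argument never requires the blocking item $k$ itself to fit. Likewise, the inequality $w(S)+w_k>1$ holds regardless of how large $w_k$ is.

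So the main (mild) obstacle is just stating the averaging inequality cleanly. It follows from the observation that the mediant of the first $m$ ratios in decreasing order dominates the mediant of all of them.
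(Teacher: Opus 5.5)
Your proof is correct and follows essentially the same route as the paper's. Both sort items by density with zero-weight items first, take the maximal prefix of weight at most $1$, and use that this prefix together with the blocking item has weight exceeding $1$ and average density at least $\sum_i v_i/\sum_i w_i \ge \sum_i v_i/W$; dropping the blocking item then costs at most $v_{\max}$.
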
%
 
We prove the welfare guarantee in four steps.
 
\paragraph{Step 1 (an inflated, grid-aligned allocation).}
We define the \emph{inflated} allocation $\bar{\mathbf{p}}=(\bar p_1,\dots, \bar p_N)$ such that $\bar p_i:=\veps\left\lceil (1+\eta)\,p_i^\star/\veps\right\rceil$, namely, the smallest multiple of
$\veps$ that is at least $(1+\eta)p_i^\star$.
We call it inflated since each entry inflates the corresponding entry of the optimal allocation; thus, it need not be a valid allocation, as $\sum_i\bar p_i$ may exceed 1. Nevertheless, it will be useful for our analysis -- later we will 'extract' a valid allocation out of it. We align $\bar{\mathbf{p}}$ to multiples of $\veps$ so that any allocation assembled from its entries, as constructed in Step~3 below, is a valid element of $\mathcal{B}(\veps)^N$.
 
Since $\lceil{t}\rceil \leq t+1$ for every $t \in \mathbb R$, we have $\bar p_i\le(1+\eta)p_i^\star+\veps$.
Since $\sum_i p_i^\star\le 1$, we have
\begin{equation}
\label{eq:weight}
\sum_i\bar p_i\;\le\;(1+\eta)\sum_i p_i^\star+N\veps\;\le\;1+\eta+N\veps.
\end{equation}
For ease of presentation, we denote $W:= 1+\eta+N\veps$.
 
\paragraph{Step 2 (the optimal efforts are attainable without spillovers under $\bar{\mathbf{p}}$).}
Fix an arbitrary player $i \in [N]$. In the equilibrium $\mathbf{x}^\star$, the effort $x_i^\star$ is player~$i$'s best response to
$p_i^\star$ given the others' efforts $\mathbf{x}^\star_{-i}$, i.e., \
$x_i^\star=\mathrm{BR}_i(p_i^\star \,;\, \mathbf{x}^\star_{-i})$. Lemma~\ref{lem:price_comp} with $p=p_i^\star$
and $\mathbf{x}_{-i}=\mathbf{x}^\star_{-i}$ gives $y_i\left((1+\eta)p_i^\star\right)\ge x_i^\star$. The no-spillover
best response $y_i$ is non-decreasing in the allocation share, $p \,$. Since $\bar p_i\ge(1+\eta)p_i^\star$,
\[
y_i(\bar p_i)\;\ge\;y_i\left((1+\eta)p_i^\star\right)\;\ge\;x_i^\star.
\]
Writing $v_i:=Q_i\left(y_i(\bar p_i),\mathbf{0}_{-i}\right)$, monotonicity of $Q_i$ in own effort yields
\begin{equation}
\label{eq:vdom}
v_i\;\ge\;Q_i(x_i^\star,\mathbf{0}_{-i})\qquad\text{for every player }i.
\end{equation}
 
\paragraph{Step 3 (compressing $\bar{\mathbf{p}}$ back to a valid allocation).}
Apply Lemma~\ref{lem:knapsack_abstract} with weights $w_i:=\bar p_i$ and values $v_i$: by
\eqref{eq:weight} the total weight is at most $W$, and $v_{\max}=\max_i v_i\le Q_{\max}$. This yields a
set $S$ with $\sum_{i\in S}\bar p_i\le 1$ and $\sum_{i\in S}v_i\ge\frac1W\sum_i v_i-Q_{\max}$. Let
$\mathbf{p}'$ be the allocation with $p_i'=\bar p_i$ for $i\in S$ and $p_i'=0$ otherwise: each $p_i'$ is a
multiple of $\veps$ and $\sum_i p_i'=\sum_{i\in S}\bar p_i\le 1$, so $\mathbf{p}'\in\mathcal{B}(\veps)^N$ is a
valid input to the NSR problem. Its NSR objective value satisfies
\[
\sum_i Q_i\left(y_i(p_i'),\mathbf{0}_{-i}\right)
=\sum_{i\in S}v_i+\sum_{i\notin S}Q_i\left(y_i(0),\mathbf{0}_{-i}\right)
\;\ge\;\sum_{i\in S}v_i.
\]
As $\hat{\mathbf{p}}$ maximizes the NSR objective over $\mathcal{B}(\veps)^N$, its objective value is at least
that of $\mathbf{p}'$; combining with \eqref{eq:vdom},
\begin{equation}
\label{eq:nsr_lb}
\sum_i Q_i\left(y_i(\hat p_i),\mathbf{0}_{-i}\right)
\;\ge\;\sum_{i\in S}v_i
\;\ge\;\frac1W\sum_i v_i-Q_{\max}
\;\ge\;\frac{1}{1+\eta+N\veps}\sum_i Q_i(x_i^\star,\mathbf{0}_{-i})-Q_{\max}.
\end{equation}
 
\paragraph{Step 4 (deploying $\hat{\mathbf{p}}$ in the actual game, and the $\beta$-bound).}
Denote by $\mathbf{y}:=\left(y_1(\hat p_1),\dots,y_N(\hat p_N)\right)$ the profile of no-spillover best
responses under $\hat{\mathbf{p}}$. Each $y_i(\hat p_i)$ is player~$i$'s best response to $\hat p_i$ when the
others are inactive ($\mathbf{0}_{-i}$). In the actual game, since it is supermodular, if the other players
exert efforts $\mathbf{y}_{-i}\ge\mathbf{0}_{-i}$, player~$i$'s best response is non-decreasing in them; hence
$\mathrm{BR}_i(\hat p_i \,;\, \mathbf{y}_{-i}) \geq \mathrm{BR}_i(\hat p_i \,;\, \mathbf{0}_{-i}) \geq y_i(\hat p_i)$.
Thus iterating the monotone best-response map from $\mathbf{y}$ converges upward to an equilibrium
dominated by the greatest equilibrium $\hat{\mathbf{x}}$. Therefore $\hat{\mathbf{x}}\ge\mathbf{y}$. As every $Q_i$
is non-decreasing in all efforts and $\mathbf{y}_{-i}\ge\mathbf{0}_{-i}$,
\begin{equation}
\label{eq:deploy}
SW(\hat{\mathbf{x}})=\sum_i Q_i(\hat{\mathbf{x}})\;\ge\;\sum_i Q_i(\mathbf{y})\;\ge\;\sum_i Q_i\left(y_i(\hat p_i),\mathbf{0}_{-i}\right).
\end{equation}
Finally, the first condition of Definition~\ref{def:beta_bounded}, applied at $\mathbf{x}^\star$ termwise,
gives $Q_i(\mathbf{x}^\star)\le(1+\beta)\,Q_i(x_i^\star,\mathbf{0}_{-i})$, so
\begin{equation}
\label{eq:level}
\sum_i Q_i(x_i^\star,\mathbf{0}_{-i})\;\ge\;\frac{1}{1+\beta}\sum_i Q_i(\mathbf{x}^\star)=\frac{1}{1+\beta}\,SW(\mathbf{x}^\star).
\end{equation}
Chaining \eqref{eq:deploy}, \eqref{eq:nsr_lb}, and \eqref{eq:level}, and using $Q_{\max}\le 1$,
\[
SW(\hat{\mathbf{x}})
\;\ge\;\frac{1}{1+\eta+N\veps}\sum_i Q_i(x_i^\star,\mathbf{0}_{-i})-Q_{\max}
\;\ge\;\frac{1}{(1+\beta)(1+\eta+N\veps)}\,SW(\mathbf{x}^\star)-1,
\]

which is the guarantee of Theorem~\ref{thm:approx_bounded} with the term $\eta+N\veps$ in the minimum.

\paragraph{The alternative term $\lceil\eta\rceil$.}
$(\beta,\eta)$-bounded spillovers imply $(\beta,\lceil\eta\rceil)$-bounded spillovers, so Steps~1--4
apply verbatim with $\eta$ replaced by $\lceil\eta\rceil$. In Step~1, since $1+\lceil\eta\rceil$ is an
integer and each $p_i^\star$ is a multiple of $\veps$, the product $(1+\lceil\eta\rceil)\,p_i^\star$ is
itself a multiple of $\veps$. The rounding is therefore vacuous, $\bar p_i = (1+\lceil\eta\rceil)\,p_i^\star$,
and \eqref{eq:weight} improves to $\sum_i \bar p_i \le 1+\lceil\eta\rceil$. Carrying
$W = 1+\lceil\eta\rceil$ through Steps~2--4 unchanged yields
\[
SW(\hat{\mathbf{x}}) \;\ge\; \frac{1}{(1+\beta)\left(1+\lceil\eta\rceil\right)}\,SW(\mathbf{x}^\star) - 1.
\]
Combining the two bounds yields the guarantee of Theorem~\ref{thm:approx_bounded}.

\end{proofof}
 

\begin{proofof}{lemma:mckp} \label{proofof mckp}
We show that our allocation problem can be mapped exactly to an instance of the 0-1 Multiple-Choice Knapsack Problem (MCKP) \cite{Kellerer2004}.
In the 0-1 MCKP, we are given disjoint classes of items, and we must select exactly one item from each class to maximize total profit subject to a weight capacity constraint. We construct the mapping as follows:
\begin{itemize}
    \item \textbf{Classes:} The classes correspond to the players $i \in \mathcal{N}$ (total $N$ classes).
    \item \textbf{Items:} For each class $i$, the items are indexed by $k \in \{0, 1, \ldots, \lfloor 1/\veps \rfloor\}$, where item $k$ corresponds to the allocation  $p_i =k\veps$.
    \item \textbf{Weights:} The weight of item $k$ is $w_{ik} = k$.
    \item \textbf{Profits:} The profit of item $k$ in class $i$ is the quality induced by allocation $k\veps$ to player $i$. Specifically, $v_{ik} = Q_i(x_i(k\veps), \mathbf{0}_{-i})$, where $x_i(k\veps)$ is the optimal effort level player $i$ chooses given allocation $p_i =k\veps$ and zero spillovers.
    \item \textbf{Capacity:} The knapsack capacity is $C = \lfloor 1/\veps \rfloor$.
\end{itemize}
Under this mapping, the NSR problem becomes:
\[
\begin{aligned}
\max \quad & \sum_{i \in \mathcal{N}} \sum_{k=0}^{\lfloor 1/\veps \rfloor} v_{ik}\, z_{ik} \\
\text{s.t.} \quad & \sum_{i \in \mathcal{N}} \sum_{k=0}^{\lfloor 1/\veps \rfloor} k\, z_{ik} \;\le\; C, \\
& \sum_{k=0}^{\lfloor 1/\veps \rfloor} z_{ik} = 1, \quad \forall\, i \in \mathcal{N}, \\
& z_{ik} \in \{0,1\}, \quad \forall\, i \in \mathcal{N}, \;\; k \in \{0, \ldots, \lfloor 1/\veps \rfloor\}.
\end{aligned}
\]
This is exactly the 0-1 MCKP formulation: the constraint $\sum_k z_{ik} = 1$ ensures exactly one allocation level is assigned to each player, and the capacity constraint enforces the budget $\sum_i p_i = \sum_i k_i \veps \le 1$. We can therefore solve the NSR problem using standard MCKP algorithms.

\paragraph{Running time.} Since the problem is equivalent to the 0-1 MCKP, we can solve it exactly using standard dynamic programming approaches. Specifically, the problem can be solved exactly in time $O(n_{items} \cdot C)$, where $n_{items}$ is the total number of items and $C$ is the knapsack capacity \cite{Kellerer2004}. In our case, the number of classes is $m=N$, and the number of items per class is $\lfloor 1/\veps \rfloor + 1$. Thus, the total number of items is $n_{items} = O(N/\veps)$. The dynamic programming algorithm runs in $O(n_{items} \cdot C) = O(\frac{N}{\veps} \cdot \frac{1}{\veps}) = O(\frac{N}{\veps^2})$.
\end{proofof}

\begin{proofof}{lem:price_comp}
If $x_i^{\dagger}=0$ the claim is immediate, since $y_i(\cdot)\ge 0$. Assume $x_i^{\dagger}>0$.
 
\paragraph{Step 1.}
The map $z\mapsto U_i\left((z,\mathbf{x}_{-i});\,p\right)=p\,Q_i(z,\mathbf{x}_{-i})-c_i(z)$ is differentiable
and maximized over $[0,1]$ at the point $z=x_i^{\dagger}\in(0,1]$. A maximizer that is not the left
endpoint has non-negative left-derivative there; therefore
\begin{equation}
\label{eq:foc}
p\,\frac{\partial Q_i}{\partial x_i}(x_i^{\dagger},\mathbf{x}_{-i})\;-\;c_i'(x_i^{\dagger})\;\ge\;0.
\end{equation}
 
\paragraph{Step 2.}
By the marginal condition of Definition~\ref{def:beta_bounded},
$\frac{\partial Q_i}{\partial x_i}(x_i^{\dagger},\mathbf{x}_{-i})\le(1+\eta)\,
\frac{\partial Q_i}{\partial x_i}(x_i^{\dagger},\mathbf{0}_{-i})$.
 
Writing $\tilde p:=(1+\eta)p$ and
combining with \eqref{eq:foc},
\begin{equation}
\label{eq:compensated}
\tilde p\,\frac{\partial Q_i}{\partial x_i}(x_i^{\dagger},\mathbf{0}_{-i})-c_i'(x_i^{\dagger})
=(1+\eta)\,p\,\frac{\partial Q_i}{\partial x_i}(x_i^{\dagger},\mathbf{0}_{-i})-c_i'(x_i^{\dagger})
\;\ge\; p\,\frac{\partial Q_i}{\partial x_i}(x_i^{\dagger},\mathbf{x}_{-i})-c_i'(x_i^{\dagger})\;\ge\;0.
\end{equation}
 
\paragraph{Step 3.}
Let $h(z):=U_i\left((z,\mathbf{0}_{-i});\,\tilde p\right)=\tilde p\,Q_i(z,\mathbf{0}_{-i})-c_i(z)$. As $Q_i$ is
concave in own effort and $c_i$ is convex, $h$ is concave. Hence, its derivative
$h'(z)=\tilde p\,\frac{\partial Q_i}{\partial x_i}(z,\mathbf{0}_{-i})-c_i'(z)$ is non-increasing in $z$.
By \eqref{eq:compensated}, $h'(x_i^{\dagger})\ge 0$, hence $h'(z)\ge 0$ for all $z\le x_i^{\dagger}$;
thus $h$ is non-decreasing on $[0,x_i^{\dagger}]$ and attains a maximum at some point
$\ge x_i^{\dagger}$. Therefore $y_i(\tilde p)=\max\arg\max_z h(z)\ge x_i^{\dagger}$.
\end{proofof}
 
\begin{proofof}{lem:knapsack_abstract}
Define the \emph{density} of item $i$ as $v_i/w_i$ when $w_i>0$, regarding zero-weight items
($w_i = 0$) as having infinite density. List the items in non-increasing order of density (so all
zero-weight items come first). Add items to $S$ in this order, stopping just before the first item
$b$ whose inclusion would make $\sum_{i\in S}w_i$ exceed $1$. 
 
If no such $b$ exists, then $\sum_i w_i\le 1$, so $S=\{1,\dots,N\}$ is feasible and
$\sum_{i\in S}v_i=\sum_i v_i\ge\frac1W\sum_i v_i$, and the claim holds. Otherwise, by construction
$\sum_{i\in S}w_i\le 1$ while $\sum_{i\in S\cup\{b\}}w_i>1$. The items in $S\cup\{b\}$ are those of
highest density; hence the ratio of their total value to their total weight is at least the
corresponding ratio of the entire set:
\[
\frac{\sum_{i\in S\cup\{b\}}v_i}{\sum_{i\in S\cup\{b\}}w_i}\;\ge\;\frac{\sum_i v_i}{\sum_i w_i}.
\]
Using $\sum_{i\in S\cup\{b\}}w_i>1$ and $\sum_i w_i\le W$,
\[
\sum_{i\in S}v_i+v_b\;=\;\sum_{i\in S\cup\{b\}}v_i
\;\ge\;\left(\sum_{i\in S\cup\{b\}}w_i\right)\frac{\sum_i v_i}{\sum_i w_i}
\;>\;\frac1W\sum_i v_i.
\]
Since $v_b\le v_{\max}$, we conclude $\sum_{i\in S}v_i\ge\frac1W\sum_i v_i-v_{\max}$.
\end{proofof}
 

\section{Optimal Mechanism for Trees over $\veps$-discretization}
\label{appendix:trees}

In this appendix, we show that when the spillover topology forms a rooted tree, a near-optimal solution can be computed via dynamic programming. We present an algorithm that handles arbitrary branching factors.

\paragraph{Setting.}
Consider instances where the interaction graph $G=(\mathcal{N},E)$ forms a rooted tree. Each player (except the root) receives spillovers from exactly one parent. Under the graph-based spillover model of Example~\ref{graph example} with linear costs $c_i(x_i) = c_i \cdot x_i \quad (c_i>0)$, each node's quality depends only on its own effort and its parent's effort:
\[
Q_i(x_i, x_{par}) = x_i (q_i + g_{par,i} \cdot x_{par}),
\]
where $q_i > 0$ is player~$i$'s intrinsic quality and $g_{par,i} \geq 0$ is the spillover coefficient from player~$i$'s parent. The utility of player $i$ is
\[
U_i = p_i \cdot Q_i(x_i, x_{par}) - c_i \cdot x_i = x_i \cdot \left[ p_i(q_i + g_{par,i} \cdot x_{par}) - c_i \right].
\]
Since utility is linear in $x_i$, best responses are \emph{binary}: $x_i^\star \in \{0, 1\}$. 

\paragraph{Discretization.}
Let $\veps > 0$ be the granularity parameter. We discretize allocations:
\[
\mathcal{B}(\veps) = \{k\veps : k = 0, 1, \ldots, \lfloor 1/\veps \rfloor\}
\]
with $|\mathcal{B}(\veps)| = O(1/\veps)$. Effort levels take values in $\mathcal{X} = \{0, 1\}$. For any $x \geq 0$, we define the \emph{$\veps$-ceiling} $\lceil x \rceil_\veps := \lceil x / \veps \rceil \cdot \veps$, the smallest multiple of $\veps$ that is at least $x$.

\paragraph{Incentive costs.}
For each node $u$, the \emph{incentive cost} for effort level $x_u = 1$ given parent effort $x_{par}$ is:
\[
\rho_u(x_{par}) = \frac{c_u}{q_u + g_{par,u} \cdot x_{par}},
\]
representing the minimum allocation to make $x_u = 1$ incentive-compatible. We round to the discrete grid:
\[
\hat{\rho}_u(x_{par}) = \lceil \rho_u(x_{par}) \rceil_\veps.
\]
Note that $\hat{\rho}_u(0) \geq \hat{\rho}_u(1)$: positive parent effort reduces the allocation shares needed to incentivize participation.

\paragraph{Key insight.}
The tree structure enables a recursive decomposition. If we fix node $u$'s effort level $x_u$, the children's subtrees become \emph{independent} optimization problems coupled only by the shared allocation constraint. This motivates a bottom-up DP computing, for each node, the optimal total quality achievable in its subtree as a function of available allocation and parent effort.

\paragraph{Value functions.}
For each node $u$ with children $v_1, \ldots, v_m$, we define two value functions. First, the \emph{descendants value}: for $b \in \mathcal{B}(\veps)$ and $x_u \in \{0,1\}$,
\begin{equation} \label{eq:Tu-def}
T_u(b, x_u) \;:=\; \max_{\substack{b_1, \ldots, b_m \in \mathcal{B}(\veps) \\ b_1 + \cdots + b_m \leq b}} \;\sum_{j=1}^{m} V_{v_j}(b_j, x_u),
\end{equation}
i.e., the maximum total quality achievable from $u$'s children's subtrees when the total allocation to those subtrees is at most $b$ and $u$ exerts effort $x_u$. When $u$ is a leaf ($m=0$), $T_u(b, x_u) = 0$ for all $b, x_u$.

Second, the \emph{subtree value}: for $b \in \mathcal{B}(\veps)$ and $x_{par} \in \{0,1\}$,
\begin{equation} \label{eq:Vu-def}
V_u(b, x_{par}) \;:=\; \max_{\substack{x_u \in \{0,1\} \\ \hat{\rho}_u(x_{par}) \cdot x_u \leq b}} \left\{ Q_u(x_u, x_{par}) + T_u\!\left(b - \hat{\rho}_u(x_{par}) \cdot x_u,\; x_u\right) \right\},
\end{equation}
i.e., the maximum total quality in $u$'s entire subtree (including $u$ itself), given allocation budget $b$ and parent effort $x_{par}$.

To extract the optimal solution, we also store optimal decisions $X_u^\star(b, x_{par})$ and $B_u^\star(b, x_u, v)$.

\paragraph{Algorithm overview.}
The algorithm proceeds in two phases. \textit{Phase 1} (Bottom-Up DP) processes nodes from leaves to root, computing value functions via subroutine \textsc{SCBA} (Sequential Children Budget Allocation) for aggregating children. \textit{Phase 2} (Top-Down Extraction) reconstructs the optimal allocation $\bft{p}^\star$.

\begin{algorithm}[t]
\caption{Hierarchical Optimal Payment (HOP)}
\label{alg:HOP}

\begin{algorithmic}[1]
\REQUIRE Tree $G$ with root $r$, intrinsic qualities $(q_u)$, spillover coefficients $(g_{par,u})$, costs $(c_u)$, granularity $\veps$
\ENSURE Optimal allocation $\mathbf{p}^\star \in \mathcal{B}(\veps)^N$

\STATE $\mathcal{B}(\veps) \gets \{k\veps : k = 0, 1, \ldots, \lfloor 1/\veps \rfloor\}$

\FOR{$(u, x_{par}) \in (\mathcal{N} \setminus \{r\}) \times \{0,1\}$}    \STATE $\hat{\rho}_u(x_{par}) \gets \lceil \rho_u(x_{par}) \rceil_\veps$
    \label{line:rho}
\ENDFOR
\STATE $\hat{\rho}_r(0) \gets \lceil \rho_r(0) \rceil_\veps$

\FOR{$u \in G$ in post-order}
\label{line:postorder}

    \FOR{$x_u \in \{0,1\}$}
        \STATE $T_u(\cdot, x_u), B_u^\star(\cdot, x_u, \cdot)
        \gets \text{\textsc{SCBA}}\left(u,\; x_u,\; \{V_v : v \in \text{children}(u)\}\right)$
        \label{line:scba-call}
    \ENDFOR

    \FOR{$(x_{par}, b) \in \{0,1\} \times \mathcal{B}(\veps)$}
    \label{line:vu-start}

        \IF{$b \geq \hat{\rho}_u(x_{par})$}
        \label{line:vu-if}
            \STATE $V_u(b, x_{par}) \gets
            \max\left\{
                T_u(b, 0),\;
                Q_u(1, x_{par}) + T_u\!\left(b - \hat{\rho}_u(x_{par}),\, 1\right)
            \right\}$
            \label{line:vu-max}

            \STATE $X_u^\star(b, x_{par}) \gets \arg\max$ of above
        \ELSE
            \STATE $V_u(b, x_{par}) \gets T_u(b, 0)$
            \STATE $X_u^\star(b, x_{par}) \gets 0$
            \label{line:vu-end}
        \ENDIF

    \ENDFOR
\ENDFOR

\RETURN $\text{\textsc{Extract}}\!\left(r,\; \{X_u^\star\}_{u},\; \{B_u^\star\}_{u},\; \{\hat{\rho}_u\}_{u}\right)$
\label{line:return}

\end{algorithmic}
\end{algorithm}

\begin{algorithm}[t]
\caption{Sequential Children Budget Allocation (SCBA)}
\label{alg:SCBA}

\begin{algorithmic}[1]
\REQUIRE Node $u$, effort level $x_u$, value functions $\{V_v : v \in \text{children}(u)\}$
\ENSURE Value function $T_u(\cdot, x_u)$ and budget decisions $B_u^\star(\cdot, x_u, \cdot)$

\STATE $T_u(b, x_u) \gets 0$ for all $b \in \mathcal{B}(\veps)$
\label{line:scba-init}

\STATE Let $v_1, \ldots, v_m$ be the children of $u$

\FOR{$i = 1, \ldots, m$}
\label{line:scba-child-loop}

    \FOR{$b \in \mathcal{B}(\veps)$ in decreasing order}
    \label{line:scba-budget-loop}

        \STATE $T_u(b, x_u) \gets
        \max_{b' \in \mathcal{B}(\veps):\, b' \leq b}
        \left\{
            T_u(b - b', x_u) + V_{v_i}(b', x_u)
        \right\}$
        \label{line:scba-update}

        \STATE $B_u^\star(b, x_u, v_i) \gets \arg\max$ of above

    \ENDFOR
\ENDFOR

\RETURN $T_u(\cdot, x_u), B_u^\star(\cdot, x_u, \cdot)$

\end{algorithmic}
\end{algorithm}

\begin{algorithm}[t]
\caption{Solution Extraction}
\label{alg:Extract}

\begin{algorithmic}[1]
\REQUIRE Root $r$, decision tables $\{X_u^\star\}_{u \in \mathcal{N}}$, budget tables $\{B_u^\star\}_{u \in \mathcal{N}}$, incentive costs $\{\hat{\rho}_u\}_{u \in \mathcal{N}}$
\ENSURE Optimal allocation $\mathbf{p}^\star = (p_1, \ldots, p_N)$

\STATE $p_u \gets 0$ for all $u \in \mathcal{N}$
\STATE $\text{\textsc{ExtractRec}}(r, \max \mathcal{B}(\veps), 0)$
\RETURN $\mathbf{p}^\star$

\STATE
\STATE \textbf{Function} $\text{\textsc{ExtractRec}}(u, b, x_{par})$

\STATE $x_u \gets X_u^\star(b, x_{par})$
\STATE $p_u \gets \hat{\rho}_u(x_{par}) \cdot x_u$
\STATE $b_{rem} \gets b - p_u$
\STATE Let $v_1, \ldots, v_m$ be the children of $u$

\FOR{$i = m, \ldots, 1$}
    \STATE $b_{v_i} \gets B_u^\star(b_{rem}, x_u, v_i)$
    \STATE $\text{\textsc{ExtractRec}}(v_i, b_{v_i}, x_u)$
    \STATE $b_{rem} \gets b_{rem} - b_{v_i}$
\ENDFOR

\STATE \textbf{End Function}

\end{algorithmic}
\end{algorithm}

\paragraph{Algorithm details.}
Algorithm~\ref{alg:HOP} first pre-computes all incentive costs $\hat{\rho}_u(x_{par})$ for $x_{par} \in \{0,1\}$ (Line~\ref{line:rho}). Then, for each node $u$ in post-order (leaves to root, Line~\ref{line:postorder}), it:
\begin{enumerate}
    \item Calls \textsc{SCBA} (Algorithm~\ref{alg:SCBA}, Line~\ref{line:scba-call}) to compute $T_u(b, x_u)$: the maximum total quality from $u$'s descendants given allocation budget $b$ and effort $x_u$. The value functions $\{V_v\}$ of $u$'s children, already computed in previous post-order iterations, are passed as input. \textsc{SCBA} uses \emph{sequential convolution}--processing children one by one, updating $T_u$ by considering all ways to split the budget between the current child and previously processed children. The budget loop (Line~\ref{line:scba-budget-loop}) iterates in decreasing order to ensure correct in-place updates.
    \item Computes $V_u(b, x_{par})$ (Lines~\ref{line:vu-start}--\ref{line:vu-end} of Algorithm~\ref{alg:HOP}): for each pair $(x_{par}, b)$, if the budget $b$ is at least $\hat{\rho}_u(x_{par})$ (Line~\ref{line:vu-if}), both effort levels are considered--choosing $x_u = 0$ yields value $T_u(b, 0)$, while choosing $x_u = 1$ contributes quality $Q_u(1, x_{par}) = q_u + g_{par,u} \cdot x_{par}$, consumes allocation $\hat{\rho}_u(x_{par})$, and passes positive spillover to descendants. If $b < \hat{\rho}_u(x_{par})$, only $x_u = 0$ is feasible.
\end{enumerate}

After the bottom-up phase, Algorithm~\ref{alg:Extract} reconstructs the optimal allocation via top-down traversal. The decision tables $\{X_u^\star\}$, budget tables $\{B_u^\star\}$, and incentive costs $\{\hat{\rho}_u\}$ are passed as input. It processes children in reverse order (from $v_m$ to $v_1$) to correctly recover the budget splits stored during \textsc{SCBA}.


We now establish the correctness of the algorithm through two lemmas: the first shows that \textsc{SCBA} correctly computes the descendants value $T_u$, and the second uses it to show that Algorithm~\ref{alg:HOP} correctly computes the subtree value $V_u$.

\begin{lemma}
\label{lemma:scba}
Let $u$ be a node with children $v_1, \ldots, v_m$. If $V_{v_j}$ satisfies~\eqref{eq:Vu-def} for every child $v_j$, then upon termination of $\textsc{SCBA}(u, x_u, \{V_v\})$ (Algorithm~\ref{alg:SCBA}), $T_u(\cdot, x_u)$ satisfies~\eqref{eq:Tu-def}.
\end{lemma}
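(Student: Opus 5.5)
The plan is induction on the number of processed children, using the loop invariant standard for sequential knapsack convolution. For $i \in \{0,\dots,m\}$ let
\[
T^{(i)}(b) := \max_{\substack{b_1,\dots,b_i \in \mathcal{B}(\veps)\\ b_1+\cdots+b_i \le b}} \sum_{j=1}^{i} V_{v_j}(b_j, x_u),
\]
with $T^{(0)} \equiv 0$. The invariant is that after the $i$-th iteration of the outer loop (Line~\ref{line:scba-child-loop}), the array $T_u(\cdot, x_u)$ equals $T^{(i)}(\cdot)$ on all of $\mathcal{B}(\veps)$. Taking $i=m$ gives exactly \eqref{eq:Tu-def}. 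The case $m=0$ holds because of the initialization on Line~\ref{line:scba-init}.

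\emph{Base case and recurrence.} The base case $i=0$ is the initialization. For the inductive step I first prove the recurrence
\[
T^{(i)}(b) = \max_{b' \in \mathcal{B}(\veps),\, b' \le b} \left\{ T^{(i-1)}(b-b') + V_{v_i}(b', x_u) \right\}.
\]
For ``$\le$'': take an optimal tuple $(b_1,\dots,b_i)$ and set $b'=b_i$. Then $(b_1,\dots,b_{i-1})$ is feasible for $T^{(i-1)}(b-b_i)$, since $b-b_i$ is again a grid point and the remaining sum is at most $b-b_i$. For ``$\ge$'': concatenate an optimal tuple for $T^{(i-1)}(b-b')$ with $b_i=b'$.

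Two small facts are needed here. First, $\mathcal{B}(\veps)$ is closed under the subtraction $b-b'$ whenever $b'\le b$, since both are integer multiples of $\veps$. Second, all maxima are over finite sets, so they are attained. The hypothesis that each $V_{v_j}$ satisfies \eqref{eq:Vu-def} is used only to identify the array entries $V_{v_i}(b',x_u)$ with the intended values. Beyond that, the argument treats them as arbitrary given numbers.

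\emph{In-place update (main obstacle).} The step needing care is that Line~\ref{line:scba-update} overwrites $T_u$ in place, so I must check that the right-hand side reads old values $T^{(i-1)}$ and not already-updated $T^{(i)}$ values. Because the budget loop runs over $b$ in decreasing order, when entry $b$ is processed only entries $b''>b$ have been overwritten. The right-hand side reads $T_u(b-b',x_u)$ with $b'\ge 0$, i.e.\ indices $\le b$.

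There is one subtle term: $b'=0$ reads index $b$ itself, which has not yet been overwritten at the moment of evaluation, so it still holds $T^{(i-1)}(b)$. I will state this explicitly, noting that the max is evaluated before the assignment. So every read index is at most $b$, and none of them has been modified in iteration $i$. Hence each read equals $T^{(i-1)}$, and by the recurrence the written value is $T^{(i)}(b)$. Once written, index $b$ is never read again in iteration $i$, since later budgets are smaller.

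This establishes the invariant. The stored $B_u^\star(b,x_u,v_i)$ is the maximizing $b'$ in the recurrence, which is what \textsc{Extract} later uses, though that is not needed for the lemma itself.
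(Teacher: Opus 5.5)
Your proposal is correct and follows essentially the same route as the paper: induction over the processed children, with the invariant that $T_u(\cdot,x_u)$ equals the optimum over $v_1,\dots,v_i$, and with the decreasing budget order guaranteeing that every read of $T_u(b-b',x_u)$ still holds its iteration-$(i-1)$ value. Your explicit two-sided proof of the recurrence, and your handling of the $b'=0$ term (index $b$ is read before it is overwritten), are slightly more careful than the paper's parenthetical remark, which instead appeals to $V_{v_i}(0,x_u)=0$.
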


\begin{proof}
We prove by induction on the number of children processed in the outer loop (Line~\ref{line:scba-child-loop} of Algorithm~\ref{alg:SCBA}) that after processing children $v_1, \ldots, v_i$, for every $b \in \mathcal{B}(\veps)$:
\begin{equation} \label{eq:scba-inductive}
T_u(b, x_u) = \max_{\substack{b_1, \ldots, b_i \in \mathcal{B}(\veps) \\ b_1 + \cdots + b_i \leq b}} \sum_{j=1}^{i} V_{v_j}(b_j, x_u).
\end{equation}

\textit{Initialization.} Before any child is processed, $T_u(b, x_u) = 0$ for all $b \in \mathcal{B}(\veps)$ (Line~\ref{line:scba-init}). If $m = 0$ (no children), the claim holds vacuously: the empty sum equals $0$, and the only feasible allocation is the empty one.

\textit{Inductive step.} Suppose~\eqref{eq:scba-inductive} holds for $i-1$. In the $i$-th iteration, the inner loop (Line~\ref{line:scba-budget-loop}) iterates over budget values $b \in \mathcal{B}(\veps)$ in \emph{decreasing} order. For each $b$, Line~\ref{line:scba-update} computes:
\[
T_u(b, x_u) \gets \max_{b' \in \mathcal{B}(\veps):\, b' \leq b} \left\{ T_u(b - b', x_u) + V_{v_i}(b', x_u) \right\}.
\]
Since $b$ is processed in decreasing order, when we evaluate $T_u(b - b', x_u)$ on the right-hand side for any $b' \leq b$, we have $b - b' \leq b$. Since all budget values strictly less than $b$ have not yet been updated in the current ($i$-th) iteration, $T_u(b - b', x_u)$ still holds the value from iteration $i-1$. By the inductive hypothesis, this equals the optimum over $v_1, \ldots, v_{i-1}$ with budget $b - b'$. (When $b' = 0$, we have $V_{v_i}(0, x_u) = 0$, and $T_u(b, x_u)$ retains its previous value, which is also correct.)

Therefore, the update computes:
\[
T_u(b, x_u) = \max_{b' \in \mathcal{B}(\veps):\, b' \leq b} \left\{ \left( \max_{\substack{b_1, \ldots, b_{i-1} \in \mathcal{B}(\veps) \\ \sum_{j=1}^{i-1} b_j \leq b - b'}} \sum_{j=1}^{i-1} V_{v_j}(b_j, x_u) \right) + V_{v_i}(b', x_u) \right\}.
\]
Setting $b_i = b'$, this equals:
\[
T_u(b, x_u) = \max_{\substack{b_1, \ldots, b_i \in \mathcal{B}(\veps) \\ b_1 + \cdots + b_i \leq b}} \sum_{j=1}^{i} V_{v_j}(b_j, x_u),
\]
completing the inductive step. After all $m$ children, $T_u$ satisfies~\eqref{eq:Tu-def}.
\end{proof}

\begin{lemma}
\label{lemma:Vu}
For every node $u \in \mathcal{N}$, the values $V_u(b, x_{par})$ computed in Lines~\ref{line:vu-start}--\ref{line:vu-end} of Algorithm~\ref{alg:HOP} satisfy~\eqref{eq:Vu-def} for all $(b, x_{par}) \in \mathcal{B}(\veps) \times \{0,1\}$.
\end{lemma}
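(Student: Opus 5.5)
We argue by strong induction over the post-order in which Algorithm~\ref{alg:HOP} processes nodes (Line~\ref{line:postorder}). The inductive claim is that when $u$ is processed, every child $v$ of $u$ already has $V_v(\cdot,\cdot)$ satisfying~\eqref{eq:Vu-def} for all $(b,x_{par})\in\mathcal{B}(\veps)\times\{0,1\}$. For a leaf this hypothesis is vacuous. In general it holds because post-order visits all children before their parent, and each child satisfies the claim by the induction hypothesis applied to it.

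Fix a node $u$ and assume the claim for its children. We first apply Lemma~\ref{lemma:scba} separately for $x_u=0$ and for $x_u=1$. It gives that, after Line~\ref{line:scba-call}, both $T_u(\cdot,0)$ and $T_u(\cdot,1)$ satisfy~\eqref{eq:Tu-def}. For a leaf both are identically $0$, which matches the convention that $T_u\equiv 0$ when $m=0$.

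We then fix $(x_{par},b)$ and unfold the maximization in~\eqref{eq:Vu-def}. Its feasible set is $\{x_u\in\{0,1\}:\hat\rho_u(x_{par})x_u\le b\}$.

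The option $x_u=0$ is always feasible. It contributes $Q_u(0,x_{par})+T_u(b,0)$, and $Q_u(0,x_{par})=0$ because $Q_u(x_u,x_{par})=x_u(q_u+g_{par,u}x_{par})$.

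The option $x_u=1$ is feasible exactly when $b\ge\hat\rho_u(x_{par})$. It then contributes $Q_u(1,x_{par})+T_u(b-\hat\rho_u(x_{par}),1)$. Here $b-\hat\rho_u(x_{par})$ lies in $\mathcal{B}(\veps)$, since both terms are multiples of $\veps$, the difference is nonnegative and it is at most $b$. So this entry of $T_u$ is well defined and was computed by \textsc{SCBA}.

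Splitting into the two cases of Line~\ref{line:vu-if} completes the inductive step. When $b\ge\hat\rho_u(x_{par})$, the right-hand side of~\eqref{eq:Vu-def} is the maximum of the two quantities above, which is exactly the assignment in Line~\ref{line:vu-max}. Otherwise only $x_u=0$ is feasible, and the value is $T_u(b,0)$, as set in the else branch. The root is handled by the same argument with $x_{par}=0$. Here only $\hat\rho_r(0)$ is needed, and it is precomputed separately.

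The main obstacle is bookkeeping rather than mathematics. First, we must make sure the child values $V_v$ passed to \textsc{SCBA} are final when they are used, which the post-order guarantees. Second, we must check that every budget index used lies in the grid $\mathcal{B}(\veps)$, which holds because $\hat\rho_u$ is defined via the $\veps$-ceiling. Beyond this, the proof is a direct case-by-case match between the algorithm's lines and the definition~\eqref{eq:Vu-def}.
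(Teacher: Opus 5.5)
Your proposal is correct and follows essentially the same route as the paper. Both arguments induct over the tree: you index the induction by post-order position and absorb leaves into the general step, whereas the paper inducts on subtree height with an explicit leaf base case. Both then use Lemma~\ref{lemma:scba} to certify $T_u(\cdot,0)$ and $T_u(\cdot,1)$, match the two branches of Line~\ref{line:vu-if} to the feasible choices of $x_u$ in~\eqref{eq:Vu-def}, and check that $b-\hat\rho_u(x_{par})\in\mathcal{B}(\veps)$.

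The one loose end, which the paper's proof also skips, is the root with $x_{par}=1$: the algorithm never defines $\hat\rho_r(1)$. Taking $g_{par,r}=0$, so that $\hat\rho_r(1)=\hat\rho_r(0)$, makes your argument apply verbatim, and only $V_r(\cdot,0)$ is used downstream anyway.
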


\begin{proof}
We prove by induction on the height of $u$'s subtree.

\textit{Base case.} For a leaf node $u$ (height $0$), node $u$ has no children, so by Lemma~\ref{lemma:scba} applied with $m = 0$, $T_u(b, x_u) = 0$ for all $b$ and $x_u$. In Lines~\ref{line:vu-start}--\ref{line:vu-end}:
\begin{itemize}
    \item If $b \geq \hat{\rho}_u(x_{par})$: Line~\ref{line:vu-max} computes $V_u(b, x_{par}) = \max\{0,\; Q_u(1, x_{par})\} = Q_u(1, x_{par})$, since $Q_u(1, x_{par}) = q_u + g_{par,u} \cdot x_{par} > 0$. This matches~\eqref{eq:Vu-def}: the budget suffices to incentivize $u$, and participation yields positive quality.
    \item If $b < \hat{\rho}_u(x_{par})$: $V_u(b, x_{par}) = 0$. This matches~\eqref{eq:Vu-def}: only $x_u = 0$ is feasible, contributing $Q_u(0, x_{par}) = 0$.
\end{itemize}

\textit{Inductive step.} Let $u$ have subtree height $h > 0$ and children $v_1, \ldots, v_m$. By the inductive hypothesis, $V_{v_j}$ satisfies~\eqref{eq:Vu-def} for every child $v_j$ (each having subtree height at most $h-1$). By Lemma~\ref{lemma:scba}, the call to \textsc{SCBA} at Line~\ref{line:scba-call} yields $T_u$ satisfying~\eqref{eq:Tu-def} for each $x_u \in \{0,1\}$.

Now consider Lines~\ref{line:vu-start}--\ref{line:vu-end}. For any $(x_{par}, b) \in \{0,1\} \times \mathcal{B}(\veps)$:
\begin{itemize}
    \item If $b \geq \hat{\rho}_u(x_{par})$, both effort levels are feasible. Line~\ref{line:vu-max} computes:
    \[
    V_u(b, x_{par}) = \max\!\left\{\underbrace{T_u(b, 0)}_{\text{skip } u},\;\; \underbrace{Q_u(1, x_{par}) + T_u\!\left(b - \hat{\rho}_u(x_{par}),\, 1\right)}_{\text{incentivize } u}\right\}.
    \]
    The first branch corresponds to $x_u = 0$ (zero quality from $u$, full budget to descendants); the second to $x_u = 1$ (quality $Q_u(1, x_{par})$ from $u$, allocation $\hat{\rho}_u(x_{par})$ consumed, remaining budget to descendants). Since $T_u$ satisfies~\eqref{eq:Tu-def}, each branch optimally distributes the available budget among descendants. This matches~\eqref{eq:Vu-def}.

    \item If $b < \hat{\rho}_u(x_{par})$, only $x_u = 0$ is feasible, and $V_u(b, x_{par}) = T_u(b, 0)$, matching~\eqref{eq:Vu-def}.
\end{itemize}
Note that $b - \hat{\rho}_u(x_{par}) \in \mathcal{B}(\veps)$ whenever $b \geq \hat{\rho}_u(x_{par})$, since both are multiples of $\veps$.
\end{proof}

\begin{theorem}
\label{thm:hop}
Algorithm~\ref{alg:HOP} computes an optimal solution to Problem~\eqref{eq:design-problem} over allocations in $\mathcal{B}(\veps)^N$, in time $O(N / \veps^2)$.
\end{theorem}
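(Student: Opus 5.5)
The proof splits into a correctness part and a running-time part, with most of the work in correctness.

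\emph{Correctness.} By Lemma~\ref{lemma:Vu}, applied at the root $r$ with $x_{par}=0$ and $b=\max\mathcal{B}(\veps)$, the value $V_r(\max\mathcal{B}(\veps),0)$ equals the maximum of the recursion~\eqref{eq:Vu-def}. I will then show that this recursion value equals the optimum of Problem~\eqref{eq:design-problem} over $\mathcal{B}(\veps)^N$. This needs two inequalities.

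\textbf{Achievability.} Consider the allocation $\mathbf{p}$ produced by \textsc{Extract}, which sets $p_u=\hat\rho_u(x_{par})x_u$. I claim the greatest equilibrium $\overline{\mathbf{x}}(\mathbf{p})$ is at least the DP's effort profile $\mathbf{x}$, argued top-down.
\begin{itemize}
\item A node with $x_u=1$ has $p_u\ge\rho_u(x_{par})$. So if its parent's effort is at least $x_{par}$, playing $1$ is a best response (ties resolve upward).
\item Hence $\mathbf{x}$ satisfies $\mathbf{x}\le \mathrm{BR}(\mathbf{x})$. Iterating best responses from $\mathbf{x}$ climbs to an equilibrium, which by Proposition~\ref{thm:stability_selection} lies below $\overline{\mathbf{x}}(\mathbf{p})$.
\end{itemize}
Because $SW$ is monotone, $SW(\overline{\mathbf{x}}(\mathbf{p}))\ge SW(\mathbf{x})$, which equals the DP value. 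Feasibility of $\mathbf{p}$ follows from the budget bookkeeping in~\eqref{eq:Tu-def} and~\eqref{eq:Vu-def}.

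\textbf{Optimality.} Take any feasible $\mathbf{p}\in\mathcal{B}(\veps)^N$ and set $\mathbf{x}^\star=\overline{\mathbf{x}}(\mathbf{p})$. By Observation~\ref{obs linear utility action}, every $x^\star_u$ lies in $\{0,1\}$. Each active node satisfies $p_u\ge\rho_u(x^\star_{par})$, and since $p_u$ is a grid point, also $p_u\ge\hat\rho_u(x^\star_{par})$. I then build a feasible solution of the recursion by induction on subtree height. At node $u$ I use effort $x^\star_u$, and I give each child $v$ the budget equal to the total $\sum_{w\in\text{subtree}(v)}p_w$, a multiple of $\veps$. The budget consumed by $u$ is $\hat\rho_u\le p_u$, so every constraint holds. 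The qualities $Q_u(x^\star_u,x^\star_{par})$ sum to $SW(\mathbf{x}^\star)$, so the DP value is at least $SW(\mathbf{x}^\star)$.

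I expect the main obstacle to be the achievability direction. The DP charges $\hat\rho_u$ even when $x_u=0$ is chosen, and the greatest equilibrium could activate additional nodes. The monotone-climb argument above handles both issues, since extra activations only raise welfare. I also need to check that \textsc{Extract} indeed returns the budget splits recorded in $B^\star$; I will do this by mirroring the induction in Lemma~\ref{lemma:scba}.

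\emph{Running time.} Computing $\hat\rho$ takes $O(N)$. For each node and each $x_u$, \textsc{SCBA} spends $O(|\mathcal{B}|^2)=O(1/\veps^2)$ per child. Each node is the child of exactly one parent, so the total is $O(N/\veps^2)$. Computing $V_u$ costs $O(1/\veps)$ per node, and extraction is $O(N)$.
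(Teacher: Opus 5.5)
Your proposal is correct. Its skeleton matches the paper's proof: Lemma~\ref{lemma:Vu} at the root with $x_{par}=0$, extraction along the stored argmaxes, and the $O(NK^2)$ count that charges each node once as a child.

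Where you differ is the correctness step. The paper passes directly from Lemma~\ref{lemma:Vu}, which only certifies that the tables satisfy the recursion~\eqref{eq:Vu-def}, to the claim that $V_r(\max\mathcal{B}(\veps),0)$ is the optimum of Problem~\eqref{eq:design-problem}. It leaves implicit why solutions of the recursion correspond to greatest equilibria of grid allocations. Your two inequalities make this link explicit:
\begin{itemize}
\item \emph{Achievability} comes from $\mathbf{x}\le\mathrm{BR}(\mathbf{x})$ and the monotone climb to a fixed point lying below $\overline{\mathbf{x}}(\mathbf{p})$.
\item \emph{Optimality} rests on the observation that a grid share with $p_u\ge\rho_u(x^\star_{par})$ automatically satisfies $p_u\ge\hat\rho_u(x^\star_{par})$, so rounding to $\hat\rho$ costs nothing on $\mathcal{B}(\veps)$. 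You combine this with subtree-sum budgets, which are admissible because every budget constraint in~\eqref{eq:Tu-def} and~\eqref{eq:Vu-def} is an inequality.
\end{itemize}
This is the step that actually ties the DP to the equilibrium-constrained design problem, so your version is the more complete proof.

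One small correction concerns the obstacle you anticipate. The recursion charges $\hat\rho_u(x_{par})\cdot x_u$, so nothing is charged when $x_u=0$. Moreover, under the extracted allocation every node with $x_u=0$ receives $p_u=0$ and has $c_u>0$, so the greatest equilibrium in fact coincides with the DP profile. Your monotone-climb argument remains valid; it is just stronger than needed.
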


\begin{proof}
\textbf{Correctness.} By Lemma~\ref{lemma:Vu}, $V_r(\max \mathcal{B}(\veps), 0)$ equals the maximum total quality achievable over all discretized allocations summing to at most $\max \mathcal{B}(\veps) \leq 1$, where the root has no parent (effort $x_{par} = 0$). Algorithm~\ref{alg:Extract} reconstructs the optimal allocation by following the stored argmax decisions (see Remark~\ref{rem:extraction} below).

\textbf{Runtime.} Let $K = |\mathcal{B}(\veps)| = \lfloor 1/\veps \rfloor + 1 = O(1/\veps)$, and let $d_u$ denote the number of children of node $u$.

\textit{Incentive costs.} Computing $\hat{\rho}_u(x_{par})$ for all $(u, x_{par}) \in \mathcal{N} \times \{0,1\}$ requires $2N$ evaluations, each in $O(1)$ time.

\textit{SCBA.} Consider a single call $\textsc{SCBA}(u, x_u, \cdot)$ (Algorithm~\ref{alg:SCBA}). For each child $v_i$, the inner loop iterates over $K$ values of $b$, and for each $b$ computes a maximum over at most $K$ values of $b'$, costing $O(K^2)$ per child. With $d_u$ children, one call costs $O(d_u \cdot K^2)$. Since \textsc{SCBA} is called twice per node (once per $x_u \in \{0,1\}$), the total cost across all \textsc{SCBA} calls is
\[
\sum_{u \in \mathcal{N}} 2 \cdot d_u \cdot K^2 = 2K^2 \sum_{u \in \mathcal{N}} d_u = 2K^2(N-1) = O(N K^2).
\]

\textit{Node optimization.} In Lines~\ref{line:vu-start}--\ref{line:vu-end} of Algorithm~\ref{alg:HOP}, for each node $u$ the loop iterates over $|\{0,1\}| \cdot K = 2K$ pairs $(x_{par}, b)$, each requiring $O(1)$ operations (a single comparison and table lookup). The total cost across all nodes is $2NK = O(NK)$.

\textit{Extraction.} Algorithm~\ref{alg:Extract} visits each node exactly once, performing $O(1)$ operations per node, for a total of $O(N)$.

Summing all contributions: $O(N) + O(NK^2) + O(NK) + O(N) = O(NK^2) = O(N/\veps^2)$.
\end{proof}

\begin{remark}[Extraction Correctness]
\label{rem:extraction}
Algorithm~\ref{alg:Extract} follows stored decisions to reconstruct the optimal allocation. At each node $u$ with budget $b$ and parent effort $x_{par}$:
\begin{enumerate}
    \item It retrieves $x_u = X_u^\star(b, x_{par})$ and sets $p_u = \hat{\rho}_u(x_{par}) \cdot x_u$ (zero if $x_u = 0$).
    \item The remaining budget $b_{rem} = b - p_u$ is distributed among children.
    \item Processing children in reverse order ($v_m, \ldots, v_1$): $B_u^\star(b_{rem}, x_u, v_m)$ gives $v_m$'s allocation when the total children budget is $b_{rem}$; after subtracting $b_{v_m}$, $B_u^\star(b_{rem} - b_{v_m}, x_u, v_{m-1})$ gives $v_{m-1}$'s allocation; and so on.
\end{enumerate}
This exactly reverses the sequential convolution in Algorithm~\ref{alg:SCBA}, recovering the optimal budget split. By construction, the total allocation satisfies $\sum_i p_i \leq \max \mathcal{B}(\veps)$, and the induced total quality equals $V_r(\max \mathcal{B}(\veps), 0)$.
\end{remark}

\begin{remark}
The runtime is independent of the maximum branching factor. A node with many children requires more convolutions, but this is offset by those children having fewer children on average. The total runtime is proportional to the number of edges in any tree, which is ($N-1$).
\end{remark}

\section{Proofs Omitted from Section \ref{dense graph}}

\begin{proofof}{thm random alg}
Our proof proceeds in five parts. First, we formalize the exact allocation selection and lowest costs. Second, we bound the minimal number of players that \text{\stocalg} can incentivize under a clean event. Similarly, in the third step, we bound the maximal number of players the optimal allocation $\bft{p}^\star$ can incentivize under a clean event. In the fourth step, we compare the welfare induced by the active set of \text{\stocalg} and the optimal active set. Lastly, we quantify the probability for the clean event.

\paragraph{Step 1. Formalizing the exact allocations and lowest costs}

We write the proof in terms of the unnormalized quality denominator. Let $\mu = \bar{q} r$. Notice that if $\mu = 1$ then $q^\star = 1$ and therefore $g_{ij} = q_i = r_{i, j} = 1$ for every $i,j \in [N]$, in which case all the qualities are equal and therefore choosing the least-cost players yields the largest incentivized set with the highest welfare. Thus, through the rest of the proof, we analyze for $\mu < 1$. For a
set of active players $S\subseteq[N]$, define
\[
    B_i(S) := q_i+\sum_{j\in S,\,j\ne i} r_{ij}g_{ij}.
\]
If $x^S$ is the binary effort profile that activates exactly the players in
$S$, then
\[
    SW(x^S)=\frac{1}{N}\sum_{i\in S}B_i(S).
\]

For convenience, define
\[
    \Phi(S):=\sum_{i\in S}B_i(S),
    \qquad
    SW(x^S)=\frac{1}{N}\Phi(S).
\]

The factor $1/N$ cancels from the incentive threshold of each player. Indeed, when the active
set is $S$, player~$i$'s utility can be written as
\[
    U_i(x)=\frac{1}{N}x_i\left(p_iB_i(S)-c_i\right).
\]
Thus player $i$ is willing to exert effort in the active set $S$ whenever
\[
    p_iB_i(S)\ge c_i.
\]
Throughout the proof, $\mathbf{\bar x(p)}$ denotes the greatest equilibrium. Hence, if
$p_iB_i(S) \geq c_i$, then we say that player $i$ is incentivized, i.e., $x_i=1$ is selected in the greatest
equilibrium. Since the costs are drawn from a continuous distribution, all
costs are strictly positive with probability one. 

Fix a feasible allocation vector $p$ that incentivizes a set $S$. For every active
player $i\in S$, the incentive constraint is
\[
    p_i\ge \frac{c_i}{B_i(S)}.
\]
Reducing $p_i$ to the tight value $c_i/B_i(S)$ for every $i\in S$ does not
change the induced active set in the greatest equilibrium and does not change
social welfare. Consequently, any feasible active set $S$ must satisfy
\begin{equation}\label{eq:tight-feasibility}
    \sum_{i\in S}\frac{c_i}{B_i(S)}\le 1 .
\end{equation}
Let $c_{(1)}\le c_{(2)}\le\cdots\le c_{(N)}$ denote the ordered costs, and
define
\[
    C_k:=\sum_{\ell=1}^k c_{(\ell)}.
\]
For any set $S$ of size $k$, the total cost of the players in $S$ is at least
$C_k$. This observation will be used when upper bounding the size of the
optimal active set.

\paragraph{Step 2. Lower bound on the size of the active set}

Intuitively, when $N$ is large enough, the numerator and denominator in Inequality~\eqref{eq:tight-feasibility} are close to their expectation values. If we could take the denominator as constant, then the left-hand side would read as $\frac{C_k}{N b}$ where $b$ is a constant. Therefore, the larger $N$ is, the finer the discretization in the denominator is, which means that \text{\stocalg} can choose a larger active set. Our goal in this section is to find a lower bound for the size of the active set as a function of $N$. To formalize this intuition, we start by calculating the concentration bounds on the costs and $B_i$.

\begin{lemma}\label{lem:uniform-costs-formal}
Let $c_1,\ldots,c_N\sim\mathrm{Uni}([0,1])$ independently, and let
$c_{(1)}\le\cdots\le c_{(N)}$ denote the order statistics. Define
$C_k=\sum_{\ell=1}^k c_{(\ell)}$. Then, with probability at least
$1-2e^{-2\sqrt N}$, for every $k\in[N]$,
\[
    \left|C_k-\frac{k^2}{2N}\right|
    \le 2N^{3/4}.
\]
\end{lemma}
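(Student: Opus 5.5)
Write $C_k$ through the empirical distribution function and apply the Dvoretzky--Kiefer--Wolfowitz (DKW) inequality. Let $F_N(t)=\frac1N\#\{i: c_i\le t\}$ be the empirical CDF of the costs. By DKW (Massart's constant),
\[
\Pr\Bigl[\sup_{t\in[0,1]}|F_N(t)-t|>\delta\Bigr]\le 2e^{-2N\delta^2}.
\]
Choosing $\delta=N^{-1/4}$ gives failure probability $2e^{-2\sqrt N}$, which is exactly the claimed bound. So the whole argument works on the event $E=\{\sup_t|F_N(t)-t|\le N^{-1/4}\}$, and the goal is to show deterministically that $E$ implies the bound on $C_k$ for all $k$ simultaneously.

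\textbf{Step 1: quantiles.} On $E$, each order statistic is close to its ideal location, namely
\[
\Bigl|c_{(\ell)}-\tfrac{\ell}{N}\Bigr|\le N^{-1/4}+\tfrac1N .
\]
For the upper bound, evaluate $F_N$ at $t=\ell/N+N^{-1/4}$. On $E$, $F_N(t)\ge \ell/N$, so at least $\ell$ costs are $\le t$, hence $c_{(\ell)}\le t$. For the lower bound, suppose $c_{(\ell)}< \ell/N-N^{-1/4}-1/N$. Then at any $t$ just above $c_{(\ell)}$ we have $F_N(t)\ge \ell/N > t+N^{-1/4}$, contradicting $E$. Care with ties and strict inequalities only costs the extra $1/N$ term.

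\textbf{Step 2: sum.} Summing over $\ell\le k$ and using $\sum_{\ell\le k}\ell/N=\frac{k(k+1)}{2N}$ gives
\[
\Bigl|C_k-\tfrac{k^2}{2N}\Bigr|\le kN^{-1/4}+\tfrac{k}{N}+\tfrac{k}{2N}\le N^{3/4}+\tfrac32,
\]
using $k\le N$. This is at most $2N^{3/4}$ whenever $N^{3/4}\ge 3/2$, i.e.\ for all $N\ge 2$. The case $N=1$ holds trivially, since $|C_1-\tfrac12|\le \tfrac12\le 2$.

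The only delicate part is the quantile-inversion in Step 1: translating the sup-norm control of $F_N$ into control of each $c_{(\ell)}$ with the right-continuity and tie conventions. The slack between $N^{3/4}+O(1)$ and $2N^{3/4}$ absorbs any off-by-one losses there. Everything else is bookkeeping, and the uniformity over $k$ comes for free because DKW controls the whole CDF at once.
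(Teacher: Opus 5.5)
Your proposal is correct and follows essentially the same route as the paper: DKW with $\delta=N^{-1/4}$, inversion of the sup-norm bound to control each order statistic $c_{(\ell)}$, then summation over $\ell\le k$. The paper evaluates $F_N$ directly at $t=c_{(\ell)}$, where $F_N(c_{(\ell)})=\ell/N$ almost surely, to get $|c_{(\ell)}-\ell/N|\le N^{-1/4}$ without your extra $1/N$ (harmless slack). Your explicit verification for every $N\ge 1$ is slightly more careful than the paper's ``for sufficiently large $N$''.
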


\begin{lemma}\label{lem:gcs-prefix-formal}
Let $S$ be the fixed set, chosen independently from $(r_i)_i, (q_i)_i$. For all sufficiently large $N$, with
probability at least $1-2N\exp\left(-\frac{\sqrt N}{2(q^\star)^2}\right)$, for every player $i \in S$ it holds that
\begin{align*}
B_i(S)\ge \mu k -2N^{3/4}.
\end{align*}
\end{lemma}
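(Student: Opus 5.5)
The plan is to treat $B_i(S)$ as a sum of independent bounded random variables and apply Hoeffding's inequality for each fixed $i\in S$, followed by a union bound over $i\in S$. Here $k=|S|$, and $S$ is fixed independently of $(q_i)_i$, $(r_{ij})_{ij}$ and $(g_{ij})_{ij}$.

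Fix $i\in S$ and write
\[
B_i(S)=q_i+\sum_{j\in S,\,j\neq i} Z_{ij}, \qquad Z_{ij}:=r_{ij}g_{ij}.
\]
Since $r_{ij}\sim\mathrm{Bern}(r)$ and $g_{ij}$ are independent, with $g_{ij}$ having mean $\bar q$, we get $\mathbb{E}[Z_{ij}]=r\bar q=\mu$ and $Z_{ij}\in[0,q^\star]$. The $k-1$ variables $\{Z_{ij}\}_{j\in S\setminus\{i\}}$ are mutually independent because the draws are independent across ordered pairs. Moreover, $q_i\ge 0$, so dropping it can only decrease $B_i(S)$. It therefore suffices to lower bound
\[
\Sigma_i:=\sum_{j\in S\setminus\{i\}} Z_{ij}, \qquad \mathbb{E}[\Sigma_i]=\mu(k-1).
\]

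Hoeffding's inequality with deviation $t=N^{3/4}$ gives
\[
\Pr\bigl[\Sigma_i\le \mu(k-1)-N^{3/4}\bigr]\le \exp\!\left(-\frac{2N^{3/2}}{(k-1)(q^\star)^2}\right)\le \exp\!\left(-\frac{2\sqrt N}{(q^\star)^2}\right),
\]
using $k-1\le N$. This is stronger than the $\exp\bigl(-\sqrt N/(2(q^\star)^2)\bigr)$ in the statement, so there is slack. If $k=1$, then $\Sigma_i=0$ and the claim holds trivially once $N^{3/4}\ge\mu$.

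On the complement event we have
\[
B_i(S)\ge \mu k-\mu-N^{3/4}\ge \mu k-2N^{3/4},
\]
since $\mu\le 1\le N^{3/4}$. This is where "sufficiently large $N$" enters, though $N\ge1$ already suffices. A union bound over the at most $N$ players in $S$ then gives failure probability at most
\[
N e^{-2\sqrt N/(q^\star)^2}\le 2N e^{-\sqrt N/(2(q^\star)^2)}.
\]

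The only delicate point is independence: $S$ must not depend on the realized $r,g,q$, and the variables for distinct $j$ must be independent. Both are granted by the hypothesis and by Definition~\ref{def:random-graph}. Everything else is routine.
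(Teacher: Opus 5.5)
Your proposal is correct and follows the paper's proof essentially step for step. For each fixed $i\in S$ you apply Hoeffding to the $k-1$ independent terms $r_{ij}g_{ij}\in[0,q^\star]$ with deviation $N^{3/4}$, use $k-1\le N$, absorb $\mu\le 1$ into a second $N^{3/4}$, drop $q_i\ge 0$, and take a union bound over $S$. Your explicit treatment of the degenerate case $k=1$ is a small tidying the paper omits.
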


We denote the event under which \Cref{lem:uniform-costs-formal} as $\mathcal{E}_c$. Therefore, the probability of this event satisfies that 
\begin{align*}
Pr(\mathcal{E}_c) \geq 1-2e^{-2\sqrt N}
\end{align*}

Similarly, we denote the event under which \Cref{lem:gcs-prefix-formal} is satisfied as $\mathcal{E}_q$. Therefore, the probability of this event satisfies that
\begin{align*}
Pr(\mathcal{E}_q) \geq  1-2Ne^{-\frac{\sqrt N}{2(q^\star)^2}}
\end{align*}

From \Cref{lem:uniform-costs-formal} and \Cref{lem:gcs-prefix-formal}, for any feasible set $S$ of size $k$ incentivized by \text{\stocalg}, we get that
\begin{align*}
\sum_{i\in S}\frac{c_i}{B_i(S)} \leq \frac{C_k}{\mu k - 2N^{\nicefrac{3}{4}}} \leq \frac{\frac{k^2}{2N} + 2N^{\nicefrac{3}{4}}}{\mu k - 2N^{\nicefrac{3}{4}}}
\end{align*}

The right-hand side represents the smallest active set the algorithm can incentivize, as it upper bounds the costs with the lower bounds of the qualities, increasing in the expression $\sum_i p_i$.

Denote $\alpha = \frac{k}{N}$. Then, a sufficient condition to satisfy Inequality~\eqref{eq:tight-feasibility} is that
\begin{align} \label{ineq alpha lowerbound}
\frac{\frac{1}{2}\alpha^2 + 2N^{-\nicefrac{1}{4}}}{\mu \alpha - 2N^{-\nicefrac{1}{4}}} \leq 1.
\end{align}

Equivalently
\begin{align} \label{ineq alpha condition}
\frac{1}{2}\alpha^2 - \mu \alpha + 4N^{-\nicefrac{1}{4}} \leq 0.
\end{align}

For $\alpha = 2\mu - \frac{A}{\mu}N^{-\nicefrac{1}{4}}$, we get that $\frac{\alpha}{2} - \mu = -\frac{A}{2\mu}N^{-\nicefrac{1}{4}}$. Plugging that into Inequality~\eqref{ineq alpha condition} results in
\begin{align*}
\frac{1}{2}\alpha^2 - \mu \alpha + 4N^{-\nicefrac{1}{4}} = \alpha \left(\frac{\alpha}{2} - \mu \right) + 4N^{-\nicefrac{1}{4}} = -\alpha \frac{A}{2\mu}N^{-\nicefrac{1}{4}} + 4N^{-\nicefrac{1}{4}}
\end{align*}

For $\alpha > \mu$, our sufficient condition becomes
\begin{align*}
\frac{1}{2}\alpha^2 - \mu \alpha + 4N^{-\nicefrac{1}{4}} \leq N^{-\nicefrac{1}{4}} \left( -\frac{A}{2} + 4 \right) \leq 0
\end{align*}
Hence, this condition is satisfied for $A = 10$.
Next, let $f(\alpha) = \frac{1}{2}\alpha^2 - \mu \alpha$. Then $\frac{df}{d\alpha} = \alpha - \mu > 0$, i.e., $f(\alpha)$ is increasing for every $\alpha > \mu$. 
Let $\alpha_0 = \min \{ 2\mu, 1 \}$ and denote 
\[
\alpha_{-} = \alpha_0 - \frac{10}{\mu} N^{-\frac{1}{4}}.
\]

Observe that $\alpha_{-} \leq \alpha = 2\mu - \frac{10}{\mu}N^{-\frac{1}{4}}$, and consider the term $\alpha_- - \mu$. For $\mu < 0.5$ it holds that $\alpha_- - \mu > 0$ for every $N > \left(\frac{10}{\mu^2}\right)^4$. On the other hand, for $\mu \in [0.5, 1)$, we get that $\alpha_- - \mu = 1 - \mu - \frac{10}{\mu}N^{-1/4} > 0$ for every $N > \left( \frac{10}{\mu(1-\mu)} \right)^4$. In other words, for every $\mu \in [0, 1)$, for sufficiently large $N$ it holds that $f(\alpha_{-}) \leq f(\alpha)$, which means that $\alpha_{-}$ satisfies Inequality~\eqref{ineq alpha condition}.

Therefore, the following is a lower bound for $k^a$:
\begin{align} \label{k lower bound}
k_- = \left\lfloor N\left(\alpha_0 -\frac{10}{\mu}N^{-1/4}\right)\right\rfloor.
\end{align}

\paragraph{Step 3. Upper bound on the size of the optimal active set}
Let $m = \ceil{\sqrt{N}}$. Then if the optimal active set size $k^\star$ satisfies that $k^\star < m$, it means that
\begin{align*}
    \Phi(S^\star) \leq k^\star \left(q^\star + k^\star q^\star \right) = O(\left(k^\star \right)^2) = O(N).
\end{align*}

On the other hand, for active set from Algorithm~\text{\stocalg}, we have a lower bound on $B_i$ and $k^a$, that is
\begin{align*}
\Phi(S^a) \geq k_- \left( \mu k_- - 2N^{\nicefrac{3}{4}} \right) = \Omega(N^2)
\end{align*}

Hence, the welfare guarantee is trivial for sufficiently large N. We therefore assume $k^\star \geq m$. To find an upper bound for $k^\star$, we return to the feasibility condition in Inequality~\eqref{eq:tight-feasibility}. The optimal active set is chosen based on the complete information of all the qualities and costs. Therefore, it may include multiple players whose qualities are substantially higher than their expectation. Hence, our goal now is to show that the optimal active set cannot include a large set of those players.

For $T \subseteq S \subseteq N$, let
\[
W(T, S) := \sum_{i \in T} \sum_{j \in S, j \neq i} r_{ij} g_{ij}.
\]
$W(T,S)$ describes the total spillovers of every player in subset $T$ from the players in $S$.

We get an upper bound for $W(T,S)$ using the following lemma.

\begin{lemma} \label{lem:uniform-row-upper-formal}
Let $m=\lceil\sqrt N\rceil$. With probability at least
\[
    1-e^{-\frac{2t \sqrt{N}}{(q^\star)^2}+N\log 3},
\]
it holds that, for every $T\subseteq S\subseteq[N]$ with $|T|\ge m$,
\[
    \sum_{i\in T}\sum_{j\in S,\,j\ne i} r_{ij}g_{ij}
    \le
    \mu |T||S|+|T|N^{3/4}.
\]
\end{lemma}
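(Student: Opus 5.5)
The plan is a Hoeffding bound for each fixed pair $(T,S)$, followed by a union bound over all pairs $T\subseteq S\subseteq[N]$. The union bound produces the $N\log 3$ term in the exponent, because there are exactly $3^N$ such pairs: each player is in $T$, in $S\setminus T$, or outside $S$.

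\emph{Fixed pair.} Fix $T\subseteq S\subseteq[N]$ with $|T|\ge m$.
\begin{itemize}
\item The sum $W(T,S)=\sum_{i\in T}\sum_{j\in S,\,j\neq i}\tilde r_{ij}\tilde g_{ij}$ runs over distinct ordered pairs $(i,j)$. Interactions are directed, so $\tilde r_{ij}$ and $\tilde r_{ji}$ are separate draws.
\item By Definition~\ref{def:random-graph}, the summands are independent, take values in $[0,q^\star]$, and each has mean $\mathbb{E}[\tilde r_{ij}]\,\mathbb{E}[\tilde g_{ij}]=r\bar q=\mu$.
\item There are $|T|(|S|-1)\le |T||S|$ summands, so $\mathbb{E}[W(T,S)]\le \mu|T||S|$.
\item Hoeffding's inequality with deviation $|T|N^{3/4}$ gives
\[
\Pr\bigl[W(T,S)>\mu|T||S|+|T|N^{3/4}\bigr]\le \exp\!\left(-\frac{2|T|^2N^{3/2}}{|T||S|(q^\star)^2}\right)=\exp\!\left(-\frac{2|T|N^{3/2}}{|S|(q^\star)^2}\right).
\]
\end{itemize}

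\emph{Bounding the exponent.} This step is where the constraint $|T|\ge m$ is needed. Use $|T|\ge m\ge\sqrt N$ and $|S|\le N$. Then
\[
\frac{|T|N^{3/2}}{|S|}\ge \frac{\sqrt N\, N^{3/2}}{N}=N.
\]
So each fixed pair fails with probability at most $e^{-2N/(q^\star)^2}$. This is at most $e^{-2t\sqrt N/(q^\star)^2}$ for any parameter $t\le\sqrt N$, which is how I read the $t$ in the statement; the strongest choice is $t=\sqrt N$.

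\emph{Union bound.} Sum over at most $3^N=e^{N\log 3}$ pairs. The probability that some pair violates the inequality is at most $e^{-2t\sqrt N/(q^\star)^2+N\log 3}$, which gives the claim.

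\emph{Main obstacle.} The only delicate point is that the deviation term $|T|N^{3/4}$ must beat the $3^N$ union bound. That requires the per-pair exponent to be linear in $N$, and it is, since $|T|/|S|\ge N^{-1/2}$ exactly when $|T|\ge\sqrt N$. This is why the lemma is restricted to $|T|\ge m$. The result is only useful when $2/(q^\star)^2>\log 3$, which holds because $q^\star\le 1$.
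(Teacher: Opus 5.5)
Your proof is correct and follows essentially the same route as the paper: a Hoeffding bound for each fixed pair $T\subseteq S$ with deviation $|T|N^{3/4}$, then a union bound over the at most $3^N$ such pairs. Where you differ, you do slightly better. You use $|T|\ge\sqrt N$ to make the per-pair failure bound uniform, namely $e^{-2N/(q^\star)^2}$, before taking the union bound; the paper instead leaves $t=|T|$ inside the union bound and only later specializes to $t=\sqrt N$ when it defines the event $\mathcal{E}_w$.
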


\Cref{lem:uniform-row-upper-formal} implies that 

\begin{lemma} \label{lem:high quality set size}
Let $T \subseteq S^\star$ such that $B_i(S^\star) > q^\star + \mu k^\star + N^{\nicefrac{3}{4}}$ for every $i \in T$. Then it holds that $|T| \leq m$.
\end{lemma}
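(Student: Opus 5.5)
We argue by contradiction using \Cref{lem:uniform-row-upper-formal}. Work on the event of that lemma and take $S = S^\star$. Suppose $T \subseteq S^\star$ satisfies $B_i(S^\star) > q^\star + \mu k^\star + N^{3/4}$ for every $i\in T$, and suppose $|T| > m$. Then $|T|\ge m$, so the hypothesis of \Cref{lem:uniform-row-upper-formal} holds for the pair $T\subseteq S^\star$. If the set of high-quality players were larger, we would simply pass to a subset of size exactly $m$; every element of that subset still satisfies the pointwise inequality, so we may assume $|T|\ge m$ directly.

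We sum the defining inequality over $i\in T$. By definition, $B_i(S^\star)=q_i+\sum_{j\in S^\star,j\ne i} r_{ij}g_{ij}$ and $q_i\le q^\star$. Summing gives
\[
\sum_{i\in T}\sum_{j\in S^\star,\,j\neq i} r_{ij}g_{ij} \;=\; \sum_{i\in T}\bigl(B_i(S^\star)-q_i\bigr) \;>\; |T|\bigl(q^\star+\mu k^\star+N^{3/4}\bigr)-|T|q^\star \;=\; \mu|T|k^\star+|T|N^{3/4}.
\]
Since $|S^\star|=k^\star$, the left-hand side is $W(T,S^\star)$. \Cref{lem:uniform-row-upper-formal} bounds it from above by $\mu|T||S^\star|+|T|N^{3/4}=\mu|T|k^\star+|T|N^{3/4}$, which contradicts the strict inequality above. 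Hence $|T|<m$, and in particular $|T|\le m$.

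The arithmetic is immediate; the main obstacle is legitimacy. The set $S^\star$ is chosen after seeing the realization, so it is data-dependent. The argument works only because \Cref{lem:uniform-row-upper-formal} is stated uniformly over all pairs $T\subseteq S\subseteq[N]$ with $|T|\ge m$. That uniformity is exactly what the union bound, visible in the factor $e^{N\log 3}$ of its probability, provides. The statement of \Cref{lem:high quality set size} should therefore be read as holding on the clean event of \Cref{lem:uniform-row-upper-formal}, and we will say so explicitly.

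One further point needs care. The large set $T$ must itself satisfy $|T|\ge m$ for the lemma to apply. If $|T|>m$, we either apply the lemma to $T$ directly or restrict to any $m$-element subset. Both work, because the required inequality is pointwise in $i$ and therefore survives passing to subsets.
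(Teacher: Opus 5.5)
Your proof is correct and follows the paper's argument exactly. You assume $|T|>m$, sum $B_i(S^\star)-q_i$ over $T$ using $q_i\le q^\star$ to get $W(T,S^\star)>\mu|T|k^\star+|T|N^{3/4}$, and this contradicts \Cref{lem:uniform-row-upper-formal}. Your explicit remark adds a point the paper leaves implicit: the conclusion holds only on the event $\mathcal{E}_w$, and it is the uniformity of that lemma over all pairs $T\subseteq S$ that licenses applying it to the data-dependent $S^\star$.
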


Therefore, going back to the feasibility condition, the largest active set is when the denominator is maximized. This happens when $T$ is maximized. Therefore, we compute a bound over the feasibility condition when $T = \sqrt{N}$. In that case, We denote $\mathcal{E}_w$ the event of \Cref{lem:uniform-row-upper-formal}. Therefore, the probability of this event satisfies that
\begin{align*}
Pr(\mathcal{E}_w) \geq 1- e^{-\left(\frac{2}{(q^\star)^2}-\log 3\right)N}
\end{align*}

Let $G^\star\subseteq S^\star$ be the set of players in $S^\star$ that satisfies 
\[
B_i(S^\star) \leq q^\star + \mu k^\star + N^{\nicefrac{3}{4}}.
\]
Then from \Cref{lem:high quality set size}, we know that there are at least $k^\star - m$ such players.
Using \Cref{lem:high quality set size} and \Cref{lem:uniform-costs-formal} in our feasibility condition gives us that
\[
    1 \ge
    \sum_{i\in S^\star}\frac{c_i}{B_i(S^\star)}
    \ge \sum_{i\in G^\star}\frac{c_i}{B_i(S^\star)}
    \ge \frac{\sum_{i\in G^\star}c_i}{q^\star+\mu k^\star+N^{3/4}}
    \ge \frac{C_{k^\star-m}}{q^\star+\mu k^\star+N^{3/4}} \geq \frac{\frac{(k^\star-m)^2}{2N} - 3N^{3/4}}{q^\star+\mu k^\star+N^{3/4}}.
\]

Equivalently,
\begin{align*}
\frac{(k^\star-m)^2}{2N} - 3N^{3/4} \leq q^\star+\mu k^\star+N^{3/4}.
\end{align*}

Dividing by $N$ results in
\begin{align*}
\frac{\left(\alpha^\star - \frac{m}{N}\right)^2}{2}  \leq q^\star N^{-1} + \mu \alpha^\star + 4 N^{-1/4}
\end{align*}

Since $\frac{m}{N} = O(N^{-\nicefrac{1}{2}})$, it holds that
\begin{align*}
(\alpha^{\star})^2 \leq 2 \mu \alpha^\star + O(N^{-1/4}).
\end{align*}

Since $\mu>0$ is fixed, there exists a constant $A_\star=A_\star(\mu,q^\star)>0$
such that, for all sufficiently large $N$,
\begin{equation}
\alpha^\star \leq 2\mu + A_\star N^{-1/4}.
\end{equation}

Observe that if $\mu > \frac{1}{2}$, we get that $\alpha_0 = 1$. Since $\alpha^\star \in [0, 1]$ we can trivially bound it by
\begin{equation} \label{eq:kstar-bound}
\alpha^\star \leq \alpha_0 + A_\star N^{-1/4}.
\end{equation}

Equivalently,
\[
    k^\star
    \le
    N\left(2\mu+A_\star N^{-1/4}\right).
\]

\paragraph{Step 4. Comparing the welfares}
First, let $S_-$ be the set corresponding to $k_-$. Then, since Algorithm~\text{\stocalg} incentivizes the players with the least costs first, it holds that $S_- \subseteq S^a$. Using \Cref{lem:gcs-prefix-formal} results in
\begin{align*}
\Phi(S^a) &\geq \Phi(S_-) \geq k_- \left( \mu k_- - 2N^{3/4} \right)
\end{align*}

Hence,
\begin{align*}
\frac{\Phi(S^a)}{N^2} &\geq \mu k_-^2 N^{-2} - 2k_- N^{-5/4}
\end{align*}
Notice that for sufficiently large $N$, there exists $A_-$ such that $k_- > N(2\mu - A_ N^{-1/4})$. Therefore, there exists $A_L$ such that
\begin{align*}
\frac{\Phi(S^a)}{N^2} &\geq \mu k_-^2 N^{-2} - 2k_- N^{-5/4} \geq \mu (\alpha_0 - A_ N^{-1/4})^2 -2 (\alpha_0 - A_ N^{-1/4}) N^{-1/4} \geq \mu \alpha_0^2  - A_L N^{-1/4}
\end{align*}

On the other hand, Using \Cref{lem:uniform-row-upper-formal}, the non-normalized welfare from the optimal active set satisfies that
\begin{align*}
\Phi(S^\star) \leq k^\star q^\star + \mu (k^\star)^2 + k^\star N^{3/4}
\end{align*}

For sufficiently large $N$, there exists $A^u$ such that
\begin{align*}
\frac{\Phi(S^\star)}{N^2} \leq \frac{k^\star}{N^2} \left( q^\star + N^{3/4} \right) + \mu \frac{(k^\star)^2}{N^2} = (q^\star + N^{3/4}) \frac{\alpha_0 + A_\star N^{-1/4}}{N} + \mu \left( \alpha_0 + A_\star N^{-1/4} \right)^2 \leq  \mu \alpha_0^2 + A_u N^{-1/4}.
\end{align*}

Therefore, the welfare is given by
\begin{align*}
\frac{SW(\bar{\bft{x}}(\bft{p}^a))}{SW(\bar{\bft{x}}(\bft{p}^\star))} \geq \frac{\Phi(S^a)}{ \Phi(S^\star)} \geq \frac{\mu \alpha_0^2 - A_L N^{-1/4}}{\mu \alpha_0^2 + A_u N^{-1/4}}
\end{align*}
Hence, for sufficiently large $N$, there exists constant $C$ such that
\begin{align*}
\frac{SW(\bar{\bft{x}}(\bft{p}^a))}{SW(\bar{\bft{x}}(\bft{p}^\star))} \geq (1-CN^{-1/4}).
\end{align*}

\paragraph{Probability of clean event}
The clean event satisfies that
\begin{align*}
Pr(\mathcal{E}) &= Pr(\mathcal{E}_c \cap \mathcal{E}_q \cap \mathcal{E}_w) = 1 - Pr(\mathcal{E}_c^c \cup \mathcal{E}_q^c \cup \mathcal{E}_w^c)
\end{align*}

From the union bound, we have that
\begin{align*}
Pr(\mathcal{E}_c^c \cup \mathcal{E}_q^c \cup \mathcal{E}_w^c) \leq Pr(\mathcal{E}_c^c) + Pr(\mathcal{E}_q^c) + Pr(\mathcal{E}_w^c).
\end{align*}

For
\begin{align*}
Pr(\mathcal{E}_c) \geq 1-2e^{-2\sqrt N}, \qquad Pr(\mathcal{E}_q) \geq  1-2Ne^{-\frac{\sqrt N}{2(q^\star)^2}}, \qquad Pr(\mathcal{E}_w) \geq 1- e^{-\left(\frac{2}{(q^\star)^2}-\log 3\right)N},
\end{align*}
it holds that
\begin{align*}
Pr(\mathcal{E}_c^c) \leq 2e^{-2\sqrt N}, \qquad Pr(\mathcal{E}_q^c) \leq  2Ne^{-\frac{\sqrt N}{2(q^\star)^2}}, \qquad Pr(\mathcal{E}_w^c) \leq e^{-\left(\frac{2}{(q^\star)^2}-\log 3\right)N}.
\end{align*}
Plugging everything into $Pr(\mathcal{E})$ results in
\begin{align*}
Pr(\mathcal{E}) \geq 1 - 2e^{-2\sqrt N} - 2Ne^{-\frac{\sqrt N}{2(q^\star)^2}} - e^{-\left(\frac{2}{(q^\star)^2}-\log 3\right)N}
\end{align*}

Furthermore, for sufficiently large N, notice that
\[
Ne^{-\frac{\sqrt{N}}{2}} < e^{-\sqrt{N}/4}.
\]

Recall that $q^\star \in (0, 1]$. Hence, for sufficiently large $N$ it holds that
\begin{align*}
Pr(\mathcal{E}) \geq 1 - 5 e^{-\sqrt{N}/4}.    
\end{align*}

\paragraph{Runtime}
Line~\ref{stocalg: sort players} runs in $O(N\log N)$. The loop in
Line~\ref{stocalg: forloop} iterates over at most $N$ values of $k$. Evaluated
directly, Line~\ref{stocalg: incentivize players} costs $O(k)$ per active
player and hence $O(N^3)$ overall. However, we can maintain the denominators
incrementally. Let $B_i^{(k)} = q_i + \sum_{j \le k,\, j \neq i} r_{ij}g_{ij}$.
Computing $B_i^{(N)}$ for all $i \in [N]$ takes $O(N^2)$ once and, when $k$
decreases, each denominator changes by a single term,
$B_i^{(k-1)} = B_i^{(k)} - r_{ik}g_{ik}$. Hence, each iteration performs three
$O(k)$ operations: it updates the $k$ denominators, recomputes the allocations
$p_i = c_i / B_i^{(k)}$, and recomputes their sum for
Line~\ref{stocalg: check implementable}. Summing over the iterations, the loop
costs $O(N^2)$. The total runtime is therefore $O(N^2)$.
This concludes the proof of \Cref{thm random alg}.

\end{proofof}

\begin{corollary} \label{cor: random welfare}
The optimal social welfare value satisfies that
\begin{align*}
\lim_{N \rightarrow \infty} \frac{SW(\bar{\bft{x}}(\bft{p}^\star))}{N} = \bar{q} r \min\{ \left(2\bar{q} r \right)^2, 1 \}.
\end{align*}
\end{corollary}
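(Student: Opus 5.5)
The plan is to read off the limit from the two-sided bounds established in Step~4 of the proof of \Cref{thm random alg}, and then turn the high-probability statement into an almost-sure limit. Write $\mu=\bar q r$ and $\alpha_0=\min\{2\mu,1\}$, and note that $\mu\alpha_0^2=\bar q r\min\{(2\bar q r)^2,1\}$, which is exactly the claimed value. Recall that $SW(\bar{\bft{x}}(\bft{p}))=\Phi(S)/N$ for the active set $S$. Hence $SW(\bar{\bft{x}}(\bft{p}^\star))/N=\Phi(S^\star)/N^2$, and it suffices to show that $\Phi(S^\star)/N^2\to\mu\alpha_0^2$.

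\emph{Upper bound.} On the clean event $\mathcal E=\mathcal E_c\cap\mathcal E_q\cap\mathcal E_w$, Step~3 gives $k^\star\le N(\alpha_0+A_\star N^{-1/4})$. Combining this with \Cref{lem:uniform-row-upper-formal} gives the bound in Step~4,
\[
\frac{\Phi(S^\star)}{N^2}\le \mu\alpha_0^2+A_uN^{-1/4}.
\]
The case $k^\star<\lceil\sqrt N\rceil$ needs separate treatment, but there $\Phi(S^\star)=O(N)$, so $\Phi(S^\star)/N^2=O(1/N)$, which is consistent with the bound.

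\emph{Lower bound.} Optimality of $\bft{p}^\star$ gives $\Phi(S^\star)\ge\Phi(S^a)\ge\Phi(S_-)$. On the same event, Step~4 shows
\[
\frac{\Phi(S^a)}{N^2}\ge\mu\alpha_0^2-A_LN^{-1/4}.
\]
This requires a consistent lower bound on $k_-$. I would use $k_-\ge N(\alpha_0-\tfrac{10}{\mu}N^{-1/4})-1$, taken from \eqref{k lower bound}, rather than the $2\mu$ form, so that the case $\mu\ge 1/2$ is covered correctly. The degenerate case $\mu=1$ is handled directly: all players are identical in quality, and the count of affordable cheapest players already gives $\alpha_0=1$.

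\emph{From high probability to a limit.} On $\mathcal E$ we then have
\[
\left|\frac{SW(\bar{\bft{x}}(\bft{p}^\star))}{N}-\mu\alpha_0^2\right|\le \max\{A_L,A_u\}\,N^{-1/4}.
\]
By the probability bound at the end of the proof of \Cref{thm random alg}, $\Pr(\mathcal E^c)\le 5e^{-\sqrt N/4}$, and this sequence is summable in $N$. Borel--Cantelli therefore shows that, almost surely, $\mathcal E$ holds for all sufficiently large $N$, which yields the limit almost surely and hence also in probability. Since $SW/N\le q^\star(1+q^\star)$ is bounded, the same conclusion holds for the expectation by dominated convergence, if the corollary is read in expectation.

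The main obstacle is the lower bound on the optimal set size when $\mu\ge1/2$, that is, when $\alpha_0=1$. There the Step~2 analysis must certify that nearly all $N$ players can be incentivized. I would verify this by checking that \eqref{ineq alpha condition} holds at $\alpha=1-\tfrac{10}{\mu}N^{-1/4}$ whenever $\tfrac12-\mu<0$, which follows because the quadratic is negative on $(0,2\mu)$. The remaining work is only to confirm that the constants $A_L$ and $A_u$ do not depend on $N$.
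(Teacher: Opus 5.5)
Your proposal is correct and follows the route the paper implicitly takes. The corollary is stated without a separate proof and is read off from the two-sided bounds $\mu\alpha_0^2 - A_L N^{-1/4} \le \Phi(S^a)/N^2 \le \Phi(S^\star)/N^2 \le \mu\alpha_0^2 + A_u N^{-1/4}$ in Step~4 of the proof of \Cref{thm random alg}, using $SW=\Phi/N$. Your additions are sound refinements within the same argument: you state the bound on $k_-$ in the $\alpha_0$ form rather than the inconsistent $2\mu$ form, you treat $\mu=1$ directly, and you make the unspecified mode of convergence precise via Borel--Cantelli with the summable failure probability $5e^{-\sqrt N/4}$.
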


\begin{proofof}{lem:uniform-costs-formal}
Let
\[
    F_N(t):=\frac{1}{N}\sum_{i=1}^N \mathbf 1\{c_i\le t\}
\]
be the empirical CDF. By the Dvoretzky-Kiefer-Wolfowitz inequality, with
probability at least $1-2e^{-2\sqrt N}$,
\begin{equation}
    \sup_{t\in[0,1]}|F_N(t)-t|\le N^{-1/4}.
    \label{eq:dkw-event}
\end{equation}
Condition on this event. Since the distribution is continuous, the order
statistics are distinct with probability one. For every $\ell\in[N]$,
$F_N(c_{(\ell)})=\ell/N$. Using \eqref{eq:dkw-event} at $t=c_{(\ell)}$ gives
\[
    \frac{\ell}{N}-N^{-1/4}
    \le
    c_{(\ell)}
    \le
    \frac{\ell}{N}+N^{-1/4}.
\]
Summing over $\ell=1,\ldots,k$ yields
\[
    \frac{k(k+1)}{2N}-kN^{-1/4}
    \le
    C_k
    \le
    \frac{k(k+1)}{2N}+kN^{-1/4}.
\]
Because $k\le N$, the difference between $k(k+1)/(2N)$ and $k^2/(2N)$ is at
most $1/2$, and $kN^{-1/4}\le N^{3/4}$. Thus, for all $k\in[N]$,
\[
    \left|C_k-\frac{k^2}{2N}\right|
    \le 2N^{3/4}
\]
for all sufficiently large $N$. This concludes the proof of \Cref{lem:uniform-costs-formal}.
\end{proofof}

\begin{proofof}{lem:gcs-prefix-formal}
For any fixed set $S$, independent of the spillover variables $r_{ij}$, $g_{ij}$, and for every fixed $i\in S_{k_-}$, the sum $\sum_{j\in S_{k_-},\,j\ne i} r_{ij}g_{ij} $
is a sum of $k-1$ independent random variables, each lying in $[0,q^\star]$
and each having expectation $\mu$. By Hoeffding's inequality,
\begin{align*}
\Pr\left[ \sum_{j\in S_{k},\,j\ne i} r_{ij}g_{ij} < \mu(k-1)-N^{3/4} \right] \le e^{-\frac{2N^{3/2}}{(k-1)(q^\star)^2}}.
\end{align*}

Observe that $k \leq N$. Therefore the right-hand side can be upper bounded by
\begin{align*}
e^{-\frac{2N^{3/2}}{(k-1)(q^\star)^2}} \leq e^{-\frac{2\sqrt{N}}{(q^\star)^2}} \leq e^{-\frac{\sqrt{N}}{2(q^\star)^2}}
\end{align*}

For all sufficiently large $N$, $k_-\le N$, and therefore the last term is at
most $\exp(-2\sqrt N/(q^\star)^2)$, which is bounded above by
$2\exp(-\sqrt N/(2(q^\star)^2))$. Also, for all sufficiently large $N$,
\begin{align*}
 \mu(k-1)-N^{3/4} \ge \mu k -2N^{3/4}.
\end{align*}
Since $q_i\ge0$, the same lower bound applies to $B_i(S)$. A union bound
over at most $N$ players gives the claim.
This concludes the proof of \Cref{lem:gcs-prefix-formal}.
\end{proofof}

\begin{proofof}{lem:uniform-row-upper-formal}
Fix $T\subseteq S\subseteq[N]$ and write $t=|T|$, $k=|S|$. The random variable
\[
    W(T,S)=\sum_{i\in T}\sum_{j\in S,\,j\ne i} r_{ij}g_{ij}
\]
is a sum of at most $tk$ independent random variables, each lying in
$[0,q^\star]$. Its expectation is at most $\mu tk$. By Hoeffding's inequality,
\begin{align*}
\Pr\left[W(T,S)>\mu tk+tN^{3/4}\right] \le e^{-\frac{2t^2N^{3/2}}{tk(q^\star)^2}} = e^{-\frac{2tN^{3/2}}{k(q^\star)^2}}.
\end{align*}
    
Since $k\le N$, we can bound this probability by
\begin{align*}
e^{-\frac{2tN^{3/2}}{k(q^\star)^2}} \leq e^{-\frac{2t\sqrt{N}}{(q^\star)^2}}.
\end{align*}

Next, we need to count the possible options to forming subsets of size $t$ from subsets of size $k$, and subset of size $k$ from a set of $N$ players. That is given by
\begin{align*}
\binom{N}{k} \binom{k}{t} \leq \binom{N}{k} \sum_{t = 0}^k \binom{k}{t} \leq \sum_{k = 0}^N \binom{N}{k} \sum_{t = 0}^k \binom{k}{t} = \sum_{k = 0}^N \binom{N}{k} 2^k = 3^N
\end{align*}
    
Therefore, A union bound gives
failure probability at most
\[
    3^N e^{-\frac{2t\sqrt{N}}{(q^\star)^2}} =
    e^{-\frac{2t \sqrt{N}}{(q^\star)^2}+N\log 3}.
\]
This concludes the proof of \Cref{lem:uniform-row-upper-formal}.
\end{proofof}

\begin{proofof}{lem:high quality set size}
Assume toward contradiction that $|T| > m$. Then it holds that
\begin{align*}
\sum_{i \in T} \sum_{j \in S^\star, j\neq i} r_{ij}g_{ij} = \sum_{i \in T} B_i(S^\star) - q_i  > |T| \left( q^\star + \mu k^\star + N^{\nicefrac{3}{4}} \right) - \sum_{i \in T} q_i \geq |T| \left( \mu k^\star + N^{\nicefrac{3}{4}} \right),
\end{align*}
which contradicts \Cref{lem:uniform-row-upper-formal}. This concludes the proof of \Cref{lem:high quality set size}.
\end{proofof}

\section{Experimental Validation} \label{sec:experiments}
In this section, we validate the performance of \text{\stocalg} through simulations on random instances drawn from the setting defined above. We use a fixed random seed to ensure reproducibility across experiments.

\paragraph{Algorithmic implementation}
The implementation uses standard Python scientific computing libraries (NumPy, Matplotlib) and is CPU-based. We consider two algorithms:
\begin{itemize}
    \item \textsc{Greedy Cost Selection} We closely follow the implementation in Algorithm~\ref{find-portions}.
    
    \item \textsc{Equal Allocation} (Baseline): This baseline assigns equal allocation shares $p_i = 1/N$ to all players. We compute the greatest pure Nash equilibrium via best-response dynamics starting from the maximal profile $\bft{x} = (1, \ldots, 1)$. Empirically, the dynamics converge monotonically to the equilibrium within a small number of iterations.
\end{itemize}

\paragraph{Simulation pipeline}
For each parameter configuration that we define, we execute the following procedure:
\begin{enumerate}
    \item Draw 1000 independent game instances with $q_i, g_{ij} \sim \text{Uni}(0, q^\star)$, costs $c_i \sim \text{Uni}(0,1)$ and $r_{ij} \sim \text{Bern}(r)$.
    \item For each instance, we run both \textsc{Greedy Cost Selection} and \textsc{Equal Allocation}.
    \item Record the social welfare achieved and the number of active players $K(\bft{x})$ for each algorithm.
    \item We present the mean values across all instances, with shaded regions indicating  error bars set at three standard deviations.
\end{enumerate}

\paragraph{Experimental setup}
We conducted three sets of experiments:
\begin{enumerate}
    \item \textbf{Varying $N$:} Fix $q^\star = 1$ and vary $N \in \{100, 200, \ldots, 1000\}$ for $r \in \{0.2, 0.5, 0.8\}$.
    \item \textbf{Varying $r$:} Fix $q^\star = 1$ and vary $r \in \{0.05, 0.10, \ldots, 0.95\}$ for $N \in \{100, 500, 1000\}$.
    \item \textbf{Varying $q^\star$:} Fix $N = 1000$, $r = 0.5$, and vary $q^\star \in \{0.05, 0.10, \ldots, 1.0\}$.
\end{enumerate}
Each plot also includes the reference value $4N(\bar{q} r)^3$ from \Cref{cor: random welfare} ($=N(q^\star r)^3/2$, since $\bar{q} = \frac{q^\star}{2}$ under the uniform distribution) and the theoretical prediction $K(\bar{\bft{x}}) \approx2\bar{q} r N (= r q^\star N)$ for the number of incentivized players.

\paragraph{Computational resources}
All experiments were executed on a standard laptop. The entire simulation required approximately 5 hours. 

\subsection{Results}

\begin{figure}[t]
    \centering
    \includegraphics[width=0.9\linewidth]{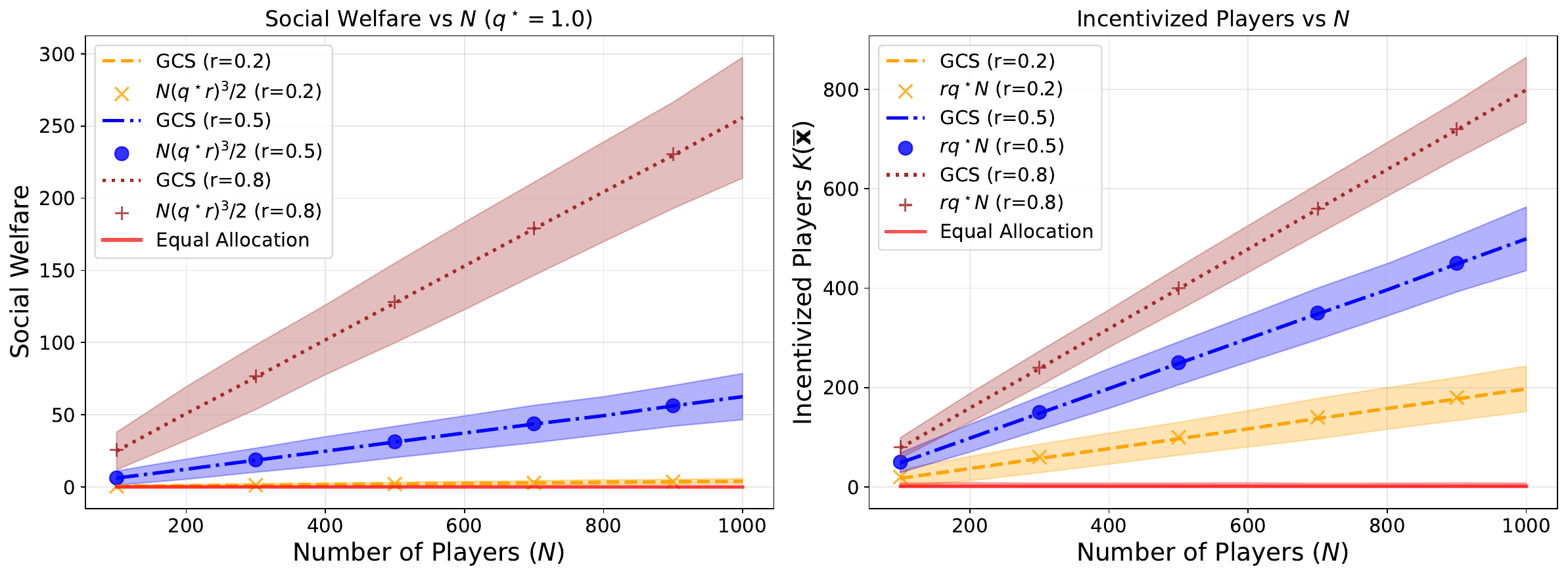}
    \caption{Results for varying $N$ with $q^\star = 1$. Left: Social welfare scales linearly in $N$. Right: Number of incentivized players grows proportionally to $N$. Markers indicate theoretical predictions. Equal Allocation shown for $r=0.8$.}
    \label{fig:exp1}
\end{figure}

\begin{figure}[t]
    \centering
\includegraphics[width=0.9\linewidth]{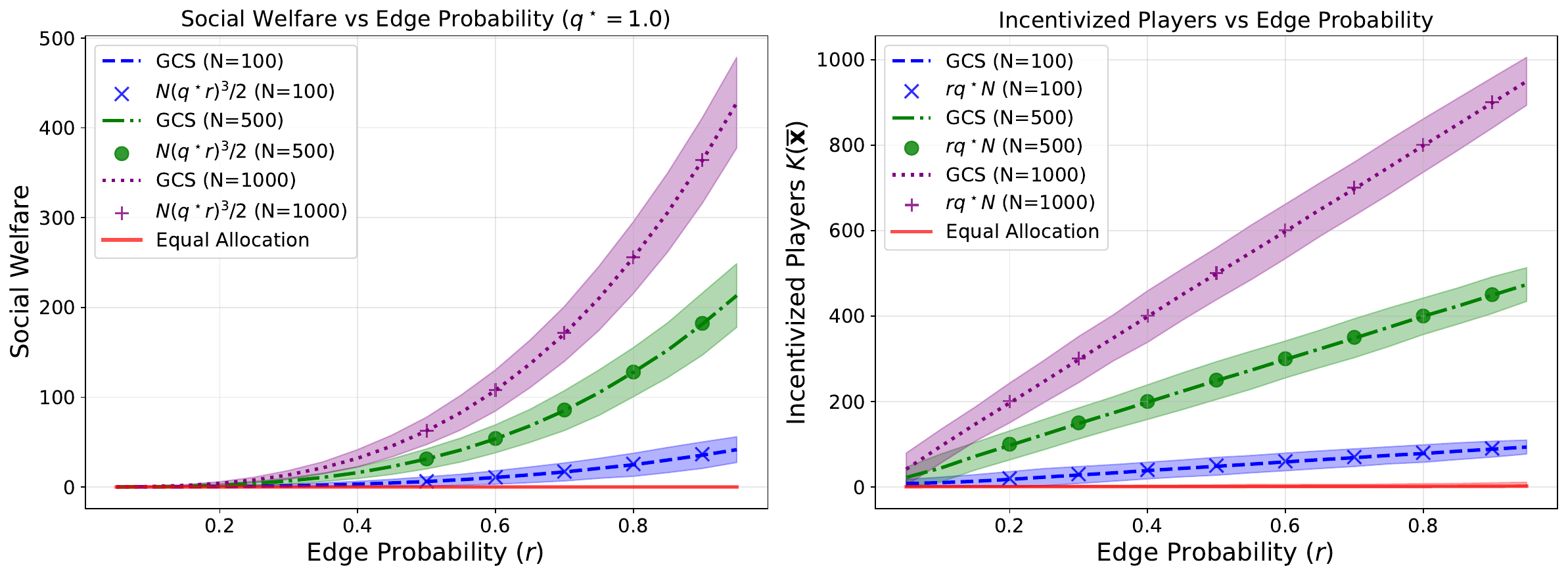}
    \caption{Results for varying $r$ with $q^\star = 1$. Left: Social welfare. Right: Number of active players. Markers indicate the theoretical predictions $N(q^\star r)^3/2$ and $rq^\star N$, respectively. Equal Allocation shown for $N=1000$.}
\end{figure}

\begin{figure}[t]
    \centering
    \includegraphics[width=0.9\linewidth]{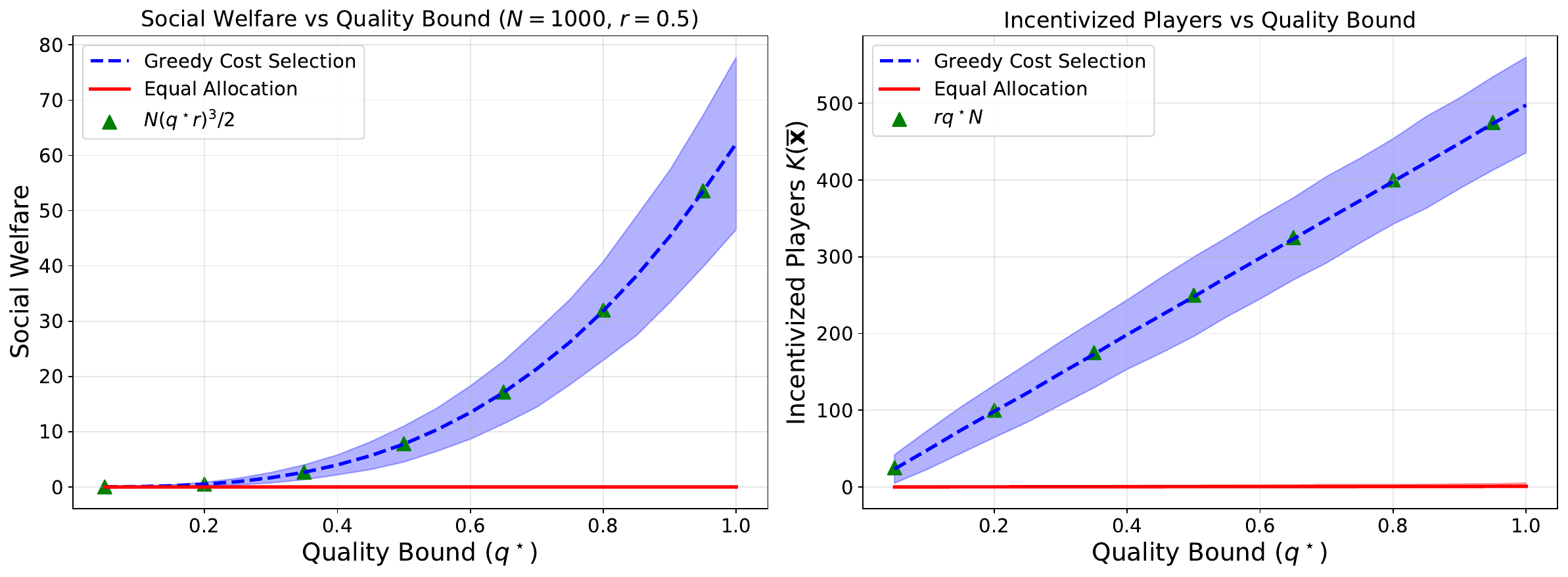}
    \caption{Results for varying $q^\star$ with $N = 1000$ and $r = 0.5$. Left: Social welfare increases with $q^\star$. Right: Larger $q^\star$ enables larger incentivized sets.}
    \label{fig:exp3}
\end{figure}

Figures~\ref{fig:exp1}--\ref{fig:exp3} present the results across all three parameter regimes. We report

\paragraph{Performance of Greedy Cost Selection}
Across all experiments, \textsc{Greedy Cost Selection} achieves average welfare that closely tracks $N(q^\star r)^3/2$, empirically validating Theorem~\ref{thm random alg} for small values of $N$. 

\paragraph{Comparison with Equal Allocation}
The \textsc{Equal Allocation} baseline achieves substantially lower welfare across all parameter configurations. The performance gap widens as $r$ and $q^\star$ increase, demonstrating the importance of cost-aware allocation: uniform fails to concentrate the allocation on players with the highest marginal welfare contribution.

\paragraph{Active player dynamics}
The right panels of Figures~\ref{fig:exp1}--\ref{fig:exp3} display the number of active players $K(\bft{x})$ selected by each algorithm. \textsc{Greedy Cost Selection} achieves incentivized sets of size approximately $K(\bft{x}) \approx r q^\star N$, matching the theoretical prediction. In contrast, \textsc{Equal Allocation} often activates significantly fewer players, as the uniform allocation cannot provide sufficient incentives for high-cost players to participate, even when their participation would be socially beneficial due to spillover effects.

}{\fi}}

\end{document}